\newcommand{\indep}{\perp \!\!\! \perp}
\newcommand{\rev}[1]{{#1}}
\newcommand{\revt}[1]{{#1}}
\newtheorem*{rep@theorem}{\rep@title}
\newcommand{\newreptheorem}[2]{%
\newenvironment{rep#1}[1]{%
 \def\rep@title{#2 \ref{##1}}%
 \begin{rep@theorem}}%
 {\end{rep@theorem}}}
\newtheorem{theorem}{Theorem}
\newtheorem{example}{Example}
\newtheorem{lemma}{Lemma}
\newtheorem{proposition}{Proposition}
\newcommand*{\addFileDependency}[1]{
  \typeout{(#1)}
  \@addtofilelist{#1}
  \IfFileExists{#1}{}{\typeout{No file #1.}}
}
\newcommand{\ttmethod}{\texttt{PCP}\xspace}
\newcommand{\ttbaseline}{\texttt{Two-Staged}\xspace}
\newcommand{\ttnaive}{\texttt{Naive CP}\xspace}
\newcommand{\ttmethodjp}{\texttt{LOO-PCP}\xspace}
\newcommand{\ttwcp}{\texttt{WCP}\xspace}
\title{Robust Conformal Prediction Using Privileged Information}
\author{%
   Shai Feldman \\
   Department of Computer Science \\
   Technion, Israel \\
   \texttt{shai.feldman@cs.technion.ac.il}
   \And
   Yaniv Romano \\
   Departments of Electrical and Computer Engineering and of Computer Science \\
   Technion, Israel \\
   \texttt{yromano@cs.technion.ac.il} \\
}
\begin{document}
\maketitle

\begin{abstract}
We develop a method to generate prediction sets with a guaranteed coverage rate that is robust to corruptions in the training data, such as missing or noisy variables. 
Our approach builds on conformal prediction, a powerful framework to construct prediction sets that are valid under the i.i.d assumption. Importantly, naively applying conformal prediction does not provide reliable predictions in this setting, due to the distribution shift induced by the corruptions. 
To account for the distribution shift, we assume access to privileged information (PI). The PI is formulated as additional features that explain the distribution shift, however, they are only available during training and absent at test time.
We approach this problem by introducing a novel generalization of weighted conformal prediction and support our method with theoretical coverage guarantees. 
Empirical experiments on both real and synthetic datasets indicate that our approach achieves a valid coverage rate and constructs more informative predictions compared to existing methods, which are not supported by theoretical guarantees.  




\end{abstract}

\section{Introduction}

\subsection{Motivation}
Uncertainty quantification plays a pivotal role in increasing the reliability of machine learning models. In this paper, we focus on situations where the training data is corrupted, e.g., due to missing variables or noisy labels. These corruptions are ubiquitous in high-stakes applications---such as diagnosing diseases, predicting financial outcomes, or personalizing treatment plans for patients---in which the data-collection process is complex, resource-intensive, or time-consuming~\cite{kahn1989statistical, lilienfeld1994foundations, piantadosi1997clinical, selvin2004statistical, fowler2013survey}. 

One way to enhance the trustworthiness of data-driven predictions is to provide an uncertainty set containing the correct outcome at a user-specified coverage rate, e.g., 90\%.
Conformal prediction (\texttt{CP})~\cite{vovk2005algorithmic} is a general framework for constructing such reliable prediction sets, however, it assumes that the training and test data samples are drawn i.i.d from the same distribution. This assumption does not hold in the problem setting we consider in this work, in which the training data is a corrupted or a biased version of the ground truth. For instance, consider a medical application in which training data have missing or incorrect labels for some patients in a non-random pattern. 
Another example is a situation where we have missing feature values in the training data but, at test-time, we observe the full set of features.
These examples illustrate common sources for a distribution shift between the training and test data, which breaks the coverage guarantee of traditional \texttt{CP} techniques.
In this work, we address this gap and propose a novel calibration technique, called \emph{privileged conformal prediction} (\ttmethod), which constructs provably valid uncertainty sets despite being employed with corrupted samples. Technically, we achieve this by utilizing privileged information---additional data available only during training time---to account for the distribution shift induced by the corruptions.

\subsection{Problem setup} 
Suppose we are given $n$ training samples $\{(X_i(M_i), Y_i(M_i), Z_i, M_i)\}_{i=1}^{n}$, where $X^\text{obs}_i = X_i(M_i)\in\mathcal{X}$ is the observed covariates, $Y^\text{obs}_i=Y_i(M_i)\in\mathcal{Y}$ is the observed response, $Z_i\in\mathcal{Z}$ is the privileged information (PI), and $M_i\in\{0,1\}$ is the corruption indicator. Specifically, if $M_i=1$ then either $X^\text{obs}_i$ or $Y^\text{obs}_i$ are corrupted, and if $M_i=0$, then $X^\text{obs}_i$ and $Y^\text{obs}_i$ correctly reflect the ground truth. \revt{In our setup, we require that the privileged information $Z_i$ explains the corruption occurences $M_i$. Formally, we assume that the clean data variables are independent of the corruption indicator given the privileged information, i.e., $(X(0), Y(0)) \indep M \mid Z$.}

At inference time, we aim to provide reliable predictions for the clean test response $Y^\text{test}=Y_{n+1}(0)$ given the clean version of the features: $X^\text{test}=X_{n+1}(0)$. That is, even though the observed $X^\text{obs}, Y^\text{obs}$ might be corrupted, the test $X^\text{test}, Y^\text{test}$ are always uncorrupted. Crucially, at test-time, we do not have access to the test privileged information $Z^\text{test}=Z_{n+1}$, nor the clean test label $Y^\text{test}$. Moreover, we assume that the PI $Z$ is always clean and correctly reflects the ground truth. We now emphasize the importance of this problem setup by providing several examples.
\begin{example}[Noisy response] Here, we refer to $Y_i(1)$ as a noisy version of the ground truth response $Y_i(0)$. For instance, $Y_i^\text{obs}=Y_i(M_i)$ could be a label obtained either by a non-expert annotator or an expert annotator, and $Z_i$ could be information about the annotator, such as their level of expertise. In this case, $M_i=0$ ($M_i=1$) indicates that the annotator chose the correct (wrong) label $Y_i^\textup{obs}=Y_{i}(0)$ ($Y_i^\textup{obs}=Y_{i}(1)$). In contrast, the features are always uncorrupted: $X_i(0)=X_i(1)$. Notice that the test $Y^\textup{test}=Y_{n+1}(0)$ always refers to the clean response. \revt{In this setup, since the PI, $Z_i$, is the information about the annotator, it is likely to explain the corruption appearances $M_i$. That is, it is sensible to believe that the assumption $ (X(0), Y(0)) \indep M \mid Z$ is approximately satisfied.}
\end{example}
\begin{example}[Missing features]
Here, $X_i(0)$ is the full clean feature vector, and $X_i(1) $ is a partial version of it, i.e., $X_{i,j}(1)=\text{`\texttt{NA}'}$ for some entries $j$. For example, consider an application where participants are requested to fill a user experience (UX) questionnaire, in which the goal is to predict user engagement. 
This trial consists of expert participants, who tend to fully answer the questionnaire, resulting in a full $X_i(0)$, and non-experts, who tend to partially answer it, resulting in the incomplete $X_i(1)$. The PI $Z_i$ could be the level of expertise of the participant.
Also, the response is always uncorrupted: $Y_i(0)=Y_i(1)$. We remark that at test time, the full feature vector $X^\textup{test}=X_{n+1}(0)$ is completely available. \revt{Since the PI $Z_i$ is the information about the participant, it is likely to explain the missing indices $M_i$. Therefore, for this choice of PI, we have a good reason to believe that the conditional independence requirement, $ X(0), Y(0) \indep M \mid Z$, is approximately satisfied.}
\end{example}
\begin{example}[Missing response]
\revt{Consider a medical setup where patients are being selected for a costly diagnosis, such as an MRI scan. Here, $X_i(0)=X_i(1)$ is the more standard medical measurements of the $i$-th patient, such as age, gender, medical history, and disease-specific measurements. The PI $Z_i$ is the information manually collected by the doctor to choose whether the patient should be examined by an MRI scan. This information is obtained through, e.g., a discussion of the doctor with the patient, or a physical examination, and could include, for instance, shortness of breath, swelling, blurred vision, etc. The response $Y_i(0)$ is the disease diagnosis obtained by the MRI scan, and $Y_i(1)=`\texttt{NA}'$. The missingness indicator $M_i$ equals 0 if the doctor decides to conduct an MRI scan, and 1 otherwise. At test time, our goal is to assist the doctors in future decisions before examining the patients, and hence the test PI $Z_\text{test}$ is unavailable. This task is relevant in situations where the number of available doctors is insufficient to examine all patients. Here, $Z_i$ explains the missingness $M_i$, and $M_i$ does not depend on $X_i$ or $Y_i$ given $Z_i$. 
}
\end{example}
With the above use cases in mind, 
our goal is to construct an uncertainty set $C(X^\textup{test})\subseteq \mathcal{Y}$ for the unknown clean test variable $Y^\textup{test}=Y_{n+1}(0)$ given the clean features $X^\textup{test}=X_{n+1}(0)$. Importantly, this uncertainty set should be statistically valid and satisfy the following coverage requirement:
\begin{equation}\label{eq:marginal_coverage}
\mathbb{P}(Y^\textup{test} \in C(X^\textup{test})) \geq 1-\alpha,
\end{equation}
where $1-\alpha \in (0,1)$ is a pre-specified coverage rate, e.g., 90\%.
This property is called \emph{marginal coverage}, as the probability is taken over all samples $\{(X_i(0),X_i(1),  Y_i(0), Y_i(1), Z_i,M_i)\}_{i=1}^{n+1}$, which are assumed to be drawn exchangeably (e.g., i.i.d.) from $P_{X(0), X(1), Y(0), Y(1), Z, M}$.
The challenge in achieving~\eqref{eq:marginal_coverage} 
lies in the fact that there is
a distribution shift between the training data $\{(X_i(M_i), Y_i(M_i))\}_{i=1}^{n}$ and the test data $(X_{n+1}(0), Y_{n+1}(0))$. Indeed, naively calibrating the model with the corrupted data may produce invalid uncertainty estimates~\cite{barber2022conformal}. Also, calibrating using only the clean data would result in biased predictions as the clean training samples are drawn from $P_{X(0), Y(0) \mid M=0}$, while the test samples are drawn from $P_{X(0), Y(0)}$. 

To bypass this bias, we assume that the privileged information explains away the corruption appearances. Formally, we require that the corruption indicator is independent of the clean data conditional on the value of the privileged information, $(X(0), Y(0)) \indep M \mid Z$. This assumption implies that our setting is a special case of covariate shift, with the covariates being the privileged information $Z$. Since the test PI $Z^\text{test}=Z_{n+1}$ is unknown at test time, conformal methods that account for covariate shift, such as \emph{weighted conformal prediction} (\texttt{WCP})~\cite{tibshirani2019conformal} cannot be applied directly in the setup.
In this paper, we re-formulate weighted conformal prediction and show how to construct uncertainty sets that satisfy the coverage requirement in~\eqref{eq:marginal_coverage} although $Z^\text{test}$ is unavailable.

\subsection{Our contribution}\label{sec:our_contribution}

We introduce \emph{privileged conformal prediction} (\ttmethod)---a novel calibration scheme that effectively handles corrupted data, and constructs provably valid uncertainty sets in the sense of~\eqref{eq:marginal_coverage}. Our key assumption is that the corruption indicator does not depend on the observed clean data given the privileged information, namely, $({Y}(0),{X}(0)) \indep M \mid Z$. This assumption implies that the privileged information explains away the corruption appearances. 
Building on \texttt{WCP}, we offer a specialized calibration scheme that carefully utilizes only the observed training privileged data $\{Z_i\}_{i=1}^n$ to attain a valid predictive inference at test time.
To enhance statistical efficiency, 
we further adapt \ttmethod for scarce data, building on leave-one-out arguments~\cite{barber2019jackknife, vovk2015cross}. 
Importantly, all methods we offer are supported by a theoretical valid coverage rate guarantee. Numerical experiments on both synthetic and real data show that naive conformal prediction techniques do not provide reliable uncertainty estimates, in contrast with the proposed \ttmethod. To the best of our knowledge, this work is the first to propose a calibration scheme that generates statistically valid prediction sets, assuming that the privileged information explains away the corruption appearances. 
\rev{Software implementing the proposed method and reproducing our experiments is available at \url{https://github.com/Shai128/pcp}.}

\section{Background and related work}

\subsection{Conformal prediction}\label{sec:cp}


Conformal Prediction (\texttt{CP})~\cite{vovk2005algorithmic} is a powerful framework for constructing prediction sets that hold a marginal coverage rate guarantee, in the sense of~\eqref{eq:marginal_coverage}. The general recipe to construct such prediction sets is as follows. First, split the data into a proper training set, indexed by $\mathcal{I}_1$, and a calibration set, indexed by $\mathcal{I}_2$. Then, fit a given learning model $\hat{f}$, e.g., a random forest or a neural network, on the training data. Next, evaluate the holdout prediction error of $\hat{f}$ by applying a non-conformity score function $\mathcal{S}(\cdot)\in\mathbb{R}$ to the calibration samples:
$S_i = \mathcal{S}(X_i,Y_i;\hat{f}), \forall i\in\mathcal{I}_2.$
Popular score functions include the absolute residual $\mathcal{S}(x,y;\hat{f})=|\hat{f}(x)-y|$ in regression cases, where $\hat{f}$ is a mean estimator, or $1-\hat{f}(x)_y$ in classification settings, where $\hat{f}(x)_y$ is the estimated probability of the $y$ label given $X=x$. The latter is known as the homogeneous prediction sets (HPS) score~\cite{vovk2005algorithmic}. Other score functions include the CQR score~\cite{romano2019conformalized} for regression tasks and the APS score~\cite{romano2020classification} for classification tasks.
Armed with the non-conformity scores, the conformal procedure proceeds by
computing the $(1+{1}/{|\mathcal{I}_2|})(1-\alpha)$-th empirical quantile of the calibration scores:
\begin{equation}\label{eq:cp_Q}
Q^\texttt{CP} = \left(1+ 1/{|\mathcal{I}_2|}\right)(1-\alpha)\text{-th empirical quantile of the scores } \{S_i\}_{i\in\mathcal{I}_2},
\end{equation}
where $1-\alpha$ is a user-specified coverage level. 
Lastly, the prediction set for the test point is given by
\begin{equation}
C^\texttt{CP}(X^\text{test}) = \{y: \mathcal{S}(X^\text{test},y;\hat{f}) \leq Q^\texttt{CP}\}.
\end{equation}
The above procedure is guaranteed to generate predictive sets with a valid marginal coverage~\eqref{eq:marginal_coverage} under the assumption that the calibration and test samples are exchangeable. We now turn to describe \emph{weighted conformal prediction} (\ttwcp) which is designed to handle exchangeability violations that arise from covariate shifts.

\subsection{Weighted conformal prediction}
Weighted Conformal Prediction (\texttt{WCP}) \cite{tibshirani2019conformal} extends the conformal prediction framework to handle covariate shifts. 
The key idea behind \texttt{WCP} is to weight the distribution of the calibration scores when taking their quantile in~\eqref{eq:cp_Q}, so that the weighted scores `look exchangeable' with the test non-conformity score. For the interest of space, we will not present the general form of \ttwcp, and instead focus on the setup presented in this work, in which $(X(0), Y(0)) \indep M \mid Z$. Under this assumption, the corruption indicator induces a covariate shift between the observed clean calibration samples and the test sample, which is explained by $Z$. That is, the clean calibration samples are drawn from $P_{X(0),Y(0)\mid M=0}$, while the test sample is drawn from $P_{X(0),Y(0)}$. Nevertheless, their distributions are equal conditionally on $Z$:
$P_{X(0),Y(0) \mid Z=z, M=0}=P_{X(0),Y(0) \mid Z=z}$.
With this in place, we follow the recipe of \ttwcp and construct a prediction set as follows. First, we compute the ratio of likelihoods between the test and train data: 
\begin{equation}\label{eq:pcp_weight}
w(z) 
=\frac{d P^\text{test}_{Z}(z)}{d P^\text{train}_{Z}(z)}=\frac{f^{\text{test}}_{Z}(z)}{f^{\text{test}}_{Z\mid  M=0}(z)}
=\frac{f^{\text{test}}_{Z}(z)}{f^{\text{test}}_{Z}(z) \frac{\mathbb{P}(M=0 \mid Z=z)}{\mathbb{P}(M=0)}}
= \frac{\mathbb{P}(M=0)}{\mathbb{P}(M=0 \mid Z=z)}.
\end{equation}
We define the set of uncorrupted calibration indexes as: $\mathcal{I}^{\text{uc}}_2 = \{j: \in \mathcal{I}_2, M_j=0\}$.
The normalized weights are formulated as:
\begin{equation}\label{eq:wcp_p}
p_i(Z^\text{test}) = \frac{w(Z_i)} {\sum_{k\in\mathcal{I}^\text{uc}_2} w(Z_k) + {w}(Z^\text{test})}, \ \ p_\text{test}(Z^\text{test}) =\frac{w(Z^\text{test})} {\sum_{k\in\mathcal{I}^\text{uc}_2} w(Z_k) + {w}(Z^\text{test})}
\end{equation}
Then, the calibration threshold for the test point is defined as the $1-\alpha$ empirical quantile of the weighted distribution of the scores:
\begin{equation}\label{eq:wcp_q}
Q^\texttt{WCP}(Z^\text{test}) := \text{Quantile}\left(1-\alpha; \sum_{i\in\mathcal{I}^\text{uc}_2} p_i(Z^\text{test}) \delta_{S_i} + p_\text{test}(Z^\text{test}) \delta_{\infty}\right),
\end{equation}
and, the prediction set for the test sample is defined similarly to \texttt{CP}:
\begin{equation}
C^\texttt{WCP}(X^\text{test}, Z^\text{test}) = \{y: \mathcal{S}(X^\text{test},y;\hat{f}) \leq Q^\texttt{WCP}(Z^\text{test})\}.
\end{equation}
Remarkably, \texttt{WCP} produces uncertainty sets that achieve the desired marginal coverage rate~\eqref{eq:marginal_coverage} despite the induced covariate shift. Nonetheless, to implement this method, we must have access to $Z^\text{test}$, which is required to obtain $w(Z^\text{test})$. In our problem setup, however, we assume that $Z^\text{test}$ is unavailable, and thereby \texttt{WCP} cannot be directly applied. This highlights the key challenge we aim to tackle in this paper, but before describing our method we first outline additional related work.

\subsection{Additional related work}

The concept of learning from privileged information was introduced by~\cite{vapnik2009new}, which proposes techniques to leverage additional knowledge available during training to improve the prediction accuracy and accelerate algorithm convergence rate. This idea has been further explored to train models that are more robust to distribution shifts in the context of domain adaptation~\cite{motiian2019domain,breitholtz2023towards, hoffman2016adaptive}. The method proposed in~\cite{xiao2023privileged} utilizes PI to handle datasets containing weak labels and to obtain more accurate predictions. Furthermore,~\cite{lopez2015unifying} combined model distillation with privileged information as a way to enhance learning from multiple models and data representations.
The integration of PI with traditional conformal prediction to generate more informative uncertainty estimates was explored in~\cite{yang2013learning, gauraha2018conformal}. This line of work stands in striking contrast with our proposal, as we present a novel robust conformal calibration procedure based on PI.  
More broadly, there have been developed conformal methods that advance beyond the exchangeability assumption, such as \ttwcp, among other contributions~\cite{barber2022conformal, prinster2022jaws, cauchois2022weak, cauchois2024robust, zaffran2023conformal, sesia2023adaptive, lee2024simultaneous}.
However, none of these works utilize PI to ensure the validity of the constructed prediction sets.

\section{Proposed method}
In this section, we present our main contribution, the \emph{privileged conformal prediction} (\ttmethod) method. Since the setup we study in this paper has not been explored in the literature of conformal prediction, we start by suggesting a naive approach to achieve~\eqref{eq:marginal_coverage}. Beyond serving as a baseline method for our \ttmethod, this naive approach also reveals the challenges involved in constructing valid prediction sets when the calibration data is corrupted. 
\subsection{A naive approach: Two-Staged Conformal}\label{sec:baseline_two_staged}

Recall that \ttwcp cannot be directly applied in our setup, since $Z^\text{test}$ is unknown. To overcome this, the naive approach presented below consists of two stages: (i) estimate the unknown $Z^\text{test}$ from the feature vector $X^\text{test}$, and (ii) employ \texttt{WCP} with the estimated privileged information. 

While this approach is intuitive, the estimation of $Z^\text{test}$ must be done in care: if the prediction of $Z$ is incorrect, then \ttwcp would not provide us the desired coverage guarantee. As a way out, instead of providing a point estimate, we will construct an interval $C^Z(X^\text{test})$ for $Z^\text{test}$ given $X^\text{test}$ using conformal prediction. This interval is guaranteed to contain the true PI $Z^\text{test}$ with probability $1-\beta$, where $\beta$ is a miscoverage rate of our choice, e.g., $\beta=0.01$.
Since we do not know which $z \in C^Z(X^\text{test})$ is the correct $Z^\text{test}$, we sweep over all possible elements $z\in C^Z(X^\text{test})$, compute their weights, $w(z)$, and take the largest weight:
\begin{equation}\label{eq:w_conservative}
    w^\text{conservative}(X^\text{test}) := \max_{z\in C^Z(X^\text{test})} w(z).
\end{equation}
The intuition behind taking the largest weight lies in Lemma~\ref{lem:non_decreasing_quantile}, which states that the larger the test weight is, the larger the threshold $Q^\ttwcp$ produced by \ttwcp, which, in turn, increases the size of the prediction set.
Armed with $w^\text{conservative}(X^\text{test})$, we can run \ttwcp with a nominal coverage level $1-\alpha+\beta$ using the clean calibration samples and their weights $\{(X_i^\text{obs}, Y_i^\text{obs}, w(Z_i))\}_{i\in \mathcal{I}^\text{uc}}$, and the conservative test weight, $w^\text{conservative}(X^\text{test})$. 
We denote the weighted score quantile provided by \ttwcp in~\eqref{eq:wcp_q} with this conservative test weight by $Q^\texttt{WCP}_\text{conservative}$.
The prediction set is therefore defined as:
\begin{equation}\label{eq:c_baseline}
    C^\ttbaseline(X^\text{test}) := \left\{y: \mathcal{S}(X^\text{test},y;\hat{f}) \leq Q^\texttt{WCP}_\text{conservative}\right\}.
\end{equation}
An outline of this procedure is given in Algorithm~\ref{alg:baseline} in Appendix~\ref{sec:alg_baseline}.
The proposition below states that the uncertainty set generated by this naive approach is guaranteed to contain the test label $Y^\text{test}$, despite the presence of corrupted labels in the calibration set. 
\begin{proposition}\label{prop:2staged_validity}
Suppose that $\{({X}_i(0),{X}_i(1), {Y}_i(0),{Y}_i(1),Z_i,  M_i)\}_{i=1}^{n+1}$ are exchangeable, the observed covariates are clean, i.e., $\forall i: X^\textup{obs}_i=X_i(0)={X}_i(1)$, the covariate shift assumption holds, i.e., $({X}(0), {Y}(0)) \indep M \mid Z$, and $P_{Z}$ is absolutely continuous with respect to $P_{Z \mid M=0}$. Then, the prediction set $C^\ttbaseline(X^\textup{test})$ from~\eqref{eq:c_baseline} achieves a valid coverage rate:
\begin{equation}
\mathbb{P}(Y^\textup{test} \in C^\ttbaseline(X^\textup{test})) \geq 1-\alpha.
\end{equation}
\end{proposition}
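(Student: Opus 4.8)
The plan is to run a union-bound argument over two events: the event that the conformal $Z$-interval captures the true privileged information, $A := \{Z^{\textup{test}} \in C^Z(X^{\textup{test}})\}$, and the coverage event of an \emph{oracle} version of \ttwcp that is granted the true (but in reality unobserved) test weight $w(Z^{\textup{test}})$. I would denote by $Q^{\ttwcp}(Z^{\textup{test}})$ the weighted quantile from~\eqref{eq:wcp_q} computed with this true weight, by $C^{\ttwcp}(X^{\textup{test}}, Z^{\textup{test}})$ the associated (unobservable) prediction set, and set $E := \{Y^{\textup{test}} \in C^{\ttwcp}(X^{\textup{test}}, Z^{\textup{test}})\}$. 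The key bookkeeping is the budget split: the oracle \ttwcp is run at level $1-\alpha+\beta$ while the $Z$-interval is guaranteed at level $1-\beta$, so that the two copies of $\beta$ cancel in the final bound.

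I would first record the two validity ingredients. For the $Z$-interval, the proposition assumes clean covariates, $X_i^{\textup{obs}} = X_i(0)$ for all $i$, and $Z$ is always clean, so the pairs $\{(X_i(0), Z_i)\}_{i=1}^{n+1}$ are exchangeable; standard split conformal prediction applied to these pairs then yields $\mathbb{P}(A) \geq 1-\beta$, i.e.\ $\mathbb{P}(A^c) \leq \beta$. For the oracle step, the assumption $(X(0),Y(0)) \indep M \mid Z$ makes the clean calibration law $P_{X(0),Y(0)\mid M=0}$ and the test law $P_{X(0),Y(0)}$ a covariate-shift pair whose likelihood ratio in $Z$ is exactly the $w$ of~\eqref{eq:pcp_weight} (this is where absolute continuity of $P_Z$ w.r.t.\ $P_{Z\mid M=0}$ is used). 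Invoking the validity of \ttwcp at level $1-\alpha+\beta$ with these correctly specified weights gives $\mathbb{P}(E) \geq 1-\alpha+\beta$.

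The heart of the argument is the set-containment on the good event. On $A$ we have $Z^{\textup{test}} \in C^Z(X^{\textup{test}})$, so by the definition of the conservative weight in~\eqref{eq:w_conservative}, $w^{\textup{conservative}}(X^{\textup{test}}) = \max_{z \in C^Z(X^{\textup{test}})} w(z) \geq w(Z^{\textup{test}})$. By Lemma~\ref{lem:non_decreasing_quantile}, the weighted quantile $Q^{\ttwcp}$ is non-decreasing in the test weight, hence $Q^{\ttwcp}_{\textup{conservative}} \geq Q^{\ttwcp}(Z^{\textup{test}})$ on $A$; since both sets threshold the same score $\mathcal{S}(X^{\textup{test}}, y; \hat{f})$, a larger threshold yields a larger set, so $C^{\ttwcp}(X^{\textup{test}}, Z^{\textup{test}}) \subseteq C^{\ttbaseline}(X^{\textup{test}})$ and therefore $E \cap A \subseteq \{Y^{\textup{test}} \in C^{\ttbaseline}(X^{\textup{test}})\}$ on $A$. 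Combining everything,
\begin{equation}
\mathbb{P}(Y^{\textup{test}} \in C^{\ttbaseline}(X^{\textup{test}})) \geq \mathbb{P}(E \cap A) \geq \mathbb{P}(E) - \mathbb{P}(A^c) \geq (1-\alpha+\beta) - \beta = 1-\alpha,
\end{equation}
which is the claim; note the union bound requires no independence between $E$ and $A$, so any data reuse between the two conformal stages is harmless to validity.

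I expect the main obstacle to be the monotonicity/containment step rather than the probabilistic bookkeeping. The whole construction replaces the unknown true weight by a worst-case upper envelope over $C^Z$, and this is sound \emph{only} because enlarging the test weight can only enlarge the prediction set (Lemma~\ref{lem:non_decreasing_quantile}). A naive point estimate of $Z^{\textup{test}}$ would fail precisely here, since an under-estimated weight could shrink $Q^{\ttwcp}$ and break coverage; the delicate point is verifying that the oracle set we never actually compute is nonetheless uniformly dominated, on $A$, by the set we do compute.
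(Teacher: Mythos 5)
Your proposal is correct and follows essentially the same route as the paper's proof: a split-conformal guarantee for the $Z$-interval at level $1-\beta$, the weighted-conformal (Tibshirani et al.) guarantee with the true test weight at level $1-\alpha+\beta$, the monotonicity of the weighted quantile in the test weight (Lemma~\ref{lem:non_decreasing_quantile}) to get the containment on the good event, and a union bound so the two $\beta$'s cancel. The only difference is cosmetic — you phrase the final step via named events $E \cap A$ while the paper writes the same intersection bound inline — so there is nothing substantive to add.
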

The proof is given in Appendix~\ref{sec:2staged_proof}.
While this two-staged approach constructs valid prediction sets, it has several limitations. First, it requires predicting not only $Y^\text{test}$ but also $Z^\text{test}$, and the prediction of the latter is anticipated to increase the uncertainty encapsulated in the resulting prediction set for $Y^\text{test}$. This algorithm also requires iterating over all $z \in C^Z(X^\text{test})$, which can be computationally expensive, especially when $Z$ is continuous or multi-dimensional. In addition, and perhaps more importantly, the prediction set $C^Z(X^\text{test})$ for $Z^\text{test}$ might contain unlikely, or off-support values of $Z$. This can lead to an extreme $w^\text{conservative}(X^\text{test})$, which, in turn, results in unnecessarily large prediction sets for $Y^\text{test}$.
Moreover, this naive method assumes that the calibration features $X_i^\text{obs}$ reflect the ground truth, i.e., $X_i^\text{obs}=X_i(0)$, and thus the coverage guarantee does not hold in situations where the features are missing or noisy. This discussion emphasizes the challenges in designing a calibration scheme that not only provides robust coverage guarantees but is also computationally and statistically efficient. In the next section, we present our main proposal which fully resolves all limitations of this naive approach.


\subsection{Our main proposal: Privileged Conformal Prediction}
In this section, we introduce our procedure to construct prediction sets with a valid coverage rate under the setting of corrupted samples.
We begin similarly to \texttt{CP}, as described in Section~\ref{sec:cp}, and split the data into a training set, $\mathcal{I}_1$, and a calibration set, $\mathcal{I}_2$. Next, we fit a predictive model $\hat{f}$ on the training data, and compute a non-conformity score for each calibration sample:
\begin{equation}
S_i = \mathcal{S}({X}^\text{obs}_i,{Y}^\text{obs}_i;\hat{f}), \forall i\in\mathcal{I}_2.
\end{equation}
Similarly to \ttwcp and \ttbaseline methods, we rely on the likelihood ratio of the training and test distributions, and compute the weight of the $i$-th sample:
$w_i := \frac{\mathbb{P}(M=0)}{\mathbb{P}(M=0 \mid Z=Z_i)}$.
The problem in \ttwcp is that the scores threshold $Q^\ttwcp(Z^\text{test})$ from~\eqref{eq:wcp_q} depends on $Z^\text{test}$. Here, we follow the intuition behind the two-staged baseline and propose an algorithm that provides a fixed threshold $Q^\ttmethod$ that is not a function of $Z^\text{test}$. This threshold can be thought of as a conservative estimate, or an upper bound of $Q^\ttwcp(Z^\text{test})$, which is based on the calibration data, and does not require $Z^\text{test}$.
To achieve this, we consider every calibration point ${i\in \mathcal{I}_2}$ as a test point, and run \texttt{WCP} as a subroutine to obtain the $i$-th score threshold $Q_i$. The final \ttmethod test score threshold, $Q^\ttmethod$, is defined as the $(1-\beta)$-th empirical quantile of the calibration thresholds $\{ Q_i\}_{i\in\mathcal{I}_2}$, where $\beta \revt{\in (0,\alpha)}$ is a level of our choice, e.g., $\beta=0.01$. 

Formally, we consider the $i$-th sample as a test point and compute the normalized weight of the $j$-th sample:
\begin{equation}
p^i_j = \frac{w_j} {\sum_{k\in\mathcal{I}_2^{\text{uc}}} w_k + {w}_{i}}, \ \forall i,j \in \mathcal{I}_2.
\end{equation}
Notice that $p^i_j$ extends the \ttwcp weights, $p_j$, from~\eqref{eq:wcp_p}, since $p_j = p_j^{n+1}$. Now, we compute the $i$-th threshold $Q_i$ by applying \ttwcp using the uncorrupted calibration data:
\begin{equation}\label{eq:Q_i}
Q_i := \text{Quantile}\left(1-\alpha + \beta; \sum_{j\in\mathcal{I}_2^{\text{uc}}} p^i_j \delta_{S_j} + p^i_{i} \delta_{\infty}\right),
\end{equation}
Next, we extract from $\{Q_i\}_{i \in \mathcal{I}_2}$ a conservative estimate of $Q^\ttwcp(Z^\text{test})=Q_{n+1}$, denoted by $Q^\ttmethod$:
\begin{equation}\label{eq:w2_Q}
Q^\ttmethod := \text{Quantile}\left(1-\beta; \sum_{i\in \mathcal{I}_2} \frac{1}{|\mathcal{I}_2|+1} \delta_{Q_i} + \frac{1}{|\mathcal{I}_2|+1} \delta_{\infty}\right).
\end{equation}
Finally, for a new input data $X^\text{test}$, we construct the prediction set for $Y^\text{test}$ as follows:
\begin{equation}\label{eq:weighted2_C}
C^\ttmethod(X^\text{test}) = \left\{y : \mathcal{S}(X^\text{test}, y, \hat{f}) \leq Q^\ttmethod \right\}.
\end{equation}
For convenience, Algorithm~\ref{alg:weighted2} summarizes the above procedure and Algorithm~\ref{alg:efficient_pcp} details a more efficient version of this procedure. We now show that the prediction sets constructed by~{\ttmethod} achieve a valid marginal coverage rate. 
\begin{theorem}\label{thm:validity}
Suppose that $\{({X}_i(0),{X}_i(1), {Y}_i(0),{Y}_i(1),Z_i,  M_i)\}_{i=1}^{n+1}$ are exchangeable, $(X(0),Y(0)) \indep M \mid Z$, and $P_{Z}$ is absolutely continuous with respect to $P_{Z \mid M=0}$. Then, the prediction set $C^\ttmethod(X^\textup{test})$ constructed according to Algorithm~\ref{alg:weighted2} achieves the desired coverage rate:
\begin{equation}
\mathbb{P}(Y^\textup{test} \in C^\ttmethod(X^\textup{test})) \geq 1-\alpha.
\end{equation}
\end{theorem}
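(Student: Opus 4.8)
The plan is to split the target miscoverage budget $\alpha$ into an ``inner'' part $\alpha-\beta$ and an ``outer'' part $\beta$, control each by a separate application of the conformal machinery, and glue them with a union bound. Write $S_{n+1}=\mathcal{S}(X^\textup{test},Y^\textup{test};\hat f)$ for the (unobserved) test score and set $Q_{n+1}:=Q^\ttwcp(Z^\textup{test})$, the \ttwcp threshold one would obtain with the true test privileged information; this is the $i=n+1$ instance of the construction in~\eqref{eq:Q_i} and is only a proof device, since it depends on the unavailable $Z^\textup{test}$. The event $\{Y^\textup{test}\notin C^\ttmethod(X^\textup{test})\}=\{S_{n+1}>Q^\ttmethod\}$ is contained in $\{S_{n+1}>Q_{n+1}\}\cup\{Q_{n+1}>Q^\ttmethod\}$, because $S_{n+1}>Q^\ttmethod$ together with $Q_{n+1}\le Q^\ttmethod$ forces $S_{n+1}>Q_{n+1}$. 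Hence it suffices to show $\mathbb{P}(S_{n+1}>Q_{n+1})\le\alpha-\beta$ and $\mathbb{P}(Q_{n+1}>Q^\ttmethod)\le\beta$ separately, and the requirement $\beta\in(0,\alpha)$ guarantees the inner level $1-\alpha+\beta$ is a genuine quantile level.

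For the inner bound I would invoke the \ttwcp coverage guarantee at nominal level $1-\alpha+\beta$. The assumption $(X(0),Y(0))\indep M\mid Z$ renders the observed clean calibration scores $\{S_j\}_{j\in\mathcal{I}_2^{\text{uc}}}$, drawn from $P_{X(0),Y(0)\mid M=0}$, weighted exchangeable with the test score $S_{n+1}$, drawn from $P_{X(0),Y(0)}$, with likelihood ratio $w(z)=\mathbb{P}(M=0)/\mathbb{P}(M=0\mid Z=z)$ exactly as derived in~\eqref{eq:pcp_weight}. Feeding these weighted scores and the true test weight $w(Z^\textup{test})$ into \ttwcp at level $1-\alpha+\beta$ then yields $\mathbb{P}(S_{n+1}\le Q_{n+1})\ge 1-\alpha+\beta$, i.e.\ inner miscoverage at most $\alpha-\beta$, directly from the \ttwcp theorem.

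For the outer bound I would treat the induced thresholds $\{Q_i\}_{i\in\mathcal{I}_2}$ together with $Q_{n+1}$ as one collection and argue that $Q^\ttmethod$, defined in~\eqref{eq:w2_Q} as the $(1-\beta)$-empirical quantile of $\{Q_i\}_{i\in\mathcal{I}_2}$ inflated by an atom at $+\infty$, is a valid conformal threshold for $Q_{n+1}$. Concretely, if the family $\{Q_i\}_{i\in\mathcal{I}_2\cup\{n+1\}}$ is exchangeable, the rank of $Q_{n+1}$ within it is sub-uniform and the standard inflated-quantile lemma gives $\mathbb{P}(Q_{n+1}>Q^\ttmethod)\le\beta$. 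Combining with the inner bound through the inclusion above yields $\mathbb{P}(S_{n+1}>Q^\ttmethod)\le(\alpha-\beta)+\beta=\alpha$, which is the claim; the monotonicity of the weighted quantile in the test weight (Lemma~\ref{lem:non_decreasing_quantile}) is what makes the ``conservative'' reading of $Q^\ttmethod$ as an upper estimate of $Q_{n+1}$ consistent with this argument.

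The hard part will be rigorously establishing the exchangeability of $\{Q_i\}_{i\in\mathcal{I}_2\cup\{n+1\}}$ in the outer step. Unlike a textbook nonconformity score, each $Q_i$ is itself the output of a \ttwcp subroutine whose reference sample is the \emph{uncorrupted} index set $\mathcal{I}_2^{\text{uc}}$, so I must recast~\eqref{eq:Q_i} as a single permutation-invariant ``leave-the-query-out'' map---taking the reference to be the clean points among all indices \emph{other than} $i$---and verify that this symmetric reformulation agrees with the computable $Q_i$. The genuinely delicate point is the asymmetric status of the always-clean test index $n+1$: it is only weighted exchangeable, not plainly exchangeable, with the clean calibration points, and since $S_{n+1}$ is unavailable it can never enter any calibration point's reference set, whereas a clean calibration point does enter the references of the others. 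Reconciling computability with the symmetry needed for the quantile bound---i.e.\ showing that this differential treatment of $n+1$ does not disturb the one-sided rank control on $Q_{n+1}$---is where the real work lies, and I expect it to require combining the covariate-shift weighting of the inner step with the leave-one-out symmetry argument rather than a naive appeal to exchangeability.
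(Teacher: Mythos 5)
Your decomposition and inner bound coincide with the paper's proof: the union bound over $\{S_{n+1}>Q_{n+1}\}$ and $\{Q_{n+1}>Q^\ttmethod\}$, with the inner event controlled at level $\alpha-\beta$ by the \ttwcp guarantee of Tibshirani et al.\ under the covariate shift in $Z$, is exactly how the paper proceeds. The gap is in your outer bound. You propose to prove $\mathbb{P}(Q_{n+1}>Q^\ttmethod)\le\beta$ by establishing exchangeability of $\{Q_i\}_{i\in\mathcal{I}_2\cup\{n+1\}}$ after a leave-the-query-out symmetrization, and you yourself flag this as the hard part---but this is precisely the route the paper identifies as unworkable: those thresholds are \emph{not} exchangeable, and no permutation-invariant rewriting can agree with the computable quantities, because for clean $i\in\mathcal{I}_2^{\text{uc}}$ the threshold $Q_i$ in~\eqref{eq:Q_i} places mass on its own score $S_i$, whereas $Q_{n+1}$ can never involve $S_{n+1}$. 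Moreover, even a symmetric functional would not be exchangeable across this collection, since clean calibration points are drawn from $P_{X(0),Y(0)\mid M=0}$ while the test point is drawn from $P_{X(0),Y(0)}$, and the quantile in~\eqref{eq:w2_Q} is taken with \emph{uniform} weights over all of $\mathcal{I}_2$, corrupted indices included. Your closing guess that the fix combines covariate-shift weighting with leave-one-out symmetry points in the wrong direction.

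The missing idea is to move the rank argument from the thresholds to the weights. Writing $Q(\omega)$ for the weighted quantile in~\eqref{eq:Q_i} viewed as a function of the query weight $\omega$, Lemma~\ref{lem:non_decreasing_quantile} says $Q$ is non-decreasing \emph{pathwise}, so empirical quantiles commute with it:
\begin{equation}
Q^\ttmethod
= \text{Quantile}\left(1-\beta;\ \sum_{i\in\mathcal{I}_2}\tfrac{1}{|\mathcal{I}_2|+1}\,\delta_{Q(w_i)}+\tfrac{1}{|\mathcal{I}_2|+1}\,\delta_{\infty}\right)
= Q\left(\tilde{w}_{n+1}\right),
\qquad
\tilde{w}_{n+1}
:= \text{Quantile}\left(1-\beta;\ \sum_{i\in\mathcal{I}_2}\tfrac{1}{|\mathcal{I}_2|+1}\,\delta_{w_i}+\tfrac{1}{|\mathcal{I}_2|+1}\,\delta_{\infty}\right).
\end{equation}
Because the privileged information is always observed and uncorrupted, $\{Z_i\}_{i\in\mathcal{I}_2\cup\{n+1\}}$ are plainly exchangeable, hence so are the weights $w_i=w(Z_i)$, and the standard inflated-quantile lemma (Lemma 2 of~\cite{romano2019conformalized}) gives $\mathbb{P}(w_{n+1}\le\tilde{w}_{n+1})\ge 1-\beta$ with no reweighting and no symmetrization of the thresholds at all. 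On that event, monotonicity yields $Q_{n+1}=Q(w_{n+1})\le Q(\tilde{w}_{n+1})=Q^\ttmethod$, which is your outer bound. In short, the statement you want for the outer step is true, but it cannot be reached by threshold exchangeability; without the monotonicity-plus-weights reduction your proof does not close.
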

The proof is given in Appendix~\ref{sec:validity_proof}. \revt{We remark that while Theorem~\ref{thm:validity} requires that the PI satisfies the conditional independence assumption, i.e., $(X(0),Y(0)) \indep M \mid Z$, in Appendix~\ref{sec:relaxing_cond_indep_assumption} we relax this assumption and provide a lower bound for the coverage rate for settings where the conditional independence assumption is not exactly satisfied.}
We pause here to emphasize the significance of Theorem~\ref{thm:validity}. 
The key challenge in proving this result lies in the fact that the $\{Q_i\}_{i\in \mathcal{I}_2 \cup \{n+1\}}$ are not exchangeable. 
This is attributed to the fact that for every $i\in \mathcal{I}_2^{\text{uc}}$, the threshold $Q_i$ is defined using its own score $S_i$, while the test $Q_{n+1}$ does not rely on its corresponding score $S_{n+1}=\mathcal{S}(X^\text{test}, Y^\text{test}, \hat{f})$. 
As a side comment, if the thresholds were exchangeable, the proof was much simpler, as $Q^\ttmethod$ would be greater than $Q_{n+1}$ with probability $1- \beta $. In this case, $C^\ttmethod$ includes $C^\ttwcp$ at a high probability, meaning that it achieves the desired coverage rate.
Due to the lack of exchangeability, the argument above is incorrect. Indeed, the validity of \ttmethod does not follow directly from the guarantee of \ttwcp, and it requires additional technical steps.

\revt{We now turn to discuss the role of $\beta$. First, we emphasize that Theorem~\ref{thm:validity} holds for any choice of $\beta\in (0,\alpha)$. Therefore, $\beta$ only affects the sizes of the uncertainty sets. Intuitively, as $\beta \rightarrow \alpha$, a higher quantile of the weighted distribution of the scores is taken, and a lower quantile of the $Q_i$’s is taken. Similarly, as $\beta  \rightarrow 0$ a lower quantile of the weighted distribution of the scores is taken, and a higher quantile of the $Q_i$’s is taken. An optimal $\beta$ can be considered as the $\beta$ that leads to the narrowest intervals. Such optimal $\beta$ can be practically computed with a grid of values for $\beta$ in $(0,\alpha)$, using a validation set.
Nonetheless, in our experiments, we directly chose $\beta$ that is close to 0.
In Appendix \ref{sec:beta_ablation} we conduct an ablation study analyzing the effect of $\beta$ on a synthetic dataset.
}

\revt{Lastly, we note that while the real ratios of likelihoods, $w_i$, are required to provide the validity guarantee in Theorem~\ref{thm:validity}, \ttmethod can be employed with estimates of $w_i$. These weights can be estimated in the following way. The first step is estimating the conditional corruption probability given $Z$, i.e., $\mathbb{P}(M=0 \mid Z=z)$, using the training and validation sets with any off-the-shelf classifier. We remark that this classifier can be fit on unlabeled data, as this classifier only requires the PI $Z$ and the corruption indicator $M$. We denote the model outputs by $\hat{p}(M=0 \mid Z=z)$. Next, we estimate the marginal corruption probability directly from the data: $\hat{p}(M=0)=\frac{1}{n}\sum_{i=1}^{n} M_i$. Finally, the estimated weights are computed according to~\eqref{eq:pcp_weight}:
$ \hat{w}_i = \hat{w}(z_i) = \frac{\hat{p}(M=0)}{\hat{p}(M=0 \mid Z=z_i)}. $
Even though \ttmethod is not guaranteed to attain the nominal coverage level when employed with the estimates $\hat{w}_i$, the experiments from Section~\ref{sec:cifar10_exp} indicate that it does achieve a conservative coverage rate in this case. The effect of inaccurate estimates of $w_i$ on the coverage rate attained by \ttmethod could be an exciting future direction to explore, perhaps by drawing on ideas from~\cite{lee2024simultaneous}.}



\begin{algorithm}[h]
	 \caption{Privileged Conformal Prediction (\ttmethod)}
	\label{alg:weighted2}
	
	\textbf{Input:}
	\begin{algorithmic}
		\State Data $({X}^\text{obs}_i, {Y}^\text{obs}_i, Z_i, M_i) \in \mathcal{X} \cross \mathcal{Y} \cross \mathcal{Z}\cross\{0,1\}, 1\leq i \leq n$, weights $\{w_i\}_{i=1}^{n}$, miscoverage level $\alpha \in (0,1)$, level $\beta \revt{\in (0,\alpha)}$, an algorithm $\hat{f}$, a score function $\mathcal{S}$, and a test point $X^\text{test}=x$.
	\end{algorithmic}
	
	\textbf{Process:}
	\begin{algorithmic}
            \State Randomly split $\{1,...,n\}$ into two disjoint sets $\mathcal{I}_1, \mathcal{I}_2$.
            \State Fit the base algorithm $\hat{f}$ on the training data $\{({X}^\text{obs}_i, {Y}^\text{obs}_i)\}_{i\in\mathcal{I}_1}$.
            \State Compute the scores $S_i=\mathcal{S}({X}^\text{obs}_i, {Y}^\text{obs}_i;\hat{f})$ for the calibration samples, $i\in \mathcal{I}_2^\text{uc}$.  
            \State Compute a threshold $Q_i$ for each calibration sample according to~\eqref{eq:Q_i}.
            \State Compute $Q^\ttmethod$, the $(1- \beta)$ quantile of $\{Q_i\}_{i\in \mathcal{I}_2}$, according to~\eqref{eq:w2_Q}.
	\end{algorithmic}
	
	\textbf{Output:}
	\begin{algorithmic}
		\State Prediction set $C^\ttmethod(x)=\{y: \mathcal{S}(x,y;\hat{f}) \leq Q^\ttmethod\}$.
	\end{algorithmic}
\end{algorithm}

\subsection{Privileged Conformal Prediction for scarce data}\label{sec:pcp_scarce}
In this section, we present an adaptation of \ttmethod to handle situations where the sample size is small.
While \ttmethod is computationally light, it requires splitting the data into training and calibration sets. This restriction is significant for small datasets in which the reduction in computations from the data splitting comes at the expense of statistical efficiency. 
To avoid data splitting, we build on the leave-one-out jackknife+ method~\cite{barber2019jackknife, gupta2020nested}, and, in particular, its weighted version \texttt{JAW}~\cite{prinster2022jaws}. The method we propose, which we refer to as \ttmethodjp, better utilizes the training data compared to \ttmethod. 
For the interest of space, we refer to Appendix~\ref{sec:pcp_scarce_algorithm} for the description of \ttmethodjp.
The following Theorem states that prediction set $C^\ttmethodjp$ constructed by \ttmethodjp is guaranteed to achieve a valid coverage rate under our setup. In Appendix~\ref{sec:validity_pcp_loo} we provide the proof, which relies on results from~\cite{prinster2022jaws}.
\begin{theorem}\label{thm:validity_j}
Suppose that $\{({X}_i(0),{X}_i(1), {Y}_i(0),{Y}_i(1),Z_i,  M_i)\}_{i=1}^{n+1}$ are exchangeable, $(X(0),Y(0)) \indep M \mid Z$, and $P_{Z}$ is absolutely continuous with respect to $P_{Z \mid M=0}$. Then, the prediction set $C^\ttmethodjp(X^\textup{test})$ constructed according to Algorithm~\ref{alg:weighted2_j} satisfies:
\begin{equation}
\mathbb{P}(Y^\textup{test} \in C^\ttmethodjp(X^\textup{test})) \geq 1-2\alpha.
\end{equation}
\end{theorem}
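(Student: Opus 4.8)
The plan is to reduce the claim to the weighted jackknife+ guarantee of \texttt{JAW}~\cite{prinster2022jaws}, and to handle the unavailability of $Z^\textup{test}$ by the same conservative-quantile device that powers the proof of Theorem~\ref{thm:validity}. First I would recast our setting as a covariate shift in the privileged information: under $(X(0),Y(0))\indep M\mid Z$, the uncorrupted training points are drawn from $P_{X(0),Y(0)\mid M=0}$ while the test point is drawn from $P_{X(0),Y(0)}$, and by~\eqref{eq:pcp_weight} these laws differ only by the tilt $w(z)=\mathbb{P}(M=0)/\mathbb{P}(M=0\mid Z=z)$ in the $Z$-coordinate. This is exactly a covariate shift with covariate $Z$, and the absolute-continuity hypothesis guarantees the weights are well defined. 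Consequently, \emph{if} $Z^\textup{test}$ (equivalently $w(Z^\textup{test})$) were observed, applying \texttt{JAW} to the uncorrupted points with these likelihood-ratio weights would already produce a set $C^{\texttt{JAW}}(X^\textup{test})$ with coverage $\geq 1-2\alpha'$ at inner miscoverage $\alpha'$. This is the leave-one-out analog of the \ttwcp guarantee invoked inside Theorem~\ref{thm:validity}, and the factor $2$ is inherited from jackknife+.

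Since $Z^\textup{test}$ is unknown, I would mirror the \ttmethod construction in the jackknife+ world: treat each training/calibration index $i$ as a pseudo-test point, run the weighted leave-one-out comparison with $i$ playing the role of the test point to obtain a per-point set (or threshold), and then aggregate by taking the high $(1-\beta)$ quantile over these per-point quantities together with a point mass at $+\infty$, exactly as in~\eqref{eq:w2_Q}. By the monotonicity recorded in Lemma~\ref{lem:non_decreasing_quantile}, inflating toward the conservative quantile can only enlarge the resulting set, so the construction is designed so that $C^\ttmethodjp(X^\textup{test})$ \emph{contains} the infeasible set $C^{\texttt{JAW}}(X^\textup{test})$ built with the true weight $w(Z^\textup{test})$, on a high-probability event $E$.

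The coverage then follows by a union bound. Writing $C^{\texttt{JAW}}$ for the infeasible jackknife+ set at inner miscoverage $\alpha-\beta$ and $E=\{C^\ttmethodjp\supseteq C^{\texttt{JAW}}\}$, I would establish $\mathbb{P}(E^{c})\leq\beta$ and combine it with the \texttt{JAW} guarantee:
\[
\mathbb{P}\!\left(Y^\textup{test}\in C^\ttmethodjp\right)
\;\geq\; \mathbb{P}\!\left(Y^\textup{test}\in C^{\texttt{JAW}}\right)-\mathbb{P}(E^{c})
\;\geq\; \bigl(1-2(\alpha-\beta)\bigr)-\beta
\;\geq\; 1-2\alpha,
\]
where $\beta\in(0,\alpha)$ and the internal level is calibrated so the bookkeeping closes to the stated $2\alpha$ budget, precisely as the inner level $1-\alpha+\beta$ and outer level $1-\beta$ combine to $1-\alpha$ in Theorem~\ref{thm:validity}.

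The main obstacle is identical in spirit to the one flagged after Theorem~\ref{thm:validity}, but aggravated by the jackknife+ structure: the per-point leave-one-out quantities are \emph{not exchangeable}, because each calibration point's set is built using its own leave-one-out residual while the test point's set does not see $S_{n+1}=\mathcal{S}(X^\textup{test},Y^\textup{test};\hat{f})$. Hence the bound $\mathbb{P}(E^{c})\leq\beta$ cannot be read off from a plain order-statistic argument. The hard part is to reproduce the \ttmethod non-exchangeability workaround at the level of the \emph{weighted empirical distribution of leave-one-out residuals}, tracking simultaneously how swapping the test index with a calibration index perturbs both the refitted models $\hat f_{-i}$ and the attached weights, and verifying that the \texttt{JAW} pairwise-comparison inequality composes with the conservative quantile without leaking more than the allotted $\beta$. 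I expect the exchangeability-of-augmented-sequences machinery borrowed from~\cite{prinster2022jaws} to do the heavy lifting precisely here.
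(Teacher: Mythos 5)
Your proposal follows the same architecture as the paper's proof: reduce the problem to a covariate shift in $Z$; invoke the weighted jackknife+ (\texttt{JAW}) guarantee of \cite{prinster2022jaws} for the \emph{infeasible} set built with the true test weight $w_{n+1}=w(Z_{n+1})$ at an inner level $\gamma$; show the prediction set is monotone (nested) in the test weight; and finish with a union bound over the event $E=\{\tilde{w}_{n+1}\ge w_{n+1}\}$. Your bookkeeping also closes: with inner miscoverage $\alpha-\beta$ you get $1-2(\alpha-\beta)-\beta\ge 1-2\alpha$, while the paper's Algorithm~\ref{alg:weighted2_j} fixes $\gamma=\alpha-\tfrac{1}{2}\beta$ and lands exactly on $1-2\alpha$; either choice is fine. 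One small technical remark: Lemma~\ref{lem:non_decreasing_quantile} as stated concerns weighted quantiles, not the jackknife+ indicator-sum sets, so the set-monotonicity $\omega_1\ge\omega_2\Rightarrow C^\ttmethodjp_{\omega_2}(x)\subseteq C^\ttmethodjp_{\omega_1}(x)$ needs its own (easy) argument, which the paper supplies inline via $p_i(\omega_1)\le p_i(\omega_2)$.

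The genuine problem is your final paragraph, which misdiagnoses where the difficulty lies and leaves the key step unproven. You claim that $\mathbb{P}(E^c)\le\beta$ ``cannot be read off from a plain order-statistic argument'' because the per-point leave-one-out quantities are not exchangeable, and you propose to close it by tracking how swapping indices perturbs the refitted models $\hat{f}^{-i}$ and the weighted residual distribution. This is backwards. The conservative quantity $\tilde{w}_{n+1}$ is (or, after applying monotonicity to your threshold-aggregation construction, can be rewritten as) the $(1-\beta)$-quantile of the augmented empirical distribution of the \emph{weights} $\{w_i\}_{i=1}^{n}$ together with a point mass at $+\infty$; and the weights $w_i=w(Z_i)$, $i=1,\dots,n+1$, are exchangeable simply because the $Z_i$ are, irrespective of any leave-one-out structure. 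Hence $\mathbb{P}(w_{n+1}\le\tilde{w}_{n+1})\ge 1-\beta$ is precisely a plain order-statistic/exchangeability statement, which the paper obtains directly from \cite[Lemma 2]{romano2019conformalized}; the residuals and refitted models never enter this bound. Indeed, the entire purpose of the monotonicity device—the non-exchangeability workaround you correctly imported from Theorem~\ref{thm:validity}—is to reduce the containment event $E$ to an event about weights alone. The exchangeability-of-augmented-sequences machinery of \cite{prinster2022jaws} is needed only in the other step, namely establishing $\mathbb{P}(Y^\textup{test}\in C^\ttmethodjp_{w_{n+1}}(X^\textup{test}))\ge 1-2\gamma$ for the infeasible set via the ``strange points'' argument, which you had (correctly) taken as given. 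So the piece you defer as the hard part is in fact a one-line argument, and the route you sketch for it would be both unnecessary and off target.
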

We note that in contrast to split \texttt{CP}, here, the coverage guarantee appears with a factor $2$ in $\alpha$. We refer the reader to~\cite{barber2019jackknife} for a detailed explanation of why the factor $2$ is necessary and cannot be removed. Nonetheless, it is well-known that this jackknife approach greatly improves statistical efficiency compared to split conformal methods.

\section{Applications}\label{sec:experiments}
In this section, we exemplify our proposal in three real-life applications. In all experiments, we randomly split the data into training, validation, calibration, and test sets. We fit a base learning model on the training data and use the validation set to avoid overfitting. 
We calibrate the model using the calibration data with the proposed \ttmethod or with a baseline technique, and evaluate the performance on the test set. 
In all experiments, the calibration schemes are applied to achieve a $1-\alpha=90\%$ marginal coverage rate. We use the \texttt{CQR}~\cite{romano2019conformalized} non-conformity score in regression tasks, and the \emph{homogeneous prediction sets} (\texttt{HPS}) non-conformity score~\cite{vovk2005algorithmic, lei2013distribution} in classification tasks.
Appendix~\ref{sec:experimental_setup} describes the full details about the network architecture, training strategy, datasets, corruption technique, and this experimental protocol.
The specific formulation of the PI is described in each experiment.
In this section, we focus on three use cases: causal inference, missing response, and noisy response. We demonstrate the applicability of \ttmethod on more datasets, and under different corruptions, including additional causal inference tasks in Appendix~\ref{sec:additional_causal_inference_exp}, more response corruptions in Appendix~\ref{sec:cifar10c_exp} and in Appendix~\ref{sec:tabular_noisy_response_exp}, and missing features settings in Appendix~\ref{sec:missing_features_exp}.



\subsection{Causal inference: semi-synthetic example}\label{sec:causal_inference_exp}
We begin with a causal inference example, in which the goal is to obtain inference for individual treatment effects~\cite{hernan2010causal}.
In this setting, ${X}_i(0)={X}_i(1)\in \mathcal{X}$ denotes the features, $Z_i$ denotes the privileged information, $M_i \in \{0,1\}$ denotes the binary treatment indicator, and $Y_i(0), Y_i(1) \in \mathbb{R}$ denote the counterfactual outcomes under control and treatment conditions, respectively. Recall that we only observe $Y_i^\text{obs}=Y_i(M_i)$ and that the PI explains the treatment pattern $(X(0), Y(0)) \indep M \mid Z$. 
In this experimental setup, our goal is to construct a prediction set that covers the true test potential outcome under control conditions, i.e., $Y_{n+1}(0)$, at a user-specified level $1-\alpha=90\%$. Alternatively, we could also aim to predict the outcome under treatment $Y_{n+1}(1)$. However, in this experiment, we focus on $Y_{n+1}(0)$ since the dataset we use is highly imbalanced and there are few samples from the treatment group. This task is compelling since it can be used to generate a valid uncertainty interval for the individual treatment effect (ITE), $Y_i(1) - Y_i(0)$, which is a great interest for many problems~\cite{brand2010benefits, morgan2001counterfactuals, xie2012estimating, florens2008identification}. For instance,  the work in~\citep{lei2021conformal} shows how to construct a valid interval for the ITE by combining intervals for $Y_{n+1}(0)$ and $Y_{n+1}(1)$. We remark that providing statistically valid prediction intervals for $Y_{n+1}(0)$ is challenging due to the distribution shift between the observed control responses, which are drawn from $P_{Y(0) \mid M=0}$, whereas the test control response is drawn from $P_{Y(0)}$. Moreover, in this example, we intentionally design $M_i$ to induce such a distribution shift; see Appendix~\ref{sec:general_exp_setup} for more details on the definition of $M_i$.

We test the applicability of our method on the semi-synthetic Infant Health and Development Program (IHDP) dataset~\cite{hill2011bayesian}, in which the objective is to find the effect of specialist home visits on a child's future cognitive test scores. That is, the feature vector $X_i$ contains covariates describing the child's characteristics, the treatment $M_i$ is the specialist home visits indicator, and the potential outcomes $Y_i(0), Y_i(1)$ are the future cognitive test scores.
Since this dataset does not originally contain a privileged information variable, we artificially define it as the entry in $X_i$ that correlates the most with $Y_i(0)$.
This feature is then removed from $X_i$, so it is unavailable at inference time.

Since the IHDP dataset contains only~747 samples, we apply \ttmethodjp in this example and compare it to the following calibration techniques. The first method is a naive \texttt{jackkife+}, which uses only the control samples 
and does not account for distribution shifts. The second and third techniques are two versions of \texttt{JAW}~\cite{prinster2022jaws}, which is a weighted conformal version of the jackknife+. The first version (\texttt{Naive WCP}) naively uses an estimate of $P_{M\mid X}$ as the likelihood ratio weights instead of $P_{M\mid Z}$, as $Z^\text{test}$ is unknown.
The second (\texttt{Infeasible WCP}) is an \textbf{infeasible} method which requires access to the unknown test privileged information $Z^\text{test}$ for the computation of the likelihood ratio weights, which can be considered as an oracle calibration process. 
Importantly, the infeasible \texttt{JAW} and the proposed method use the true corruption probabilities when computing the weights $w(z)$ from~\eqref{eq:pcp_weight}.

Figure~\ref{fig:ihdp} reports the coverage rates and interval lengths of each calibration scheme. This figure shows that naive \texttt{jackkife+} and the naive \texttt{JAW} achieve a lower coverage rate than desired. This is anticipated, as both schemes do not accurately account for the distribution shift.
In contrast, the infeasible \texttt{JAW} and \ttmethod achieve the desired coverage rate. This is not a surprise, as \texttt{JAW} is guaranteed to attain the nominal coverage level when applied with the correct weights~\cite[Theorem 1]{prinster2022jaws}. However, this method is infeasible to implement in contrast with our proposal, which, according to Theorem~\ref{thm:validity_j}, is guaranteed to cover the response at the desired rate without using the test privileged information. Furthermore, Figure~\ref{fig:ihdp} reveals that \ttmethod constructs intervals with approximately the same width as the ones generated by the infeasible \texttt{JAW}. This indicates that we do not lose much in terms of statistical efficiency by not having access to the test privileged information $Z^\text{test}$.

\begin{figure}[h]
  \centering
         \includegraphics[width=0.9\textwidth]{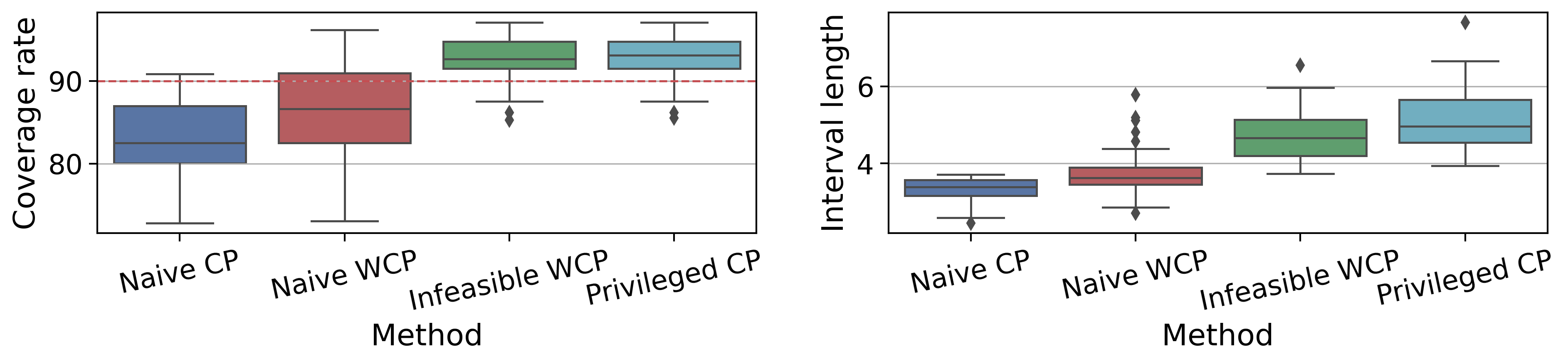}
     \caption{\textbf{Causal inference experiment: IHDP dataset.} The coverage rate and average interval length achieved by naive jackknife+ (\texttt{Naive CP}), naive \texttt{JAW} which considers only $X$ to cope with the distribution shift (\texttt{Naive WCP}), an infeasible \texttt{JAW} which uses $Z^\text{test}$ (\texttt{Infeasible WCP}), and the proposed method (\texttt{Privileged CP}).
    The metrics are evaluated over 50 random data splits.}
\label{fig:ihdp}%
\end{figure}%

\subsection{Missing response variable: semi-synthetic example}\label{sec:missing_response_exp}
In this section, we study the performance of \ttmethod and compare it to baselines in a missing response setting using six real datasets: Facebook1,2~\cite{facebook_data}, Bio~\cite{bio_data}, House~\cite{house_data}, Meps19~\cite{meps19_data} and Blog~\cite{blog_data}. Since these datasets do not originally contain privileged information, we artificially define $Z_i$ as the feature from $X_i$ that correlates the most with $Y_i$ and then remove it from $X_i$. Furthermore, since all response variables are present in these datasets, we artificially remove the responses in 20\% of the samples. We intentionally set the missing probability in a way that induces a distribution shift between the missing and observed variables. In Appendix~\ref{sec:general_exp_setup} we provide the full details about the corruption process and how we impute the missing data.

We compare the proposed method (\ttmethod) to the following calibration schemes: a naive conformal prediction (\ttnaive); a naive \texttt{WCP}, which considers only $X$ to cope with the distribution shift; the two-staged baseline (\ttbaseline); and an infeasible weighted conformal prediction (\texttt{Infeasible WCP}) which has access to the test privileged information $Z^\text{test}$. 
Importantly, the baseline \ttbaseline, the infeasible \texttt{WCP}, and \ttmethod use the real corruption probabilities when computing the weights $w(z)$ in~\eqref{eq:pcp_weight}. In contrast, \texttt{Naive WCP} estimates the corruption probability conditioned on $X$ from the data.  
Figure~\ref{fig:missing_y} presents the performance of each calibration scheme, showing that the naive approach (\ttnaive) consistently produces invalid prediction intervals. This is anticipated, as \ttnaive does not provide guarantees under distribution shifts. Figure~\ref{fig:missing_y} also shows that \ttbaseline generates too wide intervals, resulting in a conservative coverage rate of approximately 95\%. By contrast, the infeasible \texttt{WCP} and the proposed \ttmethod consistently achieve the desired 90\% level. Crucially, \ttmethod is comparable in the interval length to the infeasible \texttt{WCP}. In conclusion, this experiment demonstrates that \ttmethod constructs intervals that are both reliable and informative.

\begin{figure}[ht]
        \includegraphics[width=0.98\textwidth,trim={0 0 0 2.5cm},clip]{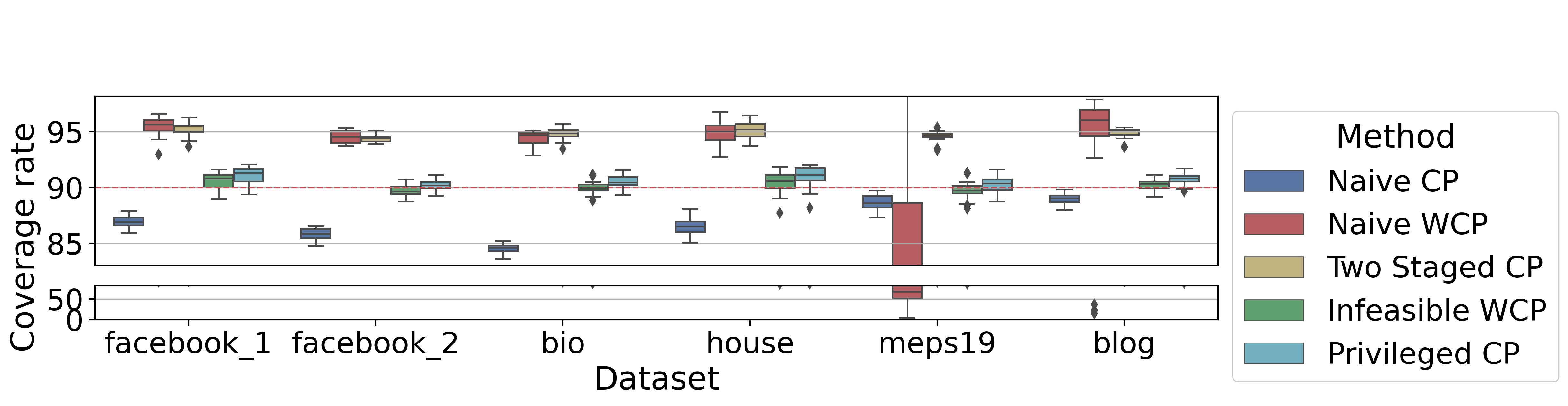}\\
        
        \includegraphics[width=0.77\textwidth]{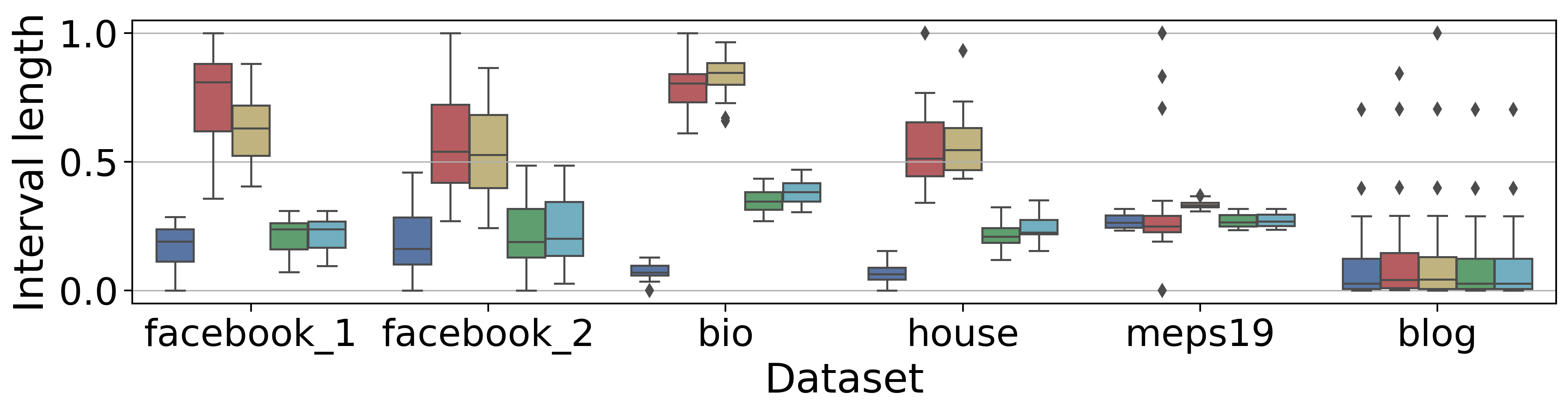}
     \caption{\textbf{Missing response experiment.} The coverage rate and average interval length obtained by 
     various methods; see text for details.
    Performance metrics are evaluated over 20 random data splits.}
\label{fig:missing_y}%
\end{figure}%

\subsection{Noisy response variable: real example}\label{sec:cifar10_exp}
In what follows, we examine the performance of the proposed technique on the CIFAR-10N~\cite{wei2022learning} image recognition dataset that contains noisy labels. Here, $X$ is an image of one out of ten possible objects, and $Y$ is its corresponding label. The noisy response, ${Y}(1)$, is the label annotated by one human annotator, while $Y(0)$ denotes the clean label obtained from CIFAR-10~\cite{krizhevsky2009learning}. That is, $M=0$ indicates that the annotator correctly labeled the image. 
Similarly to~\cite{xiao2023privileged}, we define the privileged data as information about the annotators. Specifically, the variable $Z_i$ contains two features: (i) the number of unique labels suggested by three annotators for the $i$-th sample, and (ii) the time took to annotate the corresponding sample batch, which contains ten images. 
In this experiment, we compare our method (\ttmethod) to the following calibration schemes: a naive conformal prediction, applied either with the noisy labels \texttt{Naive CP (clean + noisy)} or ignoring them \texttt{Naive CP (only clean)}; the two-staged baseline (\texttt{Two Staged CP}); an infeasible \ttwcp (\texttt{Infeasible WCP}) which assumes access to the unknown test privileged information  $Z^\text{test}$.
Additionally, since the corruption probabilities are not given in this dataset, we estimate them from the data and use these estimates to compute the weights $w$ in~\eqref{eq:pcp_weight}. 

Figure~\ref{fig:cifar10} presents each calibration scheme's coverage rate and uncertainty set size. 
This figure shows that \ttnaive applied with noisy labels tends to overcover the clean label. This behavior is consistent with the work in~\cite{label_noise, sesia2023adaptive}, which suggests that naive \texttt{CP} constructs conservative uncertainty sets when employed on data with dispersive label-noise. Figure~\ref{fig:cifar10} also indicates that calibrating the model only on the clean samples leads to invalid prediction sets that tend to undercover the clean test label. Observe also that the two-stage baseline is overly conservative, as it encapsulates the error in predicting both $Z^\text{test}$ and the label. In contrast, the coverage rate of infeasible \texttt{WCP} and our proposed \ttmethod is much closer to the desired level, yet slightly conservative. We suggest two possible explanations for this behavior: (i) the weights used are only estimates of the true likelihood ratios; (ii) in this real data we consider here, the PI may not fully explain the corruption mechanism.
This highlights the robustness of our method to violations of our assumptions in the specific use-case studied here.
Lastly, we remark that the prediction sets of \ttmethod have a similar set size to the sets constructed by the infeasible \texttt{WCP}, which is in line with the results from Section~\ref{sec:causal_inference_exp} and Section~\ref{sec:missing_response_exp}. 

\begin{figure}[h]
  \centering
       \includegraphics[width=0.9\textwidth,trim={0 0 0 0.25cm},clip]{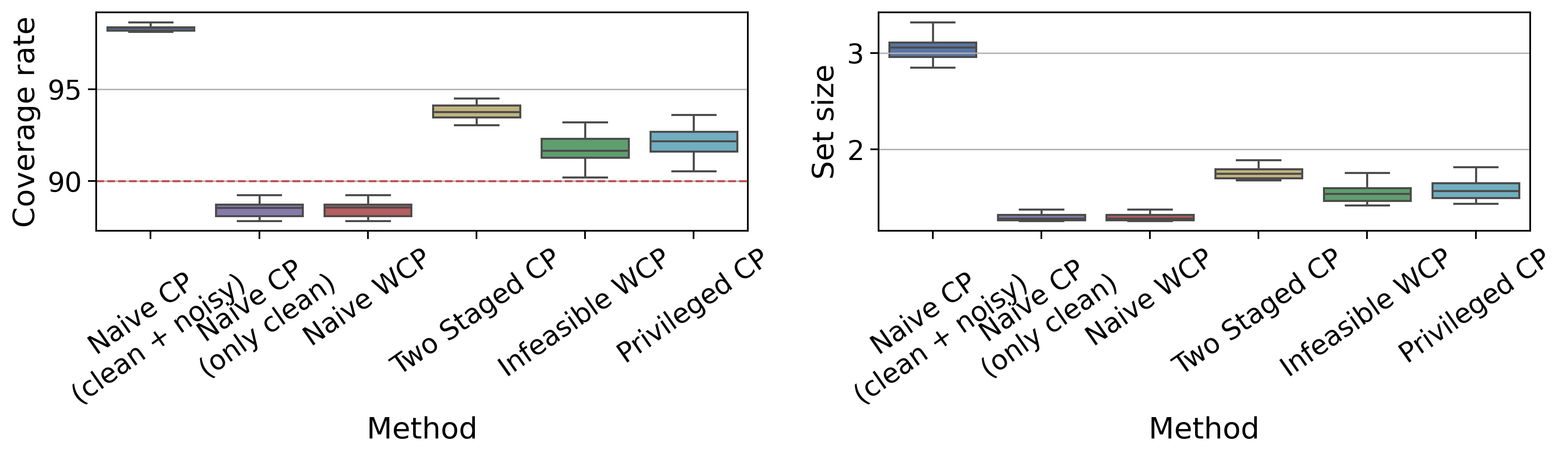}
        
     \caption{\textbf{Noisy response experiment: CIFAR-10N dataset.} Average coverage and set size obtained by 
     various methods; see text for details.
    The metrics are evaluated over 20 random data splits.}
\label{fig:cifar10}%
\end{figure}%


\section{Discussion and impact statement}\label{sec:conclusion}
In this paper, we introduced \ttmethod, a novel calibration scheme to reliability quantify prediction uncertainty in situations where the training data is corrupted. 
The validity of our proposal is supported by theoretical guarantees and demonstrated in numerical experiments. 
The key assumption behind our method is that the features and responses are independent of the corruption indicator given the privileged information.  
This conditional independence resembles the strong ignorability assumption in causal inference~\cite{rubin1978bayesian, rosenbaum1983central, imbens2015causal}.
While acquiring PI that satisfies this requirement can be challenging, our work relaxes the strong ignorability assumption, as the confounders are allowed to be absent during inference time.
An additional restriction we make is that the true conditional corruption probability 
must be known to provide a theoretical coverage validity. 
However, our numerical experiments indicate that estimating these probabilities leads to reliable uncertainty estimates. A promising future direction would be to theoretically analyze the effect of inaccurate weights on the coverage guarantee, e.g., by borrowing ideas from~\cite{lee2024simultaneous}.
Finally, we should note that there are potential social implications of our method, akin to many other works that aim to advance the ML field. 

\begin{ack}
\rev{Y.R. and S.F. were supported by the ISRAEL SCIENCE FOUNDATION (grant No. 729/21). Y.R. thanks the Career Advancement Fellowship, Technion. Y.R. and S.F. thank Stephen Bates for insightful discussions and for providing feedback on this manuscript.}
\end{ack}

\bibliographystyle{unsrt}
\bibliography{bibliography}

\newpage

\appendix

\section{Theoretical results}\label{sec:proofs}

\subsection{Proof of Proposition~\ref{prop:2staged_validity}}\label{sec:2staged_proof}

\begin{proof}
For the sake of this proof, we re-define the scores quantile $Q^\texttt{WCP}_\text{conservative}$ from Section~\ref{sec:baseline_two_staged} as a function of a test weight $\omega$:
\begin{equation}\label{eq:general_q_baseline}
Q(\omega) := \text{Quantile}\left(1-\alpha + \beta; \sum_{j\in\mathcal{I}_2^{\text{uc}}} p_j(\omega) \delta_{S_j} + p_{n+1}(\omega) \delta_{\infty}\right),
\end{equation}
where $p_j(\omega), p_{n+1}(\omega)$ are defined as:
\begin{equation}
p_j(\omega) = \frac{w_j} {\sum_{k\in\mathcal{I}_2^{\text{uc}}} w_k + \omega}, p_{n+1}(\omega) =\frac{\omega} {\sum_{k\in\mathcal{I}_2^{\text{uc}}} w_k + \omega}.
\end{equation}
For ease of notation, we denote the test weight by $w_{n+1}= w(Z_{n+1})$ and its conservative counterpart by $\tilde{w}_{n+1}=w^\text{conservative}_{n+1}$, which is defined in~\eqref{eq:w_conservative}.
Note that by the definition of $Q$, we get $Q^\texttt{WCP}_\text{conservative} \equiv Q(\tilde{w}_{n+1})$.
Since the observed calibration points $\{(X_i(M_i), Z_i)\}_{i\in\mathcal{I}_2}$ and the test point $(X_{n+1}(0), Z_{n+1})$ are exchangeable (we assume that $X_i(0)=X_i(1)$), then \texttt{CP}~\cite{vovk2005algorithmic} guarantees that the prediction set $C^{Z}(X^\text{test})$ satisfies the coverage requirement:
\begin{equation}
    \mathbb{P}(Z_{n+1} \in C^{Z}(X^\text{test})) \geq 1- \beta,
\end{equation}
and therefore:
\begin{equation}
    \mathbb{P}(w(Z_{n+1}) \in \{w(z): z\in C^{Z}(X^\text{test}) \}) \geq 1- \beta.
\end{equation}
Following this, we get: $\mathbb{P}(w(Z_{n+1}) \leq \tilde{w}_{n+1}) \geq 1- \beta$.
Assuming $(X(0),Y(0)) \indep M \mid Z$, there is a covariate shift between the calibration and test samples, where the covariates are the privileged information $Z$. Specifically, the calibration covariates $\{Z_i\}_{i \in \mathcal{I}_2^{\text{uc}}}$ are drawn from $P_{Z\mid M=0}$ while the test covariates $Z^\text{test}$ are drawn from $P_{Z}$. Importantly, both the calibration and response response variables have the same distribution conditional on $Z$: $P_{Y\mid Z}$.
Thus, \cite[Theorem 1]{tibshirani2019conformal} states that:
\begin{equation}
     \mathbb{P}\left( Y^\text{test} \in \left\{y : \mathcal{S}(X^\text{test}, y) \leq Q(w_{n+1}) \right\}\right) \geq 1-\alpha +\beta .
\end{equation}
We note that Lemma~\ref{lem:non_decreasing_quantile} states that $Q(\omega)$ is non-decreasing, i.e., $\omega_1 \geq \omega_2 \Rightarrow Q(\omega_1) \geq Q(\omega_2)$.
Finally, we combine everything together to get: 
\begin{equation}
\begin{split}
     \mathbb{P}\left( Y^\text{test} \in C^\ttbaseline(X^\text{test})\right) &=\mathbb{P}\left( Y^\text{test} \in \left\{y : \mathcal{S}(X^\text{test}, y) \leq Q(\tilde{w}_{n+1}) \right\}\right) \\
     & = \mathbb{P}\left( \mathcal{S}(X^\text{test},Y^\text{test}) \leq Q(\tilde{w}_{n+1}) \right)  \\
     &\geq \mathbb{P}\left( \mathcal{S}(X^\text{test},Y^\text{test}) \leq Q(\tilde{w}_{n+1}), \tilde{w}_{n+1} \geq w_{n+1}\right) \\
     &\geq \mathbb{P}\left( \mathcal{S}(X^\text{test},Y^\text{test}) \leq Q(w_{n+1}), \tilde{w}_{n+1} \geq {w}_{n+1}\right) \\
     &= 1-\mathbb{P}\left(\mathcal{S}(X^\text{test},Y^\text{test}) > Q({w}_{n+1}) \text{ or } \tilde{w}_{n+1} < {w}_{n+1}\right) \\
     &\geq 1-\mathbb{P}\left( \mathcal{S}(X^\text{test},Y^\text{test}) > Q({w}_{n+1})\right) -\mathbb{P}( \tilde{w}_{n+1} < {w}_{n+1}) \\
     &\geq 1- \left(\alpha - \beta \right) - \left(\beta\right) \\
     &= 1- \alpha.
\end{split}
\end{equation}

\end{proof}

\subsection{Proof of Theorem~\ref{thm:validity}}\label{sec:validity_proof}

\begin{proof}

For the sake of this proof, we re-define the scores quantile $Q$ from~\eqref{eq:Q_i} as a function of a test weight $\omega$, similarly to~\eqref{eq:general_q_baseline}:
\begin{equation}\label{sec:general_q_pcp}
Q(\omega) := \text{Quantile}\left(1-\alpha + \beta; \sum_{j\in\mathcal{I}_2^{\text{uc}}} p_j(\omega) \delta_{S_j} + p_{n+1}(\omega) \delta_{\infty}\right),
\end{equation}
where $p_j(\omega), p_{n+1}(\omega)$ are defined as:
\begin{equation}
p_j(\omega) = \frac{w_j} {\sum_{k\in\mathcal{I}_2^{\text{uc}}} w_k + \omega}, \ \ p_{n+1}(\omega) =\frac{\omega} {\sum_{k\in\mathcal{I}_2^{\text{uc}}} w_k + \omega}.
\end{equation}
Note that by the definition of $Q$, we get $Q_i \equiv Q(w_i)$. Furthermore:
\begin{align}
Q^\ttmethod &\equiv \text{Quantile}\left(1-\beta; \sum_{i\in\mathcal{I}_2} \frac{1}{|\mathcal{I}_2|+1}\delta_{Q_i} + \frac{1}{|\mathcal{I}_2|+1}\delta_{\infty} \right) \\ 
&= \text{Quantile}\left(1-\beta; \sum_{i\in\mathcal{I}_2} \frac{1}{|\mathcal{I}_2|+1}\delta_{Q(w_i)} + \frac{1}{|\mathcal{I}_2|+1}\delta_{\infty} \right).
\end{align}
Since $Q(\omega)$ is a non-decreasing function of $\omega$, as proved in Lemma~\ref{lem:non_decreasing_quantile}, we get:
\begin{equation}
Q^\ttmethod = Q\left(\text{Quantile}\left(1-\beta; \sum_{i\in\mathcal{I}_2} \frac{1}{|\mathcal{I}_2|+1}\delta_{w_i}+ \frac{1}{|\mathcal{I}_2|+1}\delta_{\infty} \right) \right).
\end{equation}
Thus, $Q^\ttmethod$ can be considered as if it was computed from the following weight:
\begin{equation}
\tilde{w}_{n+1} := \text{Quantile}\left(1-\beta; \sum_{i\in\mathcal{I}_2} \frac{1}{|\mathcal{I}_2|+1}\delta_{w_{i}} +\frac{1}{|\mathcal{I}_2|+1}\delta_{\infty} \right),   
\end{equation}
in the sense that $Q^\ttmethod =Q(\tilde{w}_{n+1})$. Therefore:
\begin{equation}
C^\ttmethod(X^\text{test}) :=\left\{y : \mathcal{S}(X^\text{test}, y) \leq Q^\ttmethod \right\} = \left\{y : \mathcal{S}(X^\text{test}, y) \leq Q(\tilde{w}_{n+1}) \right\}.
\end{equation}
The true weight of the $n+1$ sample is: $w_{n+1}=\frac{\mathbb{P}(M=0)}{\mathbb{P}(M=0 \mid Z=Z_{n+1})}$. 
Now, since $\{ Z_i\}_{i\in \mathcal{I}_2 \cup \{n+1\}}$ are exchangeable, then $\{w_i\}_{i\in \mathcal{I}_2 \cup \{n+1\}}$ are exchangeable and thus~\cite[Lemma 2]{romano2019conformalized} states that:
\begin{equation}
    \mathbb{P}(w_{n+1} \leq \tilde{w}_{n+1}) \geq 1- \beta.
\end{equation}
Assuming $(X(0),Y(0)) \indep M \mid Z$, there is a covariate shift between the calibration and test samples, where the covariates are the privileged information $Z$. Specifically, the uncorrupted calibration covariates $\{Z_i\}_{i \in \mathcal{I}_2^{\text{uc}}}$ are drawn from $P_{Z\mid M=0}$ while the test covariates $Z^\text{test}$ are drawn from $P_{Z}$. Importantly, both the calibration and response response variables have the same distribution conditional on $Z$: $P_{Y\mid Z}$. Thus,~\cite[Theorem 1]{tibshirani2019conformal} states that:
\begin{equation}
     \mathbb{P}\left( Y^\text{test} \in \left\{y : \mathcal{S}(X^\text{test}, y) \leq Q(w_{n+1}) \right\}\right) \geq 1-\alpha +\beta.
\end{equation}
Note that Lemma~\ref{lem:non_decreasing_quantile} states that $Q(\omega)$ is non-decreasing, i.e., $\omega_1 \geq \omega_2 \Rightarrow Q(\omega_1) \geq Q(\omega_2)$.
Finally, we combine all together and get:
\begin{equation}
\begin{split}
     \mathbb{P}\left( Y^\text{test} \in C^\ttmethod(X^\text{test})\right) &=\mathbb{P}\left( Y^\text{test} \in \left\{y : \mathcal{S}(X^\text{test}, y) \leq Q(\tilde{w}_{n+1}) \right\}\right) \\
     & = \mathbb{P}\left( \mathcal{S}(X^\text{test},Y^\text{test}) \leq Q(\tilde{w}_{n+1})\right)  \\
     &\geq \mathbb{P}\left( \mathcal{S}(X^\text{test},Y^\text{test}) \leq Q(\tilde{w}_{n+1}), \tilde{w}_{n+1} \geq w_{n+1}\right) \\
     &\geq \mathbb{P}\left( \mathcal{S}(X^\text{test},Y^\text{test}) \leq Q(w_{n+1}), \tilde{w}_{n+1} \geq {w}_{n+1}\right) \\
     &= 1-\mathbb{P}\left( \mathcal{S}(X^\text{test},Y^\text{test}) > Q({w}_{n+1}) \text{ or } \tilde{w}_{n+1} < {w}_{n+1}\right) \\
     &\geq 1-\mathbb{P}\left( \mathcal{S}(X^\text{test},Y^\text{test}) > Q({w}_{n+1})\right) -\mathbb{P}( \tilde{w}_{n+1} < {w}_{n+1}) \\
     &\geq 1- \left(\alpha - \beta\right) - \left(\beta\right) \\
     &= 1- \alpha.
\end{split}
\end{equation}
\end{proof}

\subsection{Proof of Theorem~\ref{thm:validity_j}}\label{sec:validity_pcp_loo}

\begin{proof}
For the sake of this proof, we re-define the prediction set as a function of the test weight $\omega \in [0,1]$:
\begin{equation}
C^\ttmethodjp_\omega(x)=\left\{y \in \mathcal{Y}: \sum_{i\in \mathcal{I}^\text{uc}}p_{i}(\omega)\mathbbm{1}\{S_i < \mathcal{S}(x, y;\hat{f}^{-i})\} < 1-\gamma \right\},
\end{equation}
where $\gamma=\alpha - \frac{1}{2}\beta$, and $p_i(\omega), p_{n+1}(\omega)$ are defined as:
\begin{equation}
p_i(\omega) = \frac{w_i} {\sum_{k\in\mathcal{I}^{\text{uc}}} w_k + \omega}, p_{n+1}(\omega) =\frac{\omega} {\sum_{k\in\mathcal{I}^{\text{uc}}} w_k + \omega}.
\end{equation}
The prediction set generated by \ttmethodjp is: $C^\ttmethodjp_{\tilde{w}_{n+1}}(x)$, where:
\begin{equation}
\tilde{w}_{n+1} := \text{Quantile}\left(1-\beta; \sum_{i=1}^{n} \frac{1}{n+1}\delta_{w_{i}} +\frac{1}{n+1}\delta_{\infty}\right).
\end{equation}
Therefore, our goal is to show that this prediction set covers the response variable at the desired coverage rate:
\begin{equation}\label{eq:pcp_jp_valid_cov}
    \mathbb{P}(Y^\text{test} \in C^\ttmethodjp_{\tilde{w}_{n+1}}(X^\text{test})) \geq 1- 2\alpha.
\end{equation}
The proof consists of two steps. In the first step, we show that $C^\ttmethodjp_{w_{n+1}}(X^\text{test}))$ achieves a valid marginal coverage, namely:
\begin{equation}
    \mathbb{P}(Y^\text{test} \in C^\ttmethodjp_{w_{n+1}}(X^\text{test})) \geq 1- 2\gamma.
\end{equation}
In the second step, we show that $C^\ttmethodjp_{\tilde{w}_{n+1}}(X^\text{test}))$ is a super set of $C^\ttmethodjp_{{w}_{n+1}}(X^\text{test}))$ with a high probability. From these two steps, we will conclude~\eqref{eq:pcp_jp_valid_cov}.

We define the matrix of residuals, similarly to \cite{barber2019jackknife, prinster2022jaws}, denoted by $R\in \mathbb{R}^{(n+1)\times(n+1)}$, with entries:
\begin{equation}
R_{i,j} = \begin{cases} 
      +\infty & i=j, \\
      \mathcal{S}(X_i(0), Y_i(0), \hat{f}^{-i,j}) & i \ne j,
   \end{cases}
\end{equation}
where $\hat{f}^{-i,j}$ is the model $\hat{f}$ fitted on all samples except with the points $i$ and $j$ removed, namely, $\{1,...,n+1\}-\{i,j\}$.
For simplicity, we denote $\tilde{w}_i(\omega):=w_i$ for $i\in \{1,..,n\}$ and $\tilde{w}_{n+1}(\omega)=\omega$. We follow the definition in \cite{prinster2022jaws}
of “strange” points $\mathcal{G}(\omega) \subseteq \mathcal{I}^\text{uc} \cup \{ n + 1\}$:
\begin{equation}
\mathcal{G}(\omega) = \left\{ i \in \mathcal{I}^\text{uc} \cup \{ n + 1\} : \tilde{w}_i(\omega) > 0,
\sum_{j\in \mathcal{I}^\text{uc} \cup \{ n + 1\}} p_j(\omega) \mathbbm{1}\left\{R_{ij} > R_{ji}\right\} \geq 1-\gamma \right\}.
\end{equation}

We begin by showing that $Y^\text{test} \in C^\ttmethodjp_{w_{n+1}}(X^\text{test})) \Rightarrow n+1 \notin \mathcal{G}(w_{n+1})$.
Suppose that $Y^\text{test} \in C^\ttmethodjp_{w_{n+1}}(X^\text{test})$. Then, by definition of $C^\ttmethodjp_{w_{n+1}}(X^\text{test}))$:
\begin{equation}
\begin{split}
1-\gamma & > \sum_{j \in \mathcal{I}^\text{uc}}p_{j}(w_{n+1})\mathbbm{1}\{S_j < \mathcal{S}(X^\text{test}, Y^\text{test};\hat{f}^{-j})\}\\
&= \sum_{j\in\mathcal{I}^\text{uc}}p_{j}(w_{n+1})\mathbbm{1}\{R_{j,n+1} < R_{n+1,j}\}\\
&= \sum_{j \in \mathcal{I}^\text{uc}}p_{j}(w_{n+1})\mathbbm{1}\{R_{n+1,j} > R_{j,n+1}\}\\
&=\sum_{j\in \mathcal{I}^\text{uc} \cup \{ n + 1\} }p_{j}(w_{n+1})\mathbbm{1}\{R_{n+1,j} > R_{j,n+1}\}.\\
\end{split}
\end{equation}
Therefore: $n+1 \notin \mathcal{G}(w_{n+1})$, by the definition of $\mathcal{G}$. We deduce that: $\mathbb{P}(Y^\text{test} \in C^\ttmethodjp_{w_{n+1}}(X^\text{test})) \geq \mathbb{P}(n+1 \notin \mathcal{G}(w_{n+1}))$.
In~\cite{prinster2022jaws} it is shown that:
\begin{equation}
    \mathbb{P}(n+1 \notin \mathcal{G}(w_{n+1})) \geq 1- 2\gamma.
\end{equation}
Thus:
\begin{equation}
    \mathbb{P}(Y^\text{test} \in C^\ttmethodjp_{w_{n+1}}(X^\text{test})) \geq \mathbb{P}(n+1 \notin \mathcal{G}(w_{n+1})) \geq 1- 2\gamma.
\end{equation}
We now turn to the second step of the proof. Similarly to Lemma~\ref{lem:non_decreasing_quantile}, we now show that $C^\ttmethodjp_{\omega}(x)$ is a monotonic function of $\omega$, i.e., if $\omega_1 \geq \omega_2$ then $C^\ttmethodjp_{\omega_2}(x)\subseteq C^\ttmethodjp_{\omega_1}(x)$. Suppose that $y \in C^\ttmethodjp_{\omega_2}(x)$. Then:
\begin{equation}
\sum_{i\in \mathcal{I}^\text{uc}}p_{i}(\omega_2)\mathbbm{1}\{S_i < \mathcal{S}(x, y;\hat{f}^{-i})\} < 1-\gamma.
\end{equation}
Since $\omega_1 \geq \omega_2$, we get $p_i(\omega_1) \leq p_i(\omega_2)$ for all $i\in\{1,...,n\}$. Therefore:
\begin{equation}
\sum_{i\in \mathcal{I}^\text{uc}}p_{i}(\omega_1)\mathbbm{1}\{S_i < \mathcal{S}(x, y;\hat{f}^{-i})\}  \leq \sum_{i\in \mathcal{I}^\text{uc}}p_{i}(\omega_2)\mathbbm{1}\{S_i < \mathcal{S}(x, y;\hat{f}^{-i})\} < 1-\gamma.
\end{equation}
Meaning that $y\in C^\ttmethodjp_{\omega_1}(x)$ as well.
Lastly, we recall that according to~\cite[Lemma 2]{romano2019conformalized} the weight $\tilde{w}_{n+1}$ satisfies:
\begin{equation}
    \mathbb{P}(w_{n+1} \leq \tilde{w}_{n+1}) \geq 1- \beta.
\end{equation}
Finally, we combine everything together to get:
\begin{equation}
\begin{split}
     \mathbb{P}\left( Y^\text{test} \in C_{\tilde{w}_{n+1}}^\ttmethodjp(X^\text{test})\right) &\geq \mathbb{P}\left( Y^\text{test} \in C_{\tilde{w}_{n+1}}^\ttmethodjp(X^\text{test}), \tilde{w}_{n+1} \geq w_{n+1} \right) \\
     &\geq \mathbb{P}\left( Y^\text{test} \in C_{w_{n+1}}^\ttmethodjp(X^\text{test}), \tilde{w}_{n+1} \geq w_{n+1} \right) \\
      &= 1- \mathbb{P}\left( Y^\text{test} \notin C_{w_{n+1}}^\ttmethodjp(X^\text{test})\text{ or } \tilde{w}_{n+1} < w_{n+1} \right)\\
      &\geq 1- \mathbb{P}\left( Y^\text{test} \notin C_{w_{n+1}}^\ttmethodjp(X^\text{test})\right) -\mathbb{P}\left(\tilde{w}_{n+1} < w_{n+1} \right)\\
      &\geq 1- (2\gamma) -\left(\beta  \right)\\
      &\geq 1- 2\left(\alpha - \frac{1}{2}\beta \right) -\left(\beta \right)\\
     &= 1- 2\alpha.
\end{split}
\end{equation}


\end{proof}

\subsection{Lemma about weighted quantiles}

\begin{lemma}\label{lem:non_decreasing_quantile}
Suppose that $w_i, S_i \in \mathbb{R}$ for $1\leq i\leq n$. Further, suppose that $\gamma \in [0,1]$, and $\mathcal{I} \subseteq \{1,...,n \}$.
Denote the normalized weights $p_i(\omega), p_{n+1}(\omega)$ as:
\begin{equation}
p_i(\omega) = \frac{w_i} {\sum_{k\in\mathcal{I}} w_k + \omega}, p_{n+1}(\omega) =\frac{\omega} {\sum_{k\in\mathcal{I}} w_k + \omega}.
\end{equation}
Then, the weighted quantile:
\begin{equation}
Q(\omega) := \text{Quantile}\left(\gamma; \sum_{j\in\mathcal{I}} p_i(\omega) \delta_{S_j} + p_{n+1}(\omega) \delta_{\infty}\right),
\end{equation}
is a non-decreasing function of $\omega$, i.e.,
\begin{equation}
\omega_1 \geq \omega_2 \Rightarrow Q(\omega_1) \geq Q(\omega_2). 
\end{equation}
\end{lemma}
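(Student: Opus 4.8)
The plan is to reduce the statement about quantiles to a pointwise monotonicity statement about the underlying weighted CDF. Recall the standard convention that for a discrete distribution the $\gamma$-quantile is the generalized inverse $Q(\omega) = \inf\{t : F_\omega(t) \ge \gamma\}$, where $F_\omega$ is the CDF of the weighted atomic measure $\sum_{j \in \mathcal{I}} p_j(\omega)\delta_{S_j} + p_{n+1}(\omega)\delta_\infty$. The key elementary fact I would invoke is that if two CDFs satisfy $F_{\omega_1}(t) \le F_{\omega_2}(t)$ for every $t$, then their generalized inverses obey $Q(\omega_1) \ge Q(\omega_2)$: the superlevel set $\{t : F_{\omega_1}(t)\ge\gamma\}$ is contained in $\{t : F_{\omega_2}(t)\ge\gamma\}$, and the infimum of a smaller set is at least the infimum of a larger one. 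This holds uniformly, including the degenerate case where the superlevel set is empty and the quantile equals $+\infty$, so I would not need to treat that boundary situation separately.

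The heart of the argument is therefore to establish $F_{\omega_1}(t) \le F_{\omega_2}(t)$ for every finite $t$ whenever $\omega_1 \ge \omega_2$. Since the atom at $+\infty$ contributes to the CDF only at $t = +\infty$, for any finite $t$ I can write $F_\omega(t) = A(t)/(B + \omega)$, where $A(t) := \sum_{j \in \mathcal{I}} w_j \mathbbm{1}\{S_j \le t\}$ and $B := \sum_{k \in \mathcal{I}} w_k$. The numerator $A(t)$ and the shift $B$ do not depend on $\omega$, and, using nonnegativity of the weights (which holds in the application, where $w_i = \mathbb{P}(M=0)/\mathbb{P}(M=0\mid Z=Z_i) \ge 0$), we have $A(t) \ge 0$ and $B + \omega > 0$. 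For fixed $t$ the numerator is constant in $\omega$ while the denominator is increasing, so $F_\omega(t)$ is non-increasing in $\omega$; raising $\omega$ from $\omega_2$ to $\omega_1$ can only shrink the CDF at every finite $t$.

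Combining the two pieces closes the proof: $\omega_1 \ge \omega_2$ yields $F_{\omega_1}(t) \le F_{\omega_2}(t)$ pointwise, which by the generalized-inverse monotonicity yields $Q(\omega_1) \ge Q(\omega_2)$. Conceptually, increasing $\omega$ transfers probability mass from the finite atoms $\{S_j\}$ onto the atom at $+\infty$ (all finite masses scale by the common factor $(B+\omega_2)/(B+\omega_1) \le 1$), which can only push the $\gamma$-quantile to the right. I do not anticipate a substantial obstacle; the only point requiring a little care is fixing the quantile convention and confirming that the weights are nonnegative, so that $F_\omega$ is a genuine (sub-)distribution function and the ratio $A(t)/(B+\omega)$ is manifestly monotone in $\omega$. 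Everything else is a one-line consequence of the explicit form of the normalized weights.
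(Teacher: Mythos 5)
Your proof is correct and takes essentially the same route as the paper's: the paper shows each normalized weight $p_i(\omega)$, and hence every partial sum of the (sorted) score masses, is non-increasing in $\omega$, so the first index at which the cumulative mass reaches $\gamma$ can only move right; your pointwise bound $F_{\omega_1}(t)\le F_{\omega_2}(t)$ combined with monotonicity of the generalized inverse is exactly this argument written in CDF language. The only cosmetic differences are that you package the term-by-term inequality as the single ratio $A(t)/(B+\omega)$ rather than sorting indices, and that you explicitly flag the nonnegativity of the weights, which the paper's proof also uses implicitly.
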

\begin{proof}
Without loss of generality, suppose that $\{S_i\}_{i=1}^{n}$ are sorted in an increasing order, i.e., $S_{i+1} \geq S_i$. For the sake of this proof, we define $S_{n+1}=\infty$. For ease of notation, denote $\mathcal{I}' = \mathcal{I} \cup \{n+1\}$ and $\mathcal{I}_i = \mathcal{I}' \cap \{1,...,i\}$. The formal definition of $Q(\omega)$ is:
\begin{equation}
Q(\omega) = S_{\min \left\{ i\in \mathcal{I}' : \sum_{k\in \mathcal{I}_i}{p_{k}(\omega)}\geq \gamma \right\}}.
\end{equation}
Suppose that $\omega_1 \geq \omega_2$. Then, by the definition of $p_i$, we get that for all $i\in \mathcal{I}$:
\begin{equation}
p_i(\omega_1) \leq p_i (\omega_2).
\end{equation}
Therefore for all $i\in \mathcal{I}$:
\begin{equation}
\sum_{k\in\mathcal{I}_i}{p_{k}(\omega_1)} \leq \sum_{k\in\mathcal{I}_i}{p_{k}(\omega_2)}.
\end{equation}
For $i={n+1}$ the above equation is trivially satisfied as $\sum_{k\in\mathcal{I}_{n+1}}{p_{k}(\omega_1)} = \sum_{k\in\mathcal{I}_{n+1}}{p_{k}(\omega_2)}=1.$ Thus,
\begin{equation}
\begin{split}
    \min &\left\{ i\in \{1,...,{n+1}\} : \sum_{k\in \mathcal{I}_i}{p_{k}(\omega_1)}\geq \gamma \right\} \geq \\
    &\min \left\{ i\in \{1,...,{n+1}\} : \sum_{k\in \mathcal{I}_i}{p_{k}(\omega_2)}\geq \gamma \right\}.
    \end{split}
\end{equation}
Therefore,
\begin{equation}
\begin{split}
    S_{\min \left\{ i\in \{1,...,{n+1}\} : \sum_{k\in \mathcal{I}_i}{p_{k}(\omega_1)}\geq \gamma \right\}} \geq
    S_{\min \left\{ i\in \{1,...,{n+1}\} : \sum_{k\in \mathcal{I}_i}{p_{k}(\omega_2)}\geq \gamma \right\}},
    \end{split}
\end{equation}
and finally,
\begin{equation}
Q(\omega_1) \geq Q(\omega_2).
\end{equation}
\end{proof}

\subsection{Relaxing the conditional independence assumption}\label{sec:relaxing_cond_indep_assumption}

In this section, we present two relaxations for conditional independence assumption, $(X(0),Y(0)) \indep M \mid Z$, in Theorem~\ref{thm:validity}. The first relaxation allows $X(0)$ to depend on $M$ given the PI at the expense of using the following weights~$\mathbb{P}(M\mid X=x,Z=x)$. The second result relaxes the conditional independence to hold approximately, up to some error, denoted by $\varepsilon$. We begin with formalizing the first result and then turn to the second one.

\begin{theorem}[Robustness of \ttmethod to dependence of the features and corruption indicator]\label{thm:relaxed_validity_xz}
Suppose that $\{({X}_i(0),{X}_i(1), {Y}_i(0),{Y}_i(1),Z_i,  M_i)\}_{i=1}^{n+1}$ are exchangeable, $(Y(0) \indep M) \mid (X,Z)$, the features are uncorrupted, $X(0)=X(1)$, and $P_{X(0),Z}$ is absolutely continuous with respect to $P_{X(0),Z \mid M=0}$. 
Denote by $C^\ttmethod(X^\textup{test})$ the prediction set constructed according to Algorithm~\ref{alg:weighted2} with the weights:
\begin{equation}
w_i := \frac{\mathbb{P}(M=0)}{\mathbb{P}(M=0 \mid X(0)=X_i(0), Z=Z_i)}
\end{equation}
Then, this prediction set achieves the desired coverage rate:
\begin{equation}
\mathbb{P}(Y^\textup{test} \in C^\ttmethod(X^\textup{test})) \geq 1-\alpha.
\end{equation}
\end{theorem}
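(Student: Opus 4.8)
The plan is to mirror the proof of Theorem~\ref{thm:validity} almost verbatim, with the single structural change that the covariate shift is now driven by the pair $(X(0),Z)$ rather than by $Z$ alone. As in that proof, I would first reparametrize the threshold in~\eqref{eq:Q_i} as a function $Q(\omega)$ of a generic test weight $\omega$, so that $Q_i \equiv Q(w_i)$, and then invoke the monotonicity of $Q$ from Lemma~\ref{lem:non_decreasing_quantile} to pull the outer quantile inside, obtaining $Q^\ttmethod = Q(\tilde{w}_{n+1})$ where
\begin{equation}
\tilde{w}_{n+1} := \text{Quantile}\left(1-\beta; \sum_{i\in\mathcal{I}_2} \tfrac{1}{|\mathcal{I}_2|+1}\delta_{w_{i}} +\tfrac{1}{|\mathcal{I}_2|+1}\delta_{\infty} \right).
\end{equation}
This part is identical to Theorem~\ref{thm:validity} and does not depend on which assumption is used, so $C^\ttmethod(X^\textup{test}) = \{y : \mathcal{S}(X^\textup{test},y) \le Q(\tilde{w}_{n+1})\}$.

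The crux, and the only genuinely new step, is to verify that the relaxed assumption $(Y(0)\indep M)\mid (X,Z)$ together with the redefined weights still fits the weighted-conformal template of~\cite{tibshirani2019conformal}. I would treat $(X(0),Z)$ jointly as the covariate undergoing the shift and $Y(0)$ as the response. The assumption $(Y(0)\indep M)\mid(X,Z)$ is precisely the statement $P_{Y(0)\mid X,Z,M=0}=P_{Y(0)\mid X,Z}$, i.e.\ the conditional law of the response given the covariate is invariant between the clean calibration distribution and the test distribution. The likelihood ratio between the two covariate laws is then, by the same Bayes-rule computation as in~\eqref{eq:pcp_weight},
\begin{equation}
\frac{dP_{X(0),Z}}{dP_{X(0),Z\mid M=0}}(x,z) = \frac{\mathbb{P}(M=0)}{\mathbb{P}(M=0\mid X(0)=x,Z=z)} = w(x,z),
\end{equation}
which is well-defined by the absolute-continuity hypothesis on $P_{X(0),Z}$ with respect to $P_{X(0),Z\mid M=0}$. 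Because the features are uncorrupted, $X_i^\textup{obs}=X_i(0)$, so the observed scores $S_i=\mathcal{S}(X_i(0),Y_i(0);\hat f)$ and the weights $w_i$ are genuine functions of the observed calibration data; the fact that $\mathcal{S}$ happens to ignore $Z$ is harmless, since any measurable function of the covariate--response pair is an admissible nonconformity score.

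With this identification in place, the remainder is a line-by-line replay of Theorem~\ref{thm:validity}. Since the full tuples are exchangeable and each $w_i$ is a function of $(X_i(0),Z_i)$, the weights $\{w_i\}_{i\in\mathcal{I}_2\cup\{n+1\}}$ are exchangeable, so \cite[Lemma 2]{romano2019conformalized} gives $\mathbb{P}(w_{n+1}\le\tilde{w}_{n+1})\ge 1-\beta$. Applying \cite[Theorem 1]{tibshirani2019conformal} to the covariate $(X(0),Z)$, response $Y(0)$, weight $w$, and nominal level $1-\alpha+\beta$ yields $\mathbb{P}(Y^\textup{test}\in\{y:\mathcal{S}(X^\textup{test},y)\le Q(w_{n+1})\})\ge 1-\alpha+\beta$. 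Finally, combining these two facts through the monotonicity of $Q$ and a union bound --- exactly the closing chain of inequalities in Appendix~\ref{sec:validity_proof} --- gives coverage at least $1-(\alpha-\beta)-\beta=1-\alpha$.

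I expect the main (and essentially only) obstacle to be conceptual rather than technical: confirming that enlarging the conditioning set from $Z$ to $(X,Z)$ keeps everything within the covariate-shift framework while still producing weights $w_i$ that depend only on observed quantities. Once $(X(0),Z)$ is accepted as the shifting covariate and the new likelihood ratio is computed, no individual step of the original argument needs to be altered.
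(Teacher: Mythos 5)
Your proposal is correct and follows essentially the same route as the paper: the paper's own proof is exactly a replay of Theorem~\ref{thm:validity} with the covariate shift reinterpreted as acting on the pair $(X(0),Z)$ and the weights redefined accordingly, which is precisely your argument (your write-up is in fact more explicit than the paper's, which simply defers to Appendix~\ref{sec:validity_proof} after noting the change of covariate). No gaps.
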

\begin{proof}
This result follows from the proof of Theorem~\ref{thm:validity}, except for the following changes. Here, we get that then there is a covariate shift between the calibration and test samples, where the covariates are $X(0),Z$. Therefore, the following weight $w_{n+1}$ of this setup:
\begin{equation}
w_{n+1} = \frac{\mathbb{P}(M=0)}{\mathbb{P}(M=0 \mid X(0)=X_{n+1}(0), Z=Z_{n+1})}
\end{equation}
satisfies:
\begin{equation}
     \mathbb{P}\left( Y^\text{test} \in \left\{y : \mathcal{S}(X^\text{test}, y) \leq Q(w_{n+1}) \right\}\right) \geq 1-\alpha +\beta,
\end{equation}
where $Q$ is defined as in Appendix~\ref{sec:validity_proof}. The rest of the proof is as in Appendix~\ref{sec:validity_proof}.
\end{proof}

We now turn to present an initial extension of Theorem~\ref{thm:validity} to a setting where the conditional independence assumption is not fully satisfied. For the simplicity of this extension, we assume $X(0) \indep M \mid Z=z$.

We note that the independence assumption $ Y(0) \indep  M \mid Z=z$ is equivalent to assuming that the density of $Y(0) \mid M=m, Z=z$ is the same for $m \in \{0,1\}$, formally:

$$f_{Y(0) \mid M=0, X=x,Z=z}(y; 0,x,z) = f_{Y(0) \mid M=1,X=x, Z=z}(y; 1,x,z). $$

In this extension, we relax this assumption and instead require that $\forall x\in \mathcal{X}$, there exists $\varepsilon_x \in\mathbb{R}$ such that the difference between the two densities is bounded by $\varepsilon_x$:
$$\forall y\in\mathcal{Y}, z\in\mathcal{Z}: | f_{Y(0) \mid M=0, X=x,Z=z}(y; 0,x,z) - f_{Y(0) \mid M=1,X=x, Z=z}(y; 1,x,z)  | \leq \varepsilon_x .$$

\begin{theorem}[Robustness of \ttmethod to conditional independence violation]\label{thm:relaxed_validity_eps}
Suppose that $\{({X}_i(0),{X}_i(1), {Y}_i(0),{Y}_i(1),Z_i,  M_i)\}_{i=1}^{n+1}$ are exchangeable, the features are independent of the corruption indicator given the PI, $X(0) \indep M \mid Z$, the probability $P_{Z}$ is absolutely continuous with respect to $P_{Z \mid M=0}$, and $\forall x\in\mathcal{X}$ there exists $\varepsilon_x \in\mathbb{R} $ such that:
$$\forall y\in\mathcal{Y}, z\in\mathcal{Z}| f_{Y(0) \mid M=0, X=x,Z=z}(y; 0,x,z) - f_{Y(0) \mid M=1,X=x, Z=z}(y; 1,x,z)  | \leq \varepsilon_x .$$
Then, the coverage rate of the prediction set $C^\ttmethod(X^\textup{test})$ constructed according to Algorithm~\ref{alg:weighted2} is lower bounded by:
\begin{equation}
\mathbb{P}(Y^\textup{test} \in C^\ttmethod(X^\textup{test})) \geq 1-\alpha- \mathbb{E}_{X,Z}[ | C^\ttmethod (X) | \varepsilon_X \mathbb{P}(M=1\mid X,Z)] .
\end{equation}
\end{theorem}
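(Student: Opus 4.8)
The plan is to condition on the test features $X^\textup{test}$, the test privileged information $Z^\textup{test}$, and the entire calibration sample, and exploit the fact that the prediction set $C^\ttmethod(X^\textup{test})$ is then a fixed (deterministic) set: by construction it depends only on $X^\textup{test}$ and the calibration data through $\{y:\mathcal{S}(X^\textup{test},y;\hat f)\leq Q^\ttmethod\}$, and never on the unobserved $Y^\textup{test}$. Writing the inner conditional coverage as $\int_{C^\ttmethod(x)} f_{Y(0)\mid X=x,Z=z}(y)\,dy$, the first move is to decompose the true clean-response density into its two mixture components over $M$. Using $X(0)\indep M\mid Z$ (which in particular gives a valid mixture) one obtains
\begin{equation}
f_{Y(0)\mid X=x,Z=z}(y) = f_{Y(0)\mid M=0,X=x,Z=z}(y) + \big[f_{Y(0)\mid M=1,X=x,Z=z}(y) - f_{Y(0)\mid M=0,X=x,Z=z}(y)\big]\,\mathbb{P}(M=1\mid X=x,Z=z).
\end{equation}

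The second step (error control) integrates this identity over $C^\ttmethod(x)$. The leading term contributes the coverage of $C^\ttmethod$ against a \emph{phantom} test response drawn from the $M=0$ conditional $f_{Y(0)\mid M=0,X=x,Z=z}$, while the bracketed difference is bounded in absolute value by $\varepsilon_x$ by hypothesis, so its integral over $C^\ttmethod(x)$ is at most $|C^\ttmethod(x)|\,\varepsilon_x\,\mathbb{P}(M=1\mid x,z)$. Since we want a lower bound on the true coverage, I keep only the negative direction of this term. Averaging over $(X^\textup{test},Z^\textup{test})$ and the calibration data then produces precisely the subtracted expectation $\mathbb{E}_{X,Z}[|C^\ttmethod(X)|\,\varepsilon_X\,\mathbb{P}(M=1\mid X,Z)]$ in the statement.

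The third step (phantom coverage) shows that the leading term is at least $1-\alpha$ by reducing it to Theorem~\ref{thm:validity}. The key observation is that the phantom world satisfies the \emph{exact} conditional independence used there: conditional on $Z^\textup{test}=z$, the phantom feature-response pair has the same law as the uncorrupted calibration pairs, because $X(0)\indep M\mid Z$ forces $P_{X(0)\mid Z=z}=P_{X(0)\mid Z=z,M=0}$ while the phantom response is, by construction, drawn from the $M=0$ conditional $f_{Y(0)\mid M=0,x,z}$. Consequently the conditional-on-$Z$ law of the phantom test score coincides with that of the calibration scores. Because $Z^\textup{test}\sim P_Z$ with $P_Z\ll P_{Z\mid M=0}$ is unchanged, the weighted-conformal covariate-shift hypotheses hold for the phantom, and the monotonicity/quantile-of-quantiles argument (Lemma~\ref{lem:non_decreasing_quantile}) underlying Theorem~\ref{thm:validity} applies verbatim; since $C^\ttmethod(X^\textup{test})$ is identical in the real and phantom scenarios (it ignores the test response), the phantom coverage is at least $1-\alpha$. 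Chaining the phantom coverage bound with the error bound gives the claim.

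The main obstacle is the justification in the third step: making rigorous that substituting the genuine test response by a phantom draw from the $M=0$ conditional leaves the validity proof of Theorem~\ref{thm:validity} intact. This requires isolating precisely where that proof invokes $(X(0),Y(0))\indep M\mid Z$ — namely only to equate the conditional-on-$Z$ score laws of test and calibration points via \cite{tibshirani2019conformal} — and verifying that the phantom construction restores exactly this equality under the weaker hypothesis $X(0)\indep M\mid Z$. A secondary bookkeeping point is the measure-theoretic regularity: working with densities with respect to Lebesgue measure so that the $\varepsilon_x$ density gap integrates to $|C^\ttmethod(x)|\,\varepsilon_x$, with $|C^\ttmethod(x)|$ read as the Lebesgue measure (length) of the set.
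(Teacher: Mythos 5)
Your proposal is correct and takes essentially the same route as the paper's proof: the paper likewise introduces a phantom response $V$ drawn from the $M=0$ conditional of $Y(0)$ given $(X,Z)$, invokes Theorem~\ref{thm:validity} for this phantom (valid precisely because $X(0) \indep M \mid Z$ makes the phantom pair match the clean calibration law conditional on $Z$), and bounds the gap between phantom and true coverage by $\mathbb{E}_{X,Z}[\,|C^\ttmethod(X)|\,\varepsilon_X\,\mathbb{P}(M=1\mid X,Z)]$ using the assumed density bound. The only cosmetic difference is that you decompose the density as a mixture over $M$ before integrating while the paper decomposes the coverage probability after integrating---these are algebraically identical---and your explicit identification of where Theorem~\ref{thm:validity}'s proof uses the conditional independence is, if anything, slightly more careful than the paper's terse appeal to that theorem.
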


\begin{proof}
We define by $V$ a variable that is drawn from:
$$ (Z,V) \sim P_Z \cross P_{Y(0) \mid Z=z, M=0} .$$
By the definition of $V$, and according to Theorem~\ref{thm:validity}, \ttmethod covers $V$ with $1-\alpha$ probability:
$$ \mathbb{P}(V \in C^\ttmethod (X)) \geq 1-\alpha.$$

Notice that $V \mid X=x, Z=z$ equals in distribution to $ Y(0) \mid M=0, X=x,Z=z$ by definition. We denote the coverage rate of \ttmethod over $ Y(0) \mid M=0, X=x,Z=z$ by $\beta_{0,x,z}$:
$$\beta_{0,x,z} := \mathbb{P}( Y(0) \in C^\ttmethod (X) \mid X=x,Z=z, M=0)=\mathbb{P}( V \in C^\ttmethod (X) \mid X=x,Z=z).$$

Similarly, we denote the coverage rate of \ttmethod over $Y(0) \mid M=1, X=x,Z=z$ by $\beta_{1,x,z}$:
$$\beta_{1,x,z} := \mathbb{P}( Y(0) \in C^\ttmethod (X) \mid M=1, X=x,Z=z) = \int_{y \in C^\ttmethod (x) }  f_{Y(0) \mid M=1,X=x, Z=z}(y; 1,x,z) dy $$

This probability can be lower bounded by:

\begin{equation}
\begin{split}
 \int_{y \in C^\ttmethod (x)  }  f_{Y(0) \mid M=1,X=x, Z=z}(y; 1,x,z) dy  &\geq  \int_{y \in C^\ttmethod (x)  }  (f_{Y(0) \mid M=0,X=x, Z=z}(y;0,x,z)  - \varepsilon_x) dy
\\
& = \beta_{0,x,z} - | C^\ttmethod (x) | \varepsilon_x  
\end{split}
\end{equation}

We now compute the conditional coverage rate of \ttmethod:
\begin{equation}
\begin{split}
\mathbb{P}( Y(0) \in C^\ttmethod (X) \mid Z=z,X=x) 
& = \mathbb{P}( Y(0) \in C^\ttmethod (X) \mid M=0,Z=z,X=x)\mathbb{P}(M=0 \mid X=x,Z=z) \\
&+ \mathbb{P}( Y(0) \in C^\ttmethod (X) \mid M=1,Z=z,X=x) \mathbb{P}(M=1\mid X=x,Z=z) \\
& \geq \beta_{0,x,z}\mathbb{P}(M=0\mid X=x,Z=z) \\
&+ (\beta_{0,x,z} - | C^\ttmethod (x) | \varepsilon_x  )\mathbb{P}(M=1\mid X=x,Z=z) \\
& = \beta_{0,x,z}\mathbb{P}(M=0\mid X=x,Z=z) \\
&+ \beta_{0,x,z}\mathbb{P}(M=1\mid X=x,Z=z)  - | C^\ttmethod (x) | \varepsilon_x \mathbb{P}(M=1\mid X=x,Z=z) \\
& = \beta_{0,x,z}(\mathbb{P}(M=0,X=x,Z=z) + \mathbb{P}(M=1\mid X=x,Z=z) ) \\
&- | C^\ttmethod (x) | \varepsilon_x \mathbb{P}(M=1 \mid X=x,Z=z) \\
& = \beta_{0,x,z} - | C^\ttmethod (x) | \varepsilon_x \mathbb{P}(M=1 \mid X=x,Z=z) \\
&= \mathbb{P}(V \in C^\ttmethod (X) \mid X=x, Z=z)- | C^\ttmethod (x) | \varepsilon_x \mathbb{P}(M=1 \mid X=x,Z=z)
\end{split}
\end{equation}

By marginalizing this result we get:
\begin{equation}
\begin{split}
\mathbb{P}( Y(0) \in C^\ttmethod (X))  &= 
\int_{x \in \mathcal{X}, z\in \mathcal{Z} } { \mathbb{P}( Y(0) \in C^\ttmethod (X) \mid Z=z,X=x) f_{X, Z}(x,z)  dx dz }\\
&\geq \int_{x \in \mathcal{X}, z\in \mathcal{Z} } { [\mathbb{P}(V \in C^\ttmethod (X) \mid X=x, Z=z)] f_{X, Z}(x,z)  dx dz  }\\
&-\int_{x \in \mathcal{X}, z\in \mathcal{Z} } { [ | C^\ttmethod (x) | \varepsilon_x \mathbb{P}(M=1 \mid X=x,Z=z)] f_{X, Z}(x,z)  dx dz  } \\
&\geq 1-\alpha - \mathbb{E}_{X,Z}[ | C^\ttmethod (X) | \varepsilon_X \mathbb{P}(M=1\mid X,Z)] 
\end{split}
\end{equation}
\end{proof}

This result provides a lower bound for the coverage rate of \ttmethod in the setting where the conditional independence assumption is not exactly satisfied. Intuitively, as $\varepsilon_x$ decreases, i.e., as the two distributions $Y(0) \mid M=m, Z=z$ for $m\in \{0,1\}$ are closer to each other, the lower bound is tighter, and closer to the target level. Similarly, as the two distributions diverge, the lower bound becomes looser.

\section{Algorithms}

\subsection{Two-Staged Conformal}\label{sec:alg_baseline}

In this section, we outline the two-staged conformal prediction algorithm (\ttbaseline).

\begin{algorithm}[h]
	 \caption{Two-Staged Conformal Prediction (\ttbaseline)}
	\label{alg:baseline}
	
	\textbf{Input:}
	\begin{algorithmic}
		\State Data $({X}^\text{obs}_i, {Y}^\text{obs}_i, Z_i, M_i) \in \mathcal{X} \cross \mathcal{Y} \cross \mathcal{Z}\cross\{0,1\}, 1\leq i \leq n$.
            \State Weights $\{w_i\}_{i=1}^{n}$.
		\State Miscoverage level $\alpha \in (0,1)$.
  		\State Level $\beta \in (0,\alpha)$.
            \State An algorithm $\hat{f}^{Y}$ for $Y$.
            \State An algorithm $\hat{f}^{Z}$ for $Z$.
            \State A score function $\mathcal{S}$.
		\State A test point $X^\text{test}=x$.
	\end{algorithmic}
	
	\textbf{Process:}
	\begin{algorithmic}
            \State Randomly split $\{1,...,n\}$ into two disjoint sets $\mathcal{I}_1, \mathcal{I}_2$.
            \State Fit the base algorithm $\hat{f}^{Y}$ on $\{({X}^\text{obs}_i, {Y}^\text{obs}_i)\}_{i\in\mathcal{I}_1}$ and the algorithm $\hat{f}^{Z}$ on $\{({X}^\text{obs}_i, Z_i)\}_{i\in\mathcal{I}_1}$.
            \State Compute the scores $S^{Z}_i=\mathcal{S}({X}^\text{obs}_i, {Z}_i;\hat{f}^{Z})$, $S_i=\mathcal{S}({X}^\text{obs}_i, {Y}^\text{obs}_i;\hat{f}^{Y})$ for the calibration samples, $i\in \mathcal{I}_2$.
            \State Compute a threshold $Q^{Z}$ as the $(1-{1}/{|\mathcal{I}_2|})(1-\beta)$-th empirical quantile of the scores $\{S_i\}_{i\in \mathcal{I}_2}$.
            \State Construct a prediction set for $Z$: $C^Z(x)=\{z : \mathcal{S}(x, z, \hat{f}^{Z}) \leq Q^Z\}$.
            \State Compute the conservative test weight $w^\text{conservative}(x) := \max_{z\in C^Z(x)} w(z)$.
            \State Compute the test threshold $Q^\texttt{WCP}_\text{conservative}$ according to~\eqref{eq:wcp_q} with the clean calibration points $\{(X_i, Y_i, w(Z_i)\}_{i\in \mathcal{I}_2^\text{uc}}$, and the conservative test weight, $w^\text{conservative}(x)$, with a nominal coverage level of $1-\alpha+\beta$.
	\end{algorithmic}
	
	\textbf{Output:}
	\begin{algorithmic}
		\State Prediction set $C^\ttbaseline(x)=\{y: \mathcal{S}(x,y;\hat{f}^{Y}) \leq Q^\texttt{WCP}_\text{conservative}\}$.
	\end{algorithmic}
\end{algorithm}

\subsection{Efficient Privileged Conformal Prediction}\label{sec:efficient_pcp}

Since the complexity of Algorithm~\ref{alg:weighted2} is squared in the number of calibration samples, here, we provide a more efficient algorithm with a complexity that is linear in the number of calibration samples. The efficient version is detailed in Algorithm~\ref{alg:efficient_pcp}. The proof of Theorem~\ref{thm:validity} in Section~\ref{sec:validity_proof} shows that these two algorithms are identical, in the sense that they produce exactly the same outputs.

\begin{algorithm}[h]
	 \caption{Efficient Privileged Conformal Prediction (\ttmethod)}
	\label{alg:efficient_pcp}
	
	\textbf{Input:}
	\begin{algorithmic}
		\State Data $({X}^\text{obs}_i, {Y}^\text{obs}_i, Z_i, M_i) \in \mathcal{X} \cross \mathcal{Y} \cross \mathcal{Z}\cross\{0,1\}, 1\leq i \leq n$.
            \State Weights $\{w_i\}_{i=1}^{n}$.
		\State Miscoverage level $\alpha \in (0,1)$.
  		\State Level $\beta \in (0,\alpha)$.
            \State An algorithm $\hat{f}$.
            \State A score function $\mathcal{S}$.
		\State A test point $X^\text{test}=x$.
	\end{algorithmic}
	
	\textbf{Process:}
	\begin{algorithmic}
            \State Randomly split $\{1,...,n\}$ into two disjoint sets $\mathcal{I}_1, \mathcal{I}_2$.
            \State Fit the base algorithm $\hat{f}$ on $\{({X}^\text{obs}_i, {Y}^\text{obs}_i)\}_{i\in\mathcal{I}_1}$.
            \State Compute the scores $S_i=\mathcal{S}({X}^\text{obs}_i, {Y}^\text{obs}_i;\hat{f})$ for the calibration samples, $i\in \mathcal{I}_2^\text{uc}$.  
            \State Compute an estimated test weight based on the calibration samples: $\tilde{w}_{n+1}:= \text{Quantile}\left(1-\beta; \sum_{i \in \mathcal{I}_2} \frac{1}{n_2+1}\delta_{w_i}+\frac{1}{n_2+1}\delta_{\infty}\right)$
            \State Compute $Q^\ttmethod := Q(\tilde{w}_{n+1})$, where $Q$ is defined in~\eqref{sec:general_q_pcp}.
	\end{algorithmic}
	
	\textbf{Output:}
	\begin{algorithmic}
		\State Prediction set $C^\ttmethod(x)=\{y: \mathcal{S}(x,y;\hat{f}) \leq Q^\ttmethod\}$.
	\end{algorithmic}
\end{algorithm}

\subsection{Privileged Conformal Prediction for scarce data}\label{sec:pcp_scarce_algorithm}
In this section, we describe the leave-one-out privileged conformal prediction algorithm (\ttmethodjp), which is designed to handle scarce data more efficiently.
This procedure is summarized in Algorithm~\ref{alg:weighted2_j}.

\begin{algorithm}[h]
	 \caption{Privileged Conformal Prediction for scarce data (\ttmethodjp)}
	\label{alg:weighted2_j}
	
	\textbf{Input:}
	\begin{algorithmic}
		\State Data $({X}^\text{obs}_i,  {Y}^\text{obs}_i,Z_i, M_i) \in \mathcal{X} \cross \mathcal{Y} \cross \mathcal{Z}\cross\{0,1\}, 1\leq i \leq n$, weights $\{w_i\}_{i=1}^{n}$, miscoverage level $\alpha \in (0,1)$, level $\beta \in (0,\alpha)$, an algorithm $\hat{f}$, a score function $\mathcal{S}$, and a test point $X^\text{test}=x$.
	\end{algorithmic}
	
	\textbf{Process:} \\
        \For{$i\leftarrow 1$ \KwTo $n$}{
             Define the current training set as $\mathcal{I}=\{1,...,i-1,i+1,...,n\}$.\\
             Fit the base algorithm $\hat{f}$ on ${(X_j, Y_j)}_{j\in\mathcal{I}_1}$ to obtain $\hat{f}^{-i}$.\\
             Compute the score $S_i=\mathcal{S}({X}^\text{obs}_i, {Y}^\text{obs}_i;\hat{f}^{-i})$. \\
             
        }
        Guess the weight of the $n+1$ sample: $\tilde{w}_{n+1}=\text{Quantile}\left(1-\beta; \sum_{i\in \{1,...,n\}} \frac{1}{n+1} \delta_{w_i}+\frac{1}{n+1} \delta_{\infty} \right)$.\\
        Compute: $p_i = \frac{w_i}{\sum_{j \in \mathcal{I}^\text{uc}} w_j + \tilde{w}_{n+1}}$, for $i\in\{1,...,n\}$.\\
        Define the threshold $\gamma := \alpha - \frac{1}{2}\beta $.\\
	\textbf{Output:}
	\begin{algorithmic}
	\State Prediction set $C^\ttmethodjp(x)=\left\{y \in \mathcal{Y}: \sum_{i \in \mathcal{I}^\text{uc}}p_{i}\mathbbm{1}\{S_i < \mathcal{S}(x, y;\hat{f}^{-i})\} < 1-\gamma \right\}$.
	\end{algorithmic}
\end{algorithm}

\section{Datasets details}\label{sec:dataset_details}

\subsection{General real dataset details}\label{sec:real_dataset_details}
Table~\ref{tab:real_datasets_info} displays the size of each data set, the feature dimension, and the feature that is used as privileged information in the tabular data experiments.

\begin{table}[htbp]
\caption{Information about the real data sets.}
\setstretch{1.5}
  \centering
\scalebox{0.8}{
\centering
\begin{tabular}{cccc}
    \toprule[1.1pt]
    \textbf{Dataset Name} & \textbf{\# Samples} & $\boldsymbol{X/Z/Y}$ \textbf{Dimensions}  & $\boldsymbol{Z}$ \textbf{description} \\
    \midrule
    \textbf{facebook1~\cite{facebook_data}}  & 40948 & 52/1/1 & Number of posts comments\\
    \textbf{facebook2~\cite{facebook_data}}  & 81311 & 52/1/1 & Number of posts comments \\
    
    \textbf{Bio~\cite{bio_data}}  & 45730 & 8/1/1 & Fractional area of exposed non polar residue \\
    
    \textbf{House~\cite{house_data}}  & 21613 & 17/1/1 & Square footage of the apartments interior living space \\

    \textbf{Meps19~\cite{meps19_data}}  & 15785 & 138 /1/1 & Overall rating of feelings \\
    \textbf{Blog~\cite{blog_data}}  & 52397 & 279/1/1 & The time between the blog post publication and base-time \\
    
    \textbf{IHDP~\cite{hill2011bayesian}}  & 747 & 24/1/1 & Birth weight \\
    \textbf{Twins~\cite{twins, louizos2017causal}}  & 12042 & 50/1/1 & The birth weight of the lighter twin \\
    \textbf{NSLM~\cite{yeager2019national}}  & 10391 & 10/1/1 &  Synthetic normally distributed random variable \\
    \textbf{CIFAR-10N~\cite{wei2022learning}}  & 50000 & 32x32x3/1/2 & Annotation time \& Label variability \\
    \textbf{CIFAR-10C~\cite{hendrycks2019robustness}}  & 40000 & 32x32x3/1/2 & Corruption severity \& type \\

    \bottomrule[1.1pt]
    \end{tabular}%
}
 
    \label{tab:real_datasets_info}

\end{table}

\subsection{CIFAR-10N dataset details}\label{sec:cifar10}

CIFAR-10N~\cite{wei2022learning} is a variation of the CIFAR-10~\cite{krizhevsky2009learning} in which the labels are given by human annotators. In this task, $X_i$ is an image from the CIFAR-10 dataset. The response $Y_i\in\{1,...,10\}$ is the noisy image label chosen by the first human annotator. The ratio of noisy samples is 17.23\%. The privileged information $Z_i$ has two features: the working time of the first annotator, and the number of different labels chosen by the three human annotators.

\subsection{CIFAR-10C dataset details}\label{sec:cifar10c}

CIFAR-10C~\cite{hendrycks2019robustness} is a variation of the CIFAR-10~\cite{krizhevsky2009learning} dataset in which the images are contaminated by artificial corruptions. Here, we randomly apply one of the following corruptions for 15\% of the images: snow, defocus blur, pixelate, or fog. The corruption severity level is either 4 or 5, chosen with equal probability. A label of a corrupted image is flipped according to the severity and corruption type. The severities 4,5 are assigned the values 0.92, and 0.95, respectively, and the snow, defocus blur, pixelate, and fog corruptions are assigned with 0.95, 0.93, 0.9, 0.93, 0.94, respectively. The flip probability is the multiplication of the value assigned with the corresponding severity and corruption type. In total, 13.09\% of the labels are flipped. Importantly, in training time, both the image and the label are corrupted, while at inference time, only the image is corrupted and the performance is computed with respect to the clean label. In the dispersive noise setting, the label is uniformly flipped into a wrong one. In the contractive noise setting, if the image is affected by either snow or defocus blur corruption, then the noisy label is deterministically set to 2, or to 1 if the original label was already 2. If the image is corrupted by either pixelate, or fog, then the noisy label is deterministically set to 7, or to 6 if the original label was already 7. The privileged information contains two features: the severity level and the corruption type. Since CIFAR-10C contains only 10000 samples, we add 30000 clean samples from CIFAR-10 to our dataset. This way, 60\% of the 10000 CIFAR-10C images are corrupted, while the other 30000 CIFAR-10 samples are clean. In total, 15\% of the images are corrupted.

\subsection{IHDP dataset details}\label{sec:ihdp}

The Infant Health and Development Program (IHDP) dataset~\cite{hill2011bayesian}, is a semi-synthetic data containing in which the response variable is the future cognitive test scores. The feature vector includes information about the child such as child—birth weight, head circumference, weeks born preterm, birth order, first born indicator, neonatal health index, twin status, and more. The treatment $M$ is the specialist home visits indicator.
The privileged information is defined as the covariate in $X_i$ with the highest correlation to $Y_i$. We then remove this feature from $X_i$, so it cannot be used at inference time.
Since this dataset is semi-synthetic, every sample point includes both potential outcomes $Y_i(0)$, $Y_i(1)$ for every sample $i$. We therefore need to choose which samples are treated with $M=0$ and which are treated with $M=1$. In section~\ref{sec:general_exp_setup} we explain how $M$ is chosen to intentionally induce a distribution shift between the observed and test distributions.

\subsection{Twins dataset details}\label{sec:twins}

The Twins dataset contains information about twin births in the USA between 1989 and 1991. The raw data was provided by~\cite{twins}, and~\cite{louizos2017causal} introduced it as a new benchmark. In this dataset, the treatment $T=1$ indicates for being born the heavier twin, the covariates $X$ include features about the twins and their parents, and the outcome corresponds to the mortality of each of the twins in their first year of life. Since the records of the two twins are available, their mortality rates are considered as two potential outcomes, where the treatment is the indicator of being born heavier. We follow the protocol of~\cite{louizos2017causal} and focus only on twins with birth weight less than 2kg. The privileged information variable is set to the birth weight of the lighter twin. Since both potential outcomes are observed in the data, we must selectively hide one of them to simulate an observational study. We choose the treatment probability as explained in Section~\ref{sec:general_exp_setup}.

\subsection{NSLM dataset details}\label{sec:nslm}

The National Study of Learning Mindsets (NSLM) dataset~\cite{yeager2019national} is a semi-synthetic data that was analyzed in the 2018 Atlantic Causal Inference Conference workshop on heterogeneous treatment effects~\cite{carvalho2019assessing}.
See~\cite[Section 2]{carvalho2019assessing} for more information about this dataset.
In our experiments, we follow the protocol introduced in~\cite{carvalho2019assessing, lei2021conformal} to generate two synthetic potential outcomes and a synthetic PI variable. Specifically, we begin by scaling the data to have 0 mean and standard deviation of 1. We split the dataset into a training set and a validation set,  containing 80\% and 20\% samples from the entire data, respectively. Using these training and validation sets, we fit a neural network with one hidden layer of size 32 to predict $Y$ from $X$. The training parameters are as detailed in Section~\ref{sec:general_exp_setup}. This network function is denoted by $\hat{\mu}_0(\cdot)$. Similarly, we fit an XGBoost classifier to predict the original treatment variable $M$ given in the data from the feature vector $X$. We set the max\_depth and n\_estimators parameters to 2, 10, respectively. We then calibrate the estimated propensity score to have the same mean as the marginal treatment probability. This calibrated estimated propensity score is denoted by $\hat{e}(X_i)$. We generate a new treatment variable $M_i$, a synthetic PI variable $Z_i$, and a semi-synthetic target variable $Y_i(0)$ as follows.
\begin{equation}
\begin{split}
& Z_i \sim \mathcal{N}(0,{0.2}^2)\\
& E_i = \mathbbm{1} \{ Z_i \geq \text{Quantile}(0.9, Z) \text{ or } Z_i \leq \text{Quantile}(0.1, Z) \} \\
& M_i \sim Ber(\min(0.8, (1 + E_i)\hat{e}(X_i))) \\
& \tau_i = 0.228+0.05 \mathbbm{1} \{ X_{i,5} < 0.07)-0.05\mathbbm{1} \{X_{i,6} < -0.69\}-0.08\mathbbm{1} \{X_{i,1} \in \{1, 13, 14\}\} \\
& Y_i(0) = \hat{\mu}_0(X_i) + \tau_i +  (1 + E_i)Z_i.
\end{split}
\end{equation}

\subsection{Synthetic dataset details}\label{sec:syn_data}

In this section, we present the synthetic dataset used in the ablation study of the parameter $\beta$ in Appendix~\ref{sec:beta_ablation}.

The feature vectors are uniformly sampled as follows:
\begin{equation}
X_i \sim \text{Uni}(1,5)^{10},
\end{equation}
where $\text{Uni}(a,b)$ is a unifrom distribution in the range $(a,b)$. 
The PI is sampled as:
\begin{equation}
\begin{split}
E^1_i &\sim \mathcal{N}(0,1),\\
E^2_i &\sim \text{Uni}(-1,1),\\
E^3_i &\sim \mathcal{N}(0,1),\\
P_i &\sim \text{Pois}(\text{cos}(E_i^2 +0.1 )) *E^2_i , \\
Z_i &\sim P_i + 2E^3_i.
\end{split}
\end{equation}
Above, $\text{Pois}(\lambda)$ is a poisson distribution with parameter $\lambda$, and $\mathcal{N}(\mu,\sigma^2)$ is a normal distribution with mean $\mu$ and variance $\sigma^2$.
Finally, the label is defined as:
\begin{equation}
\begin{split}
\beta &\sim \text{Uni}(0,1) ^ 5\\
\beta &= \beta  / || \beta ||_1 \\
U_i &= \mathbbm{1}_{Z_i < -3} + 2*\mathbbm{1}_{-3\leq Z_i \leq 1} + 8*\mathbbm{1}_{ Z_i > 1} \\
E_i &\sim \mathcal{N}(0,1) \\
Y_i &= 0.3 X_i \beta + 0.8 Z_i + 0.2 + U_i E_i .
\end{split}
\end{equation}

\section{Experimental setup}\label{sec:experimental_setup}

\subsection{General setup}\label{sec:general_exp_setup}
In all experiments, except for scarce data experiments, we split the data into a training set (50\%), calibration (20\%), validation set (10\%) used for early stopping, and a test set (20\%) to evaluate performance. See Section~\ref{sec:scarce_data_exp_setup} for the specific details in the scrace data experiments. Then, we normalize the feature vectors and response variables to have a zero mean and unit variance. 
In experiments involving missing variables, we impute them with a linear model fitted on variables that are always observed from $X, Y, Z$. The linear model is trained on samples from the training and validation sets.
For datasets that are not originally corrupted, the corruption probability is determined as follows. First, for IHDP and Twins datasets, we fit a linear model on the entire data to predict $Y$ given $X,Z$, and use its predictions as an initial value. For other datasets, we take $Z$ as the initial value. We take the initial values, set the maximal value as the 85\% quantile, and divide by the 90\% quantile of the initial values. Then, we zero the lowest 75\% values. Then, we raise all values to the exponent that achieves an average of 0.20. The result is the corruption probability. Therefore, by definition, the average corruption probability is 20\%.
In all experiments, we fit a base learning model and wrap its output with a calibration scheme. 
In regression tasks, the model is trained to learn the 5\% and 95\% conditional quantiles of $Y\mid X$.
In Table~\ref{tab:models_used} we summarize the model we used for each dataset for both tasks.
For neural network models, we used an Adam optimizer~\cite{adam} with 1e-4 learning rate, and batch size of 128. The network is composed of hidden layers of sizes: 32, 64, 64, 32, 0.1 dropout, and leaky relu as an activation function. For xgboost and random forest models, we used 100 estimators. We train the networks for 1000 epochs, but stop the training earlier if the validation loss does not improve for 200 epochs, and in this case, the model with the lowest validation loss is chosen. In our experiments, we use the xgboost package~\cite{xgboost} and the scikit-learn package~\cite{scikit-learn} from random forest. The neural networks were implemented with the pytorch package~\cite{pytorch}.
Regarding the hyper-parameters of the calibration schemes, in all experiments, we set the parameter $\beta$ of \ttmethod to $\beta=0.005$ and the parameter $\beta$ of \ttbaseline to $\beta=0.05$.

\begin{table}[htbp]
\caption{The learning models used for each dataset.}
\setstretch{1.5}
  \centering
\scalebox{0.8}{
\centering
\begin{tabular}{ccc}
    \toprule[1.1pt]
    \textbf{Dataset Name} & \textbf{Base learning model} & \textbf{Corruption probability estimator} \\
    \midrule
    \textbf{Facebook1~\cite{facebook_data}}  & Neural network & Neural network \\
    \textbf{Facebook2~\cite{facebook_data}}  & Neural network & Neural network \\
    
    \textbf{Bio~\cite{bio_data}}  &  Neural network & Neural network \\
    
    \textbf{House~\cite{house_data}}  &  Neural network & Neural network \\

    \textbf{Meps19~\cite{meps19_data}}  &  Neural network & Neural network\\
    \textbf{Blog~\cite{blog_data}}  &  Neural network & Neural network \\
    
    \textbf{IHDP~\cite{hill2011bayesian}}  &  XGBoost & XGBoost \\
    \textbf{Twins~\cite{twins, louizos2017causal}}  &  XGBoost & XGBoost \\
    \textbf{NSLM~\cite{yeager2019national}}  &  XGBoost & XGBoost \\
    \textbf{CIFAR-10N~\cite{wei2022learning}}  &  Resnet-18 & Resnet-18 when using $x$ or Random forest when using only $z$\\
    \textbf{CIFAR-10C~\cite{hendrycks2019robustness}}  &  Resnet-18 & Resnet-18 when using $x$ or Random forest when using only $z$ \\

    \bottomrule[1.1pt]
    \end{tabular}%
}
 
    \label{tab:models_used}

\end{table}

\subsection{Experimental setup for scarce data experiment}\label{sec:scarce_data_exp_setup}

In this section, we detail the experimental setup employed in the experiment in Section~\ref{sec:causal_inference_exp}. In this experiment, we split the data into a training set (30\%), a validation set (10\%), and a test set (60\%). Furthermore, the nominal coverage level was set to $1-2\alpha=90\%$. For each training sample $i$, we fit an XGBoost quantile regression model using the entire training set, except for the $i$-th sample, to obtain a leave-one-out model. Then, we calibrated the model outputs with each calibration scheme, using the leave-one-out models.

\subsection{Experimental setup for CIFAR-10N and CIFAR-10C experiments}
We use a ResNet-18 network~\cite{he2016deep} as a base model. If we used both $X$ and $Z$ as an input to the model, we forward $X$ through the CNN, and then through a linear layer with an output dimension of 16. Then, we concatenate this result with $Z$ and forward it through a network with hidden layers with sizes 32, 64, 64, 32. When fitting the CNN, we train the model with batches of size 32 for 50 epochs, and choose the model with the lowest validation loss. Additionally, we apply a random augment transform to improve performance.
For the \texttt{Two-Staged} and \texttt{Naive WCP} calibration schemes, we use only $X$ to estimate the corruption probability, and for the infeasible \texttt{WCP} and \texttt{PCP} we use only $Z$ to estimate it.

\subsection{Machine’s spec}\label{sec:machine_spec}

The resources used for the experiments are:
\begin{itemize}
    \item \textbf{CPU}: Intel(R) Xeon(R) E5-2650 v4.
    \item \textbf{GPU}: Nvidia titanx, 1080ti, 2080ti.
    \item \textbf{OS}: Ubuntu 18.04.
\end{itemize}

\subsection{Computational resources}\label{sec:compute_resources}
The computation efficiency of the proposed algorithm is dominated by the efficiency of the base learning model. The reason is that our calibration scheme only requires one pass over all calibration samples, as explained in Section~\ref{sec:efficient_pcp}.

\newcommand{\smallerplotwidth}{0.745}
\newcommand{\tabularsmallerplotwidth}{0.77}

\section{Additional experiments}\label{sec:additional_experiments}
In this section, we provide additional experiments and supply additional results. In all experiments, whenever possible, we display the performance of an uncalibrated model, \texttt{Naive CP} which uses clean and noisy samples, \texttt{Naive CP} which uses only the clean samples, \texttt{Weighted CP} with the following weights: (i) estimated using only from $X$, (ii) estimated using $Z$, and (iii) oracle weights as a function of $Z$. We further employ the \texttt{Two-Staged} algorithm and \texttt{PCP} with these three options for the weight function. It is important to note that it is not applicable in practice to apply \texttt{Weighted CP} or \texttt{Two-Staged CP} with weights estimated from $Z$, since they require the test privileged information $Z^\text{test}$. We conduct these experiments for demonstration purposes. Nevertheless, \texttt{PCP} is applicable with any of these weights, as it does not use $Z^\text{test}$.

\subsection{Causal inference tasks experiments}\label{sec:additional_causal_inference_exp}
We follow the protocol in Section~\ref{sec:causal_inference_exp}, when our goal is to estimate the uncertainty of unknown response under no treatment $Y_{n+1}(0)$. As explained in Section~\ref{sec:causal_inference_exp}, valid prediction intervals for $Y_{n+1}(0), Y_{n+1}(1)$ can be combined to construct a reliable interval for the individual treatment effect (ITE), which is valuable in many applications~\cite{brand2010benefits, morgan2001counterfactuals, xie2012estimating, florens2008identification}.

We begin with the semi-synthetic IHDP~\cite{hill2011bayesian} dataset, in which the objective is to analyze the effect of specialist home visits on future cognitive test scores. See Section~\ref{sec:ihdp} for more information about this dataset.
Figure~\ref{fig:full_ihdp} displays the coverage rate and interval lengths of the prediction intervals constructed by the calibration schemes. This figure shows that the naive techniques: \texttt{Uncalibrated}, naive jackknife+ (\texttt{Naive CP}), and naive \texttt{JAW}, which uses weights estimated only from $X$, do not achieve the desired coverage level. This is not a surprise, as the naive approaches do not hold statistical guarantees. In contrast, \texttt{JAW} which uses $Z^\text{test}$, and the proposed \ttmethod construct uncertainty intervals that achieve the desired coverage level. This is also anticipated since these methods are supported by theoretical guarantees. Nevertheless, the versions of \texttt{JAW} that achieve a valid coverage rate are infeasible in practice, as they require $Z^\text{test}$, which is unknown in our setting. In conclusion, Figure~\ref{fig:full_ihdp} suggests that \ttmethod is the only applicable calibration scheme that achieves a valid coverage level in this experiment.

Next, we turn to the Twins dataset~\cite{twins} which contains records about newborn twin babies, and the response variable is the mortality indicator. In Section~\ref{sec:twins} we provide additional information about this dataset. Figure~\ref{fig:full_twins} presents the performance of each calibration scheme on the Twins dataset. This figure indicates that \ttnaive tends to undercover the response variable while the two-staged baseline tends to overcover it. In contrast, observe that \ttwcp and \ttmethod achieve the desired $90\%$ coverage rate when employed with oracle corruption probabilities, or with probabilities estimated from $Z$. However, we remark that these versions of \ttwcp cannot be applied in practice, since they require $Z^\text{test}$, which is unavailable in our setup. The only practical version of \ttwcp is with corruption probabilities estimated from $X$, which does not achieve the nominal coverage level. Notice, however, that the proposed \ttmethod can be applied with all versions of corruption probabilities, as it does not require access to $Z^\text{test}$. To conclude, \ttmethod and \ttbaseline are the only calibration schemes that are both applicable in practice and guaranteed to generate uncertainty sets with a valid coverage rate. 
In addition, Figure~\ref{fig:full_twins} reveals that \ttmethod achieves a comparable set size to the infeasible \ttwcp, indicating that \ttmethod does not lose much statistical efficiency by not using the test PI $Z^\text{test}$.

Lastly, we consider the semi-synthetic National Study of Learning Mindsets (NSLM) dataset~\cite{yeager2019national}, which examines behavioral interventions. See \cite[Section 2]{carvalho2019assessing} for information on the dataset, and Appendix~\ref{sec:nslm} for our adaptation for this dataset. The performance of each calibration scheme is provided in Figure~\ref{fig:full_nslm}. This figure shows the same trend: the naive methods undercover the response variable while the proposed \ttmethod constructs valid uncertainty sets.

\begin{figure}[h]
         \includegraphics[width=0.98\textwidth]{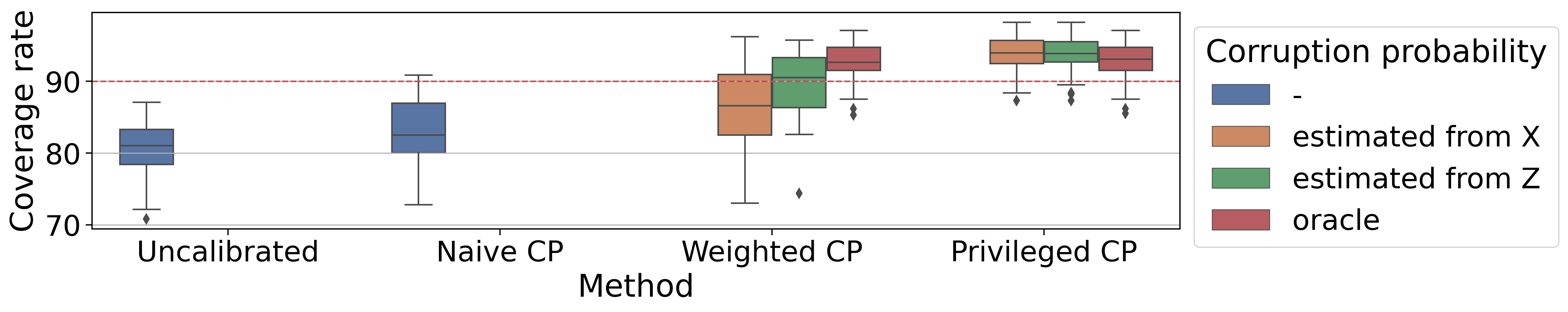}\\
       \includegraphics[width=\smallerplotwidth\textwidth]{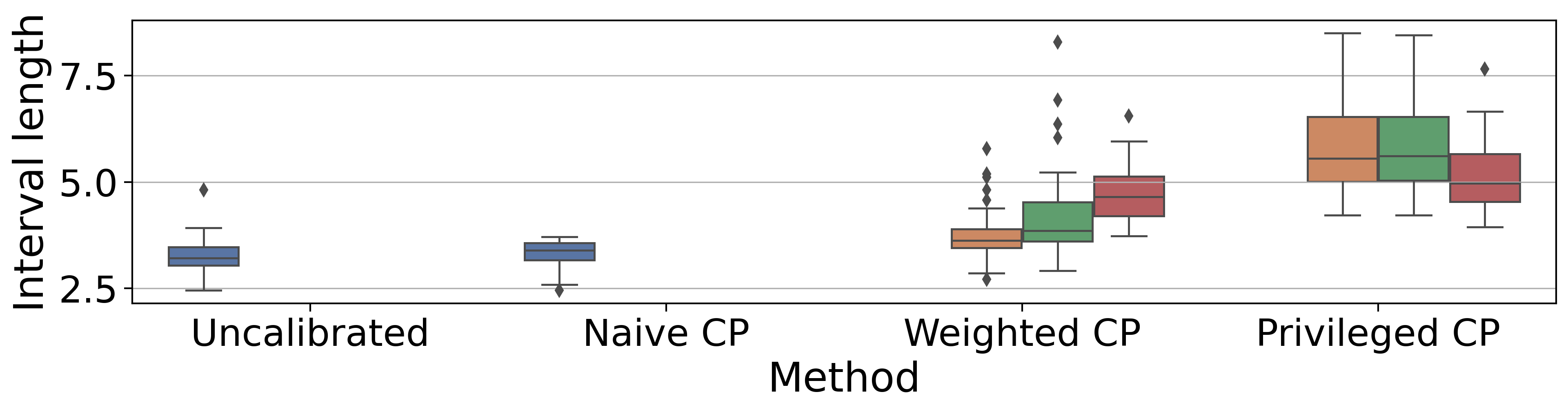}
     \caption{\textbf{IHDP dataset experiment.} The coverage rate and average interval length achieved by an uncalibrated quantile regression (\texttt{Uncalibrated}), a naive jackknife+ (\texttt{Naive CP}), \texttt{JAW} (\texttt{Weighted CP}) which estimates the corruption probability from either $X$ (orange), $Z$ (green), or uses the oracle probabilities (red), and the proposed method (\texttt{Privileged CP}) with the three options for the corruption probabilities. All methods are applied to attain a coverage rate at level $1 - 2\alpha = 90\%$. The metrics are evaluated over 50 random data splits.}
\label{fig:full_ihdp}%
\end{figure}%

\begin{figure}[h]
         \includegraphics[width=0.98\textwidth]{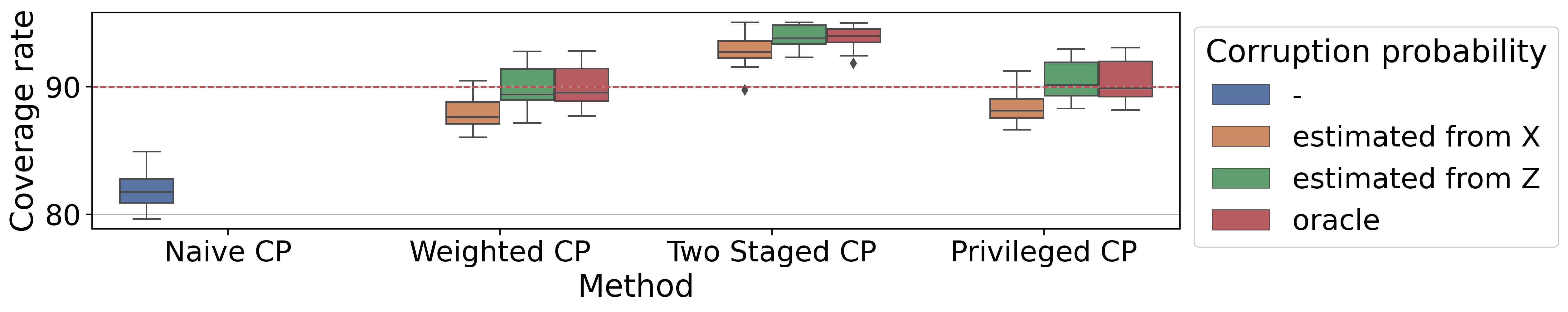}\\
         \includegraphics[width=\smallerplotwidth\textwidth]{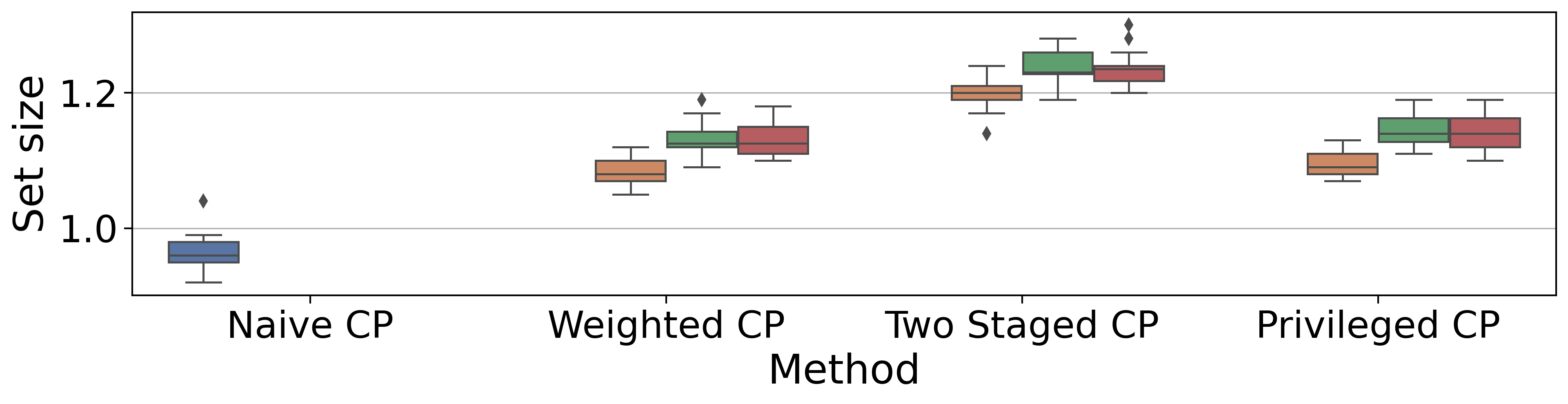}
     \caption{\textbf{Twins dataset experiment.} The coverage rate and average set size achieved by naive conformal prediction (\texttt{Naive CP}), \texttt{Weighted CP} which estimates the corruption probability from either $X$ (orange), $Z$ (green), or uses the oracle probabilities (red), the baseline \texttt{Two Staged CP}, and the proposed method (\texttt{Privileged CP}) with the three options for the corruption probabilities. All methods are applied to attain a coverage rate at level $1 - \alpha = 90\%$. The metrics are evaluated over 20 random data splits.}
\label{fig:full_twins}%
\end{figure}%

\begin{figure}[h]
         \includegraphics[width=0.98\textwidth]{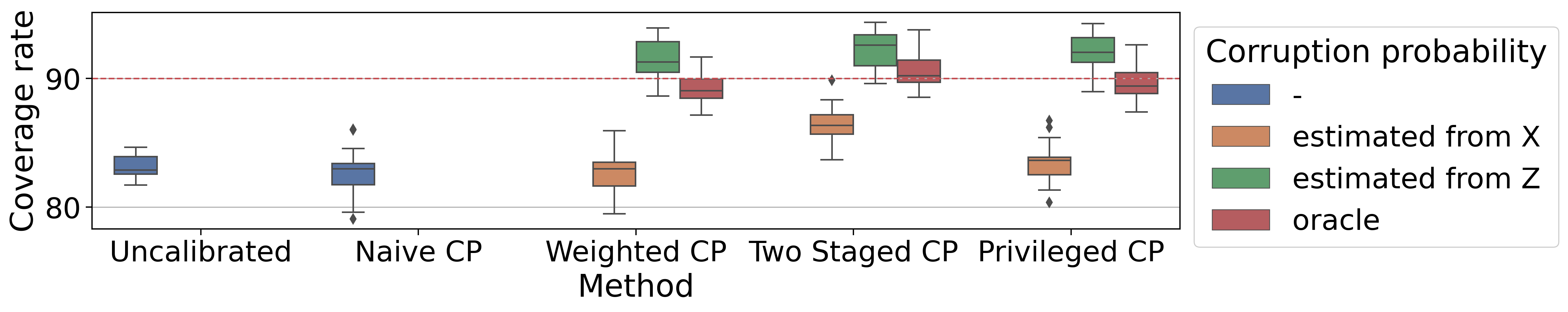}\\
         \includegraphics[width=\smallerplotwidth\textwidth]{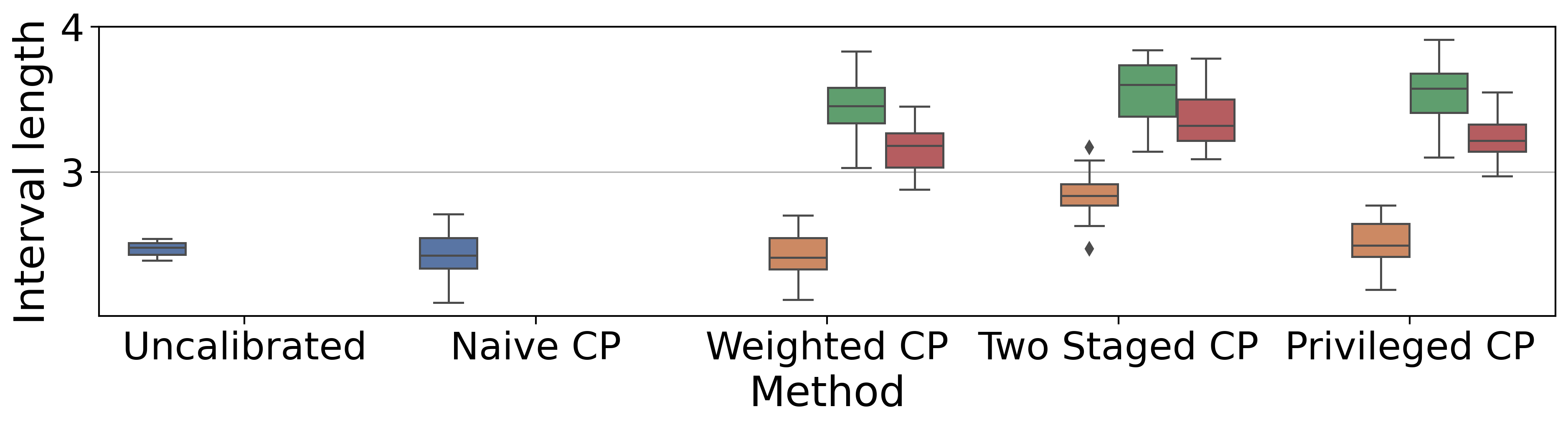}
     \caption{\textbf{NSLM dataset experiment.} The coverage rate and average interval length achieved by an uncalibrated quantile regression (\texttt{Uncalibrated}), naive conformal prediction (\texttt{Naive CP}), \texttt{Weighted CP} which estimates the corruption probability from either $X$ (orange), $Z$ (green), or uses the oracle probabilities (red), the baseline \texttt{Two Staged CP} and the proposed method (\texttt{Privileged CP}) with the three options for the corruption probabilities. All methods are applied to attain a coverage rate at level $1 - \alpha = 90\%$. The metrics are evaluated over 20 random data splits.}
\label{fig:full_nslm}%
\end{figure}%

\subsection{Noisy response variable: CIFAR-10N dataset}
In this section, we provide additional results from the CIFAR-10N experiment conducted in Section~\ref{sec:cifar10_exp}. In Figure~\ref{fig:full_cifar10} we display the performance of naive \texttt{CP}, and \ttwcp, \ttbaseline, \ttmethod, applied either with corruption probabilities estimated from $X$ or from $Z$. This figure shows that \ttwcp and \ttmethod do not achieve the desired coverage rate when the corruption probabilities are estimated from $X$. In contrast, \ttwcp and \ttmethod attain a valid coverage rate when the corruption probabilities are estimated from $Z$. This is not a surprise, as it is guaranteed by~\cite[Theorem 1]{tibshirani2019conformal} and by Theorem~\ref{thm:validity}. Lastly, this figure shows that \ttbaseline constructs uncertainty sets that are too conservative. This is also anticipated since the prediction sets of \ttbaseline encapsulate the uncertainty in both $Z$ and $Y$. Importantly, we note that \ttwcp cannot be used with corruption probabilities estimated from $Z$ (\texttt{Weighted CP} in color green in Figure~\ref{fig:full_cifar10}) since it requires access to $Z^\text{test}$, which is unknown in our setup.
\begin{figure}[h]
         \includegraphics[width=0.98\textwidth]{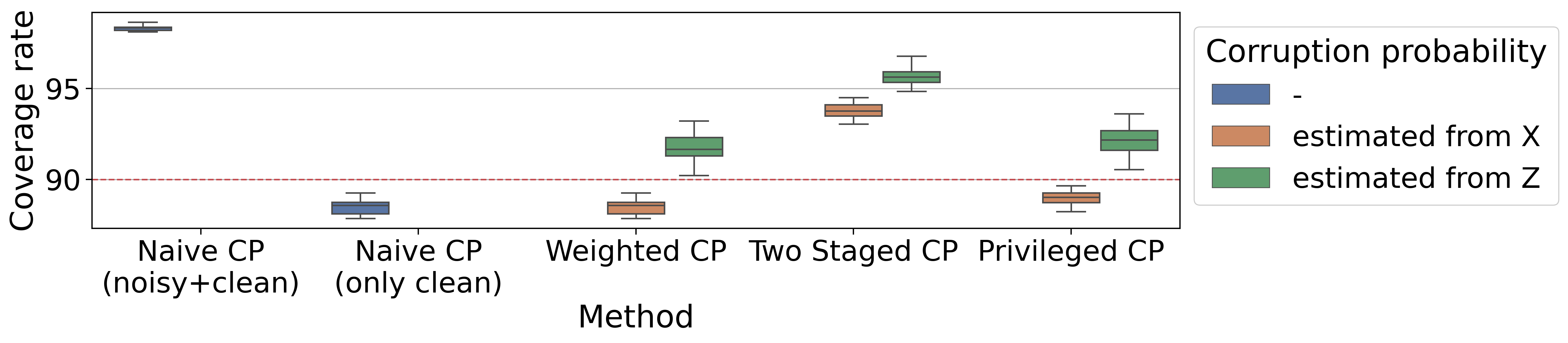}\\
         \includegraphics[width=\smallerplotwidth\textwidth]{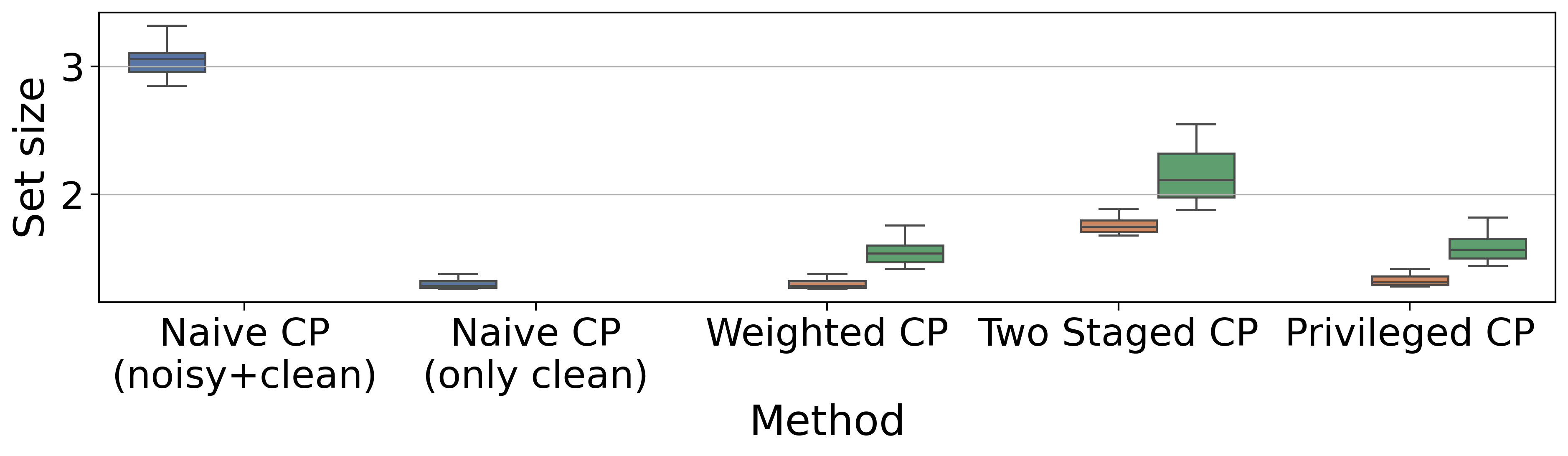}
     \caption{\textbf{Noisy response experiment: CIFAR-10N dataset.}   The coverage rate and average set size length achieved by, naive conformal prediction (\texttt{Naive CP}), using either all calibration samples (noisy + clean) or only the uncorrupted ones (only clean), \texttt{Weighted CP} which estimates the corruption probability from either $X$ (orange), $Z$ (green), the baseline \texttt{Two Staged CP} and the proposed method (\texttt{Privileged CP}) with the two options for the corruption probabilities. All methods are applied to attain a coverage rate at level $1 - \alpha = 90\%$. The metrics are evaluated over 20 random data splits.}
\label{fig:full_cifar10}%
\end{figure}%

\subsection{Noisy features and responses: CIFAR-10C dataset}\label{sec:cifar10c_exp}

We demonstrate our proposed method on the CIFAR-10C~\cite{hendrycks2019robustness} dataset, which contains pairs of a noisy image with a noisy label. In short, some of the images are corrupted, e.g., with a defocus blur, and the responses of corrupted images are artificially corrupted. Nevertheless, we note that the test images are corrupted as well, in the sense that $X(0)=X(1)$ for all samples. In this experiment, we examine two options: dispersive noise, which randomly flips the label into a different one with uniform probability, and an adversarial noise, which deterministically flips the label into a different one. In Section~\ref{sec:cifar10c} we provide the full details about this dataset. Lastly, in the following experiments, we set the desired coverage rate to 80\% since the model outputs extremely uncertain predictions for 10\% of the samples.  

Figure~\ref{fig:full_cifar10c} displays the coverage rates and set sizes achieved by each calibration scheme, on the CIFAR-10C dataset corrupted with dispersive noise. This figure indicates that by considering the noisy labels, \texttt{Naive CP} achieves a conservative coverage rate, which is consistent with the work of~\cite{label_noise, sesia2023adaptive}. Nevertheless, when applied without the noisy labels, \texttt{Naive CP} tends to undercover the correct response, similarly to the two-staged baseline \ttbaseline. Observe also that the feasible version of \ttwcp, which uses weights estimated only from $X$ achieves under coverage. In striking contrast, our proposed \ttmethod covers the response at the desired coverage rate. This trend also applies for the CIFAR-10C dataset corrupted with adversarial noise, as presented in Figure~\ref{fig:full_cifar10c_adversarial}. This figure illustrates that all feasible baselines suffer from undercoverage, except for the two-staged baseline which achieves an over-conservative coverage rate. In contrast, our proposal achieves the nominal coverage level.

\begin{figure}[h]
         \includegraphics[width=0.98\textwidth]{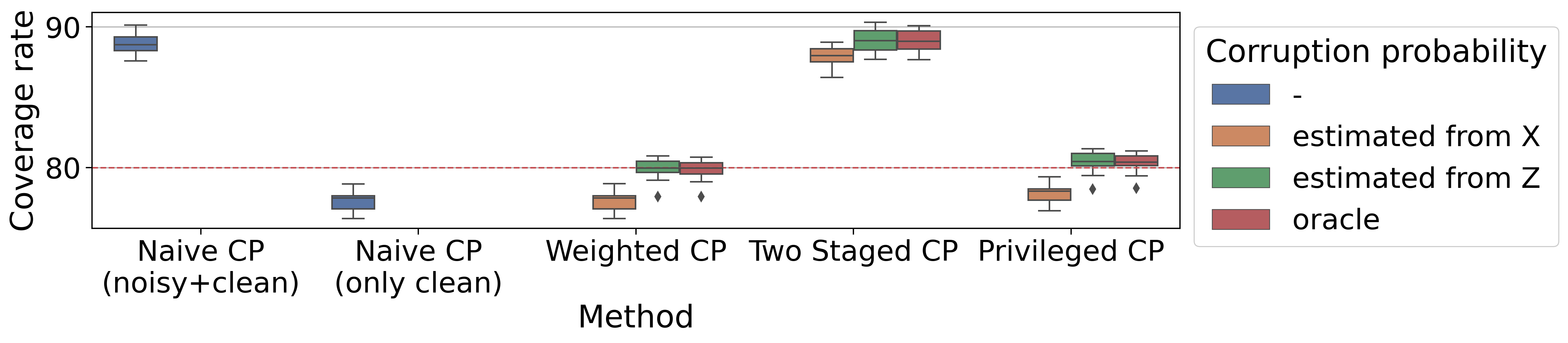}\\
         \includegraphics[width=\smallerplotwidth\textwidth]{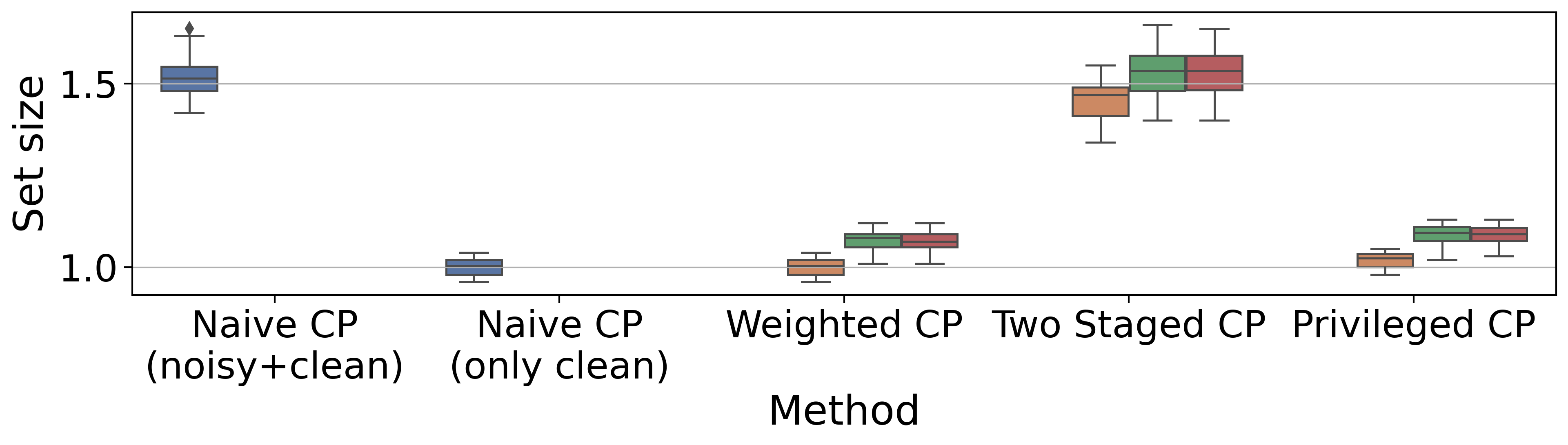}
     \caption{\textbf{Dispersive label-noise experiment: CIFAR-10C dataset.} The coverage rate and average set size length achieved by, naive conformal prediction (\texttt{Naive CP}), using either all calibration samples (noisy + clean) or only the uncorrupted ones (only clean), \texttt{Weighted CP} which estimates the corruption probability from either $X$ (orange), $Z$ (green), or uses the oracle probabilities (red), the baseline \texttt{Two Staged CP} and the proposed method (\texttt{Privileged CP}) with the three options for the corruption probabilities. All methods are applied to attain a coverage rate at level $1 - \alpha = 90\%$. The metrics are evaluated over 20 random data splits.}
\label{fig:full_cifar10c}%
\end{figure}%

\begin{figure}[h]
         \includegraphics[width=0.98\textwidth]{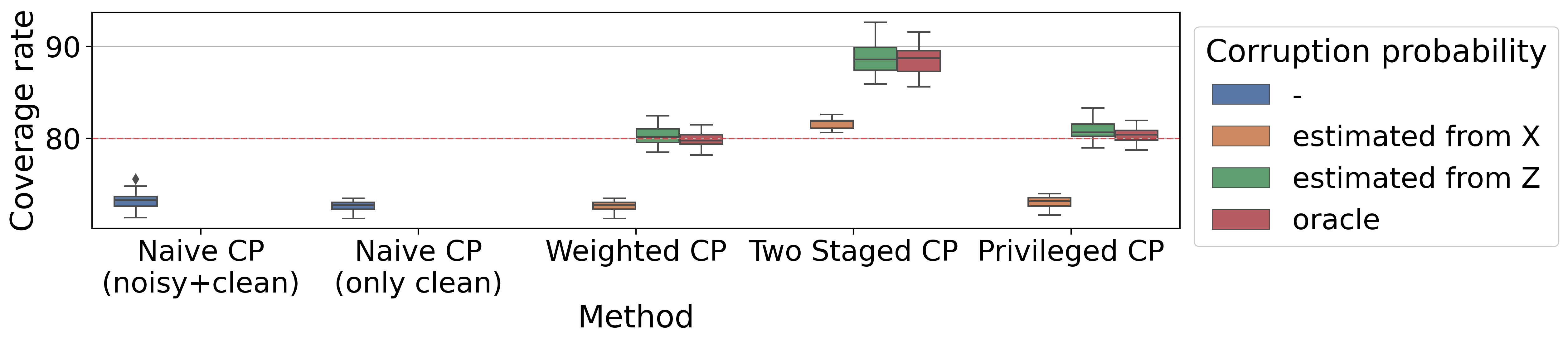}\\
         \includegraphics[width=\smallerplotwidth\textwidth]{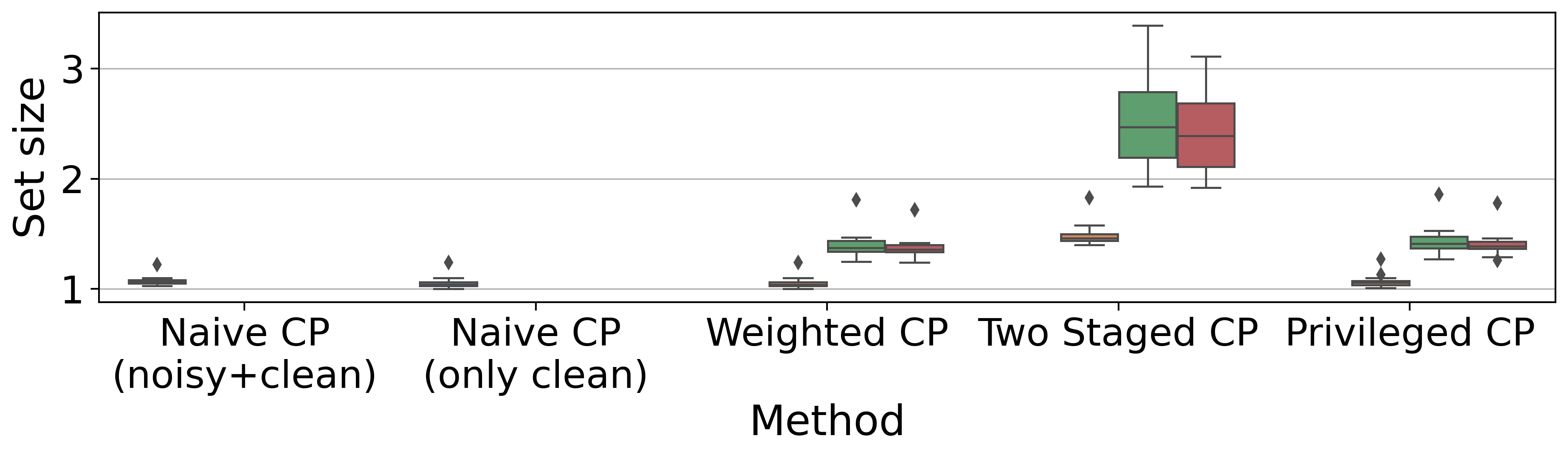}
     \caption{\textbf{Contractive label-noise experiment: CIFAR-10C dataset.} The coverage rate and average set size length achieved by, naive conformal prediction (\texttt{Naive CP}), using either all calibration samples (noisy + clean) or only the uncorrupted ones (only clean), \texttt{Weighted CP} which estimates the corruption probability from either $X$ (orange), $Z$ (green), or uses the oracle probabilities (red), the baseline \texttt{Two Staged CP} and the proposed method (\texttt{Privileged CP}) with the three options for the corruption probabilities. All methods are applied to attain a coverage rate at level $1 - \alpha = 90\%$. The metrics are evaluated over 20 random data splits.}
\label{fig:full_cifar10c_adversarial}%
\end{figure}%

\subsection{Tabular data experiments}
In this section, we conduct a series of experiments with the datasets: facebook1, facebook2, bio, house, meps19, and blog. In each experiment, we apply a different corruption to either the covariates or the response variable.
Additionally, for the \ttbaseline, \texttt{Infeasible WCP} and the proposed \ttmethod, we use the oracle corruption probabilities when computing the weights $w(z)$. However, the \texttt{Naive WCP} method uses corruption probabilities estimated only from $x$ using a neural network classifier. Section~\ref{sec:real_dataset_details} provides information about these datasets and Section~\ref{sec:general_exp_setup} describes the experimental setup.

\subsubsection{Noisy response}\label{sec:tabular_noisy_response_exp}
We examine two artificial noise functions corrupting the response. The first noise is contractive, which reduces the variability by averaging the response with its mean: $Y(1) = \frac{1}{2}(Y(0) + \mathbb{E}[Y(0)])$. The second is dispersive, which adds to the response variable a normally distributed random noise with mean 0 and standard deviation $5$ times the standard deviation of $Y(0)$. 

We begin with the dispersive noise experiment. Figure~\ref{fig:dispersive_noised_y} shows the performance of each calibration scheme in this setup. This figure indicates that the uncalibrated model and naive \texttt{CP} construct too conservative intervals. This result is consistent with the findings of~\cite{label_noise, sesia2023adaptive} which suggest that naively applying \texttt{CP} method on data with dispersive label-noise leads to conservative uncertainty sets. Nevertheless, \texttt{Naive WCP} achieves a coverage rate that is too low, possibly because the corruption probability estimates are not sufficiently accurate. Also, the two-staged baseline tends to output conservative intervals, as anticipated. Lastly, \texttt{Infeasible WCP} and the proposed \ttmethod consistently achieve the desired coverage rate for all datasets. This is not a surprise, as a theoretical guarantee supports this result.

We now turn to the contractive noise experiment, and report in Figure~\ref{fig:contractive_noised_y} the coverage rate and interval lengths of the prediction intervals constructed by each calibration scheme. This figure shows that the uncalibrated model and naive \texttt{CP} generate intervals that undercover the correct outcome. This is anticipated, as the contractive noise confuses these techniques to `think' that the underlying uncertainty is small, leading them to produce too small prediction intervals. In addition, the baselines \texttt{Naive WCP} and \texttt{Two Staged CP} construct too wide intervals, that tend to overcover the response. In contrast, \texttt{Infeasible WCP} and the proposed \ttmethod achieve the desired coverage rate.

\begin{figure}[h]
         \includegraphics[width=0.98\textwidth]{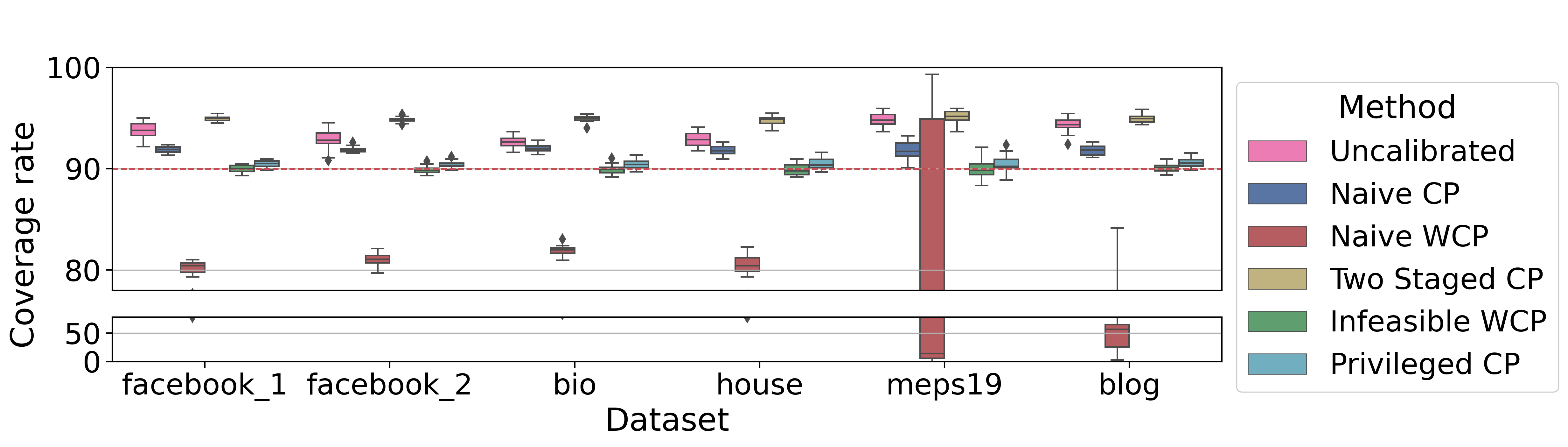}\\
         \includegraphics[width=\tabularsmallerplotwidth\textwidth]{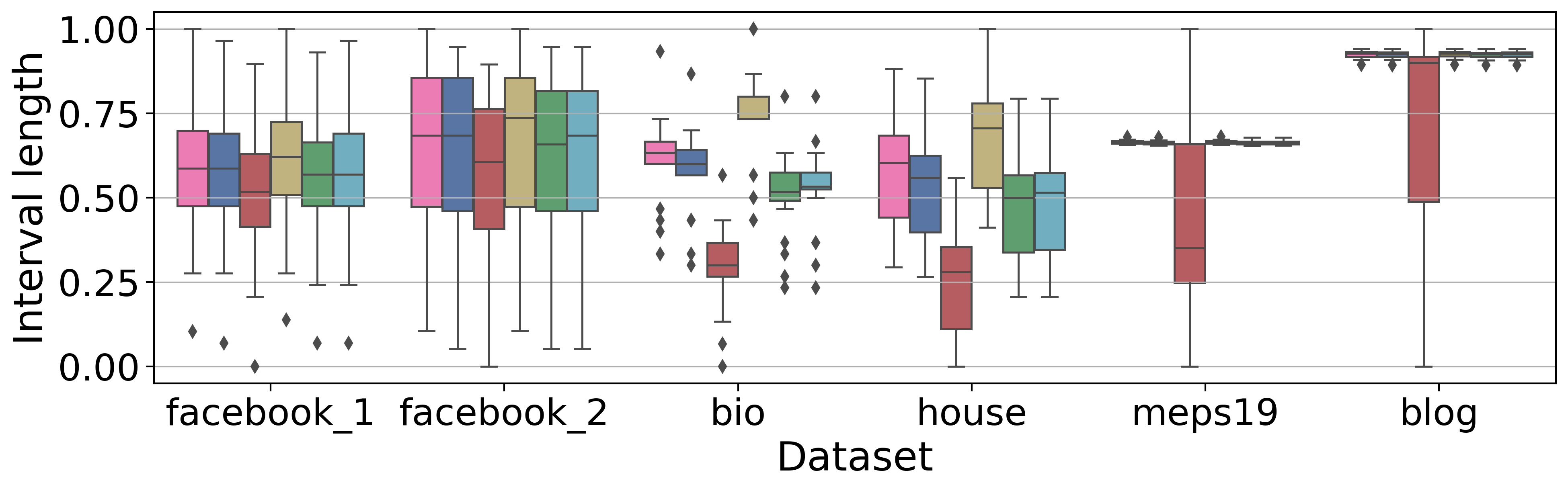}
     \caption{\textbf{Dispersive label-noise experiment: tabular datasets.} The coverage rate and average interval length achieved by an uncalibrated quantile regression (\texttt{Uncalibrated}), naive conformal prediction which uses both clean and noisy samples (\texttt{Naive CP}), \ttwcp which estimates the corruption probability from $X$ (\texttt{Naive WCP}), the baseline \texttt{Two Staged CP}, \ttwcp which uses the oracle corruption probabilities (\texttt{Infeasible WCP}), and the proposed method (\texttt{Privileged CP}) that uses to the oracle corruption probabilities. All methods are applied to attain a coverage rate at level $1 - \alpha = 90\%$. The metrics are evaluated over 20 random data splits.}
\label{fig:dispersive_noised_y}%
\end{figure}%

\begin{figure}[h]
         \includegraphics[width=0.98\textwidth]{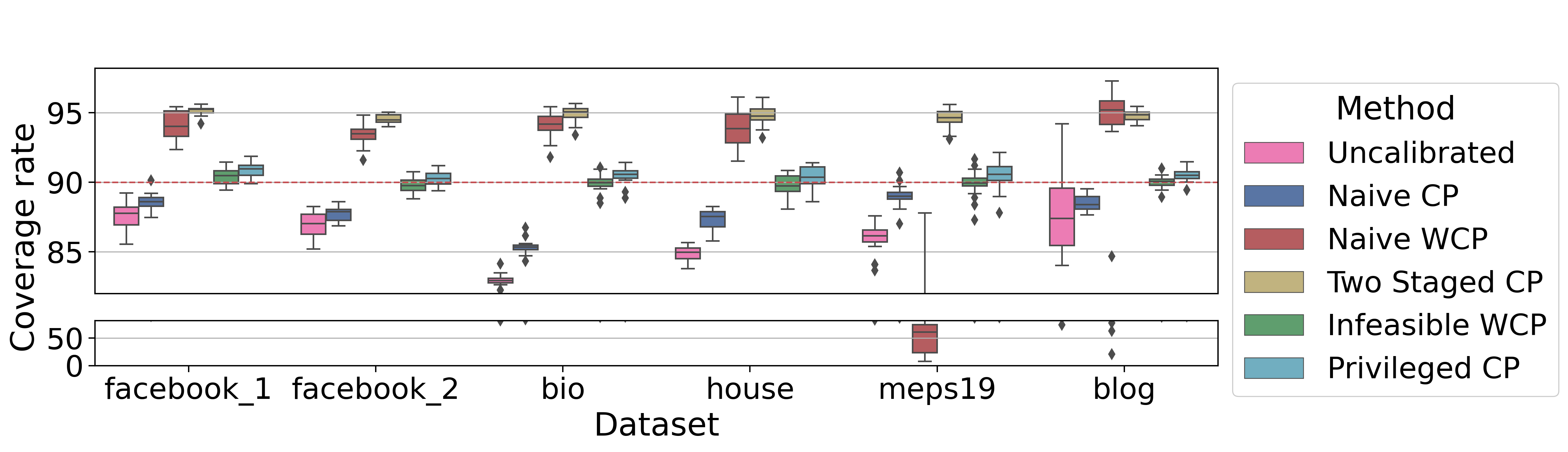}\\
         \includegraphics[width=\tabularsmallerplotwidth\textwidth]{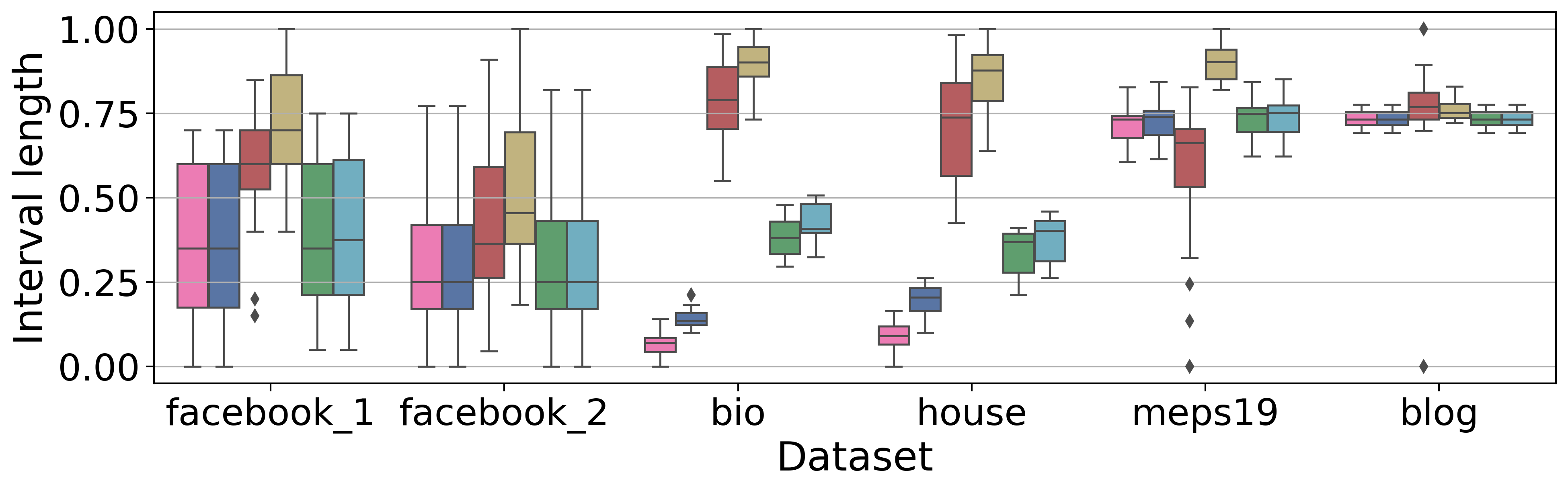}
     \caption{\textbf{Contractive label-noise experiment: tabular datasets.} The coverage rate and average interval length achieved by an uncalibrated quantile regression (\texttt{Uncalibrated}), naive conformal prediction which uses both clean and noisy samples (\texttt{Naive CP}), \ttwcp which estimates the corruption probability from $X$ (\texttt{Naive WCP}), the baseline \texttt{Two Staged CP}, \ttwcp which uses the oracle corruption probabilities (\texttt{Infeasible WCP}), and the proposed method (\texttt{Privileged CP}) that uses the oracle corruption probabilities as well. All methods are applied to attain a coverage rate at level $1 - \alpha = 90\%$. The metrics are evaluated over 20 random data splits.}
\label{fig:contractive_noised_y}%
\end{figure}%


\subsubsection{Missing features}\label{sec:missing_features_exp}

In this section, we study the setting where the entries in the corrupted feature vector $X(1)$ are missing. Specifically, we artificially delete 20\% of the features with the highest correlation to $Y_i$ from $X_i(0)$ to obtain $X_i(1)$. The corruption indicator $M_i$ is defined similarly to other experiments, as explained in Section~\ref{sec:general_exp_setup}. Figure~\ref{fig:missing_x} shows the performance of each calibration scheme. This figure indicates that the uncalibrated model and naive \texttt{CP} tend to undercover the response variable. Also, the \texttt{Naive WCP} produces intervals with large variability. This behavior probably results from inaccurate estimates of the corruption probability $P_{M\mid X}$, as its oracle counterpart, \texttt{Infeasible WCP}, which uses the true corruption probabilities, precisely achieves the nominal coverage level. Furthermore, the baseline \texttt{Two Staged CP} generates too wide intervals that tend to overcover the response. In contrast, the proposed \ttmethod consistently achieves the desired coverage rate $1-\alpha=90\%$. 

\begin{figure}[ht]
         \includegraphics[width=0.98\textwidth]{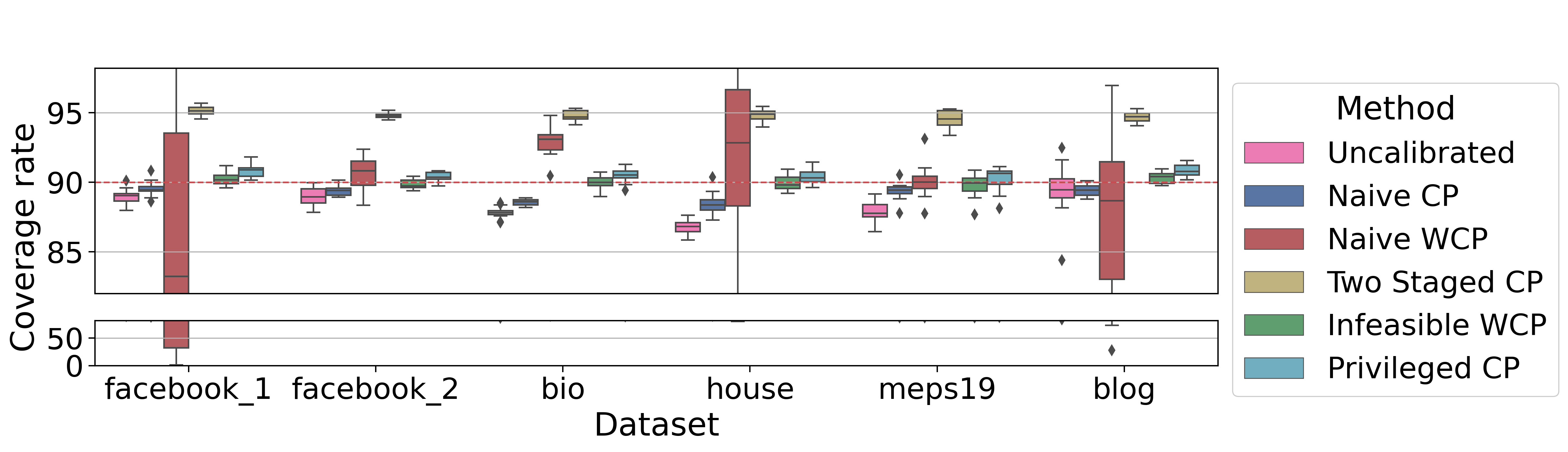}\\
         \includegraphics[width=\tabularsmallerplotwidth\textwidth]{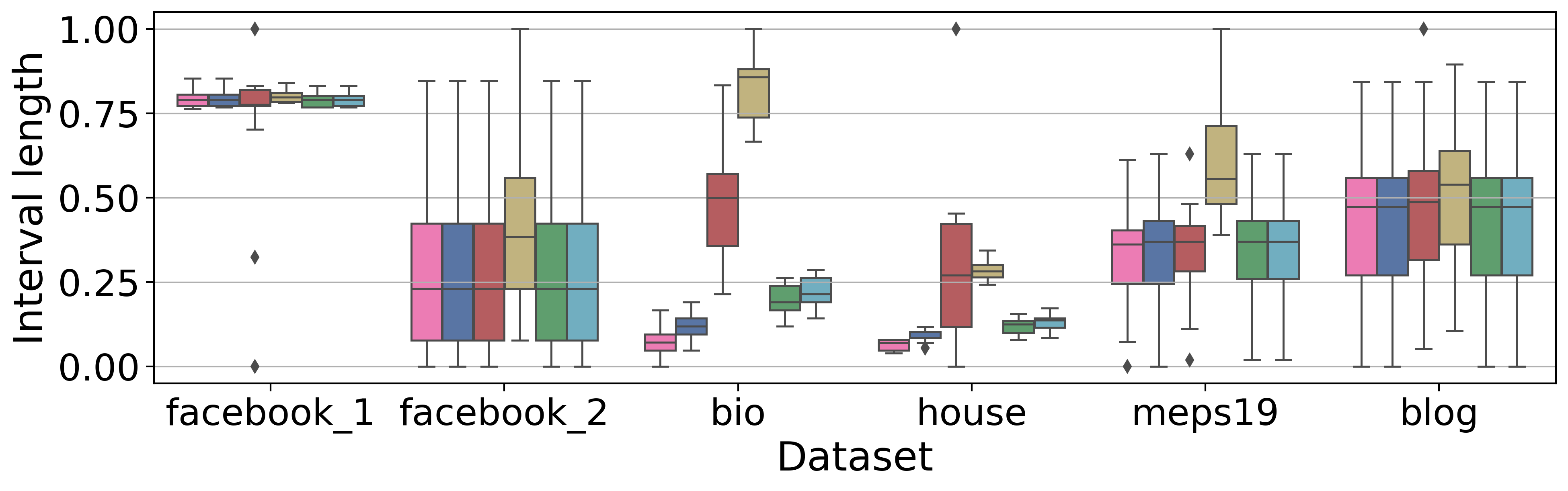}
     \caption{\textbf{Missing features experiment: tabular datasets.} The coverage rate and average interval length achieved by an uncalibrated quantile regression (\texttt{Uncalibrated}), naive conformal prediction which uses both clean and noisy samples (\texttt{Naive CP}), \ttwcp which estimates the corruption probability from $X$ (\texttt{Naive WCP}), the baseline \texttt{Two Staged CP}, \ttwcp which uses the oracle corruption probabilities (\texttt{Infeasible WCP}), and the proposed method (\texttt{Privileged CP}) that uses to the oracle corruption probabilities. All methods are applied to attain a coverage rate at level $1 - \alpha = 90\%$. The metrics are evaluated over 20 random data splits.}
\label{fig:missing_x}%
\end{figure}%



\subsection{Ablation study of the effect of \texorpdfstring{$\beta$}{b}}\label{sec:beta_ablation}
This section studies the effect of the parameter $\beta$ on the prediction sets constructed by \ttmethod. We apply \ttmethod on a synthetic data, introduced in Appendix~\ref{sec:syn_data} with different values of $\beta\in(0,\alpha)$. We follow the experimental protocol described in Appendix~\ref{sec:general_exp_setup}, and report the coverage rate and average length achieved by \ttmethod in Figure~\ref{fig:beta_ablation}. This figure indicates that the smallest intervals are achieved for $\beta$ that is close to 0, and the interval sizes are an increasing function of $\beta$. Yet, it is important to understand that different results could be obtained for different datasets. Therefore, we recommend choosing $\beta$ using a validation set, with a grid of values for $\beta$ in $(0,\alpha)$.

\begin{figure}[h]
  \centering
         \includegraphics[width=0.9\textwidth]{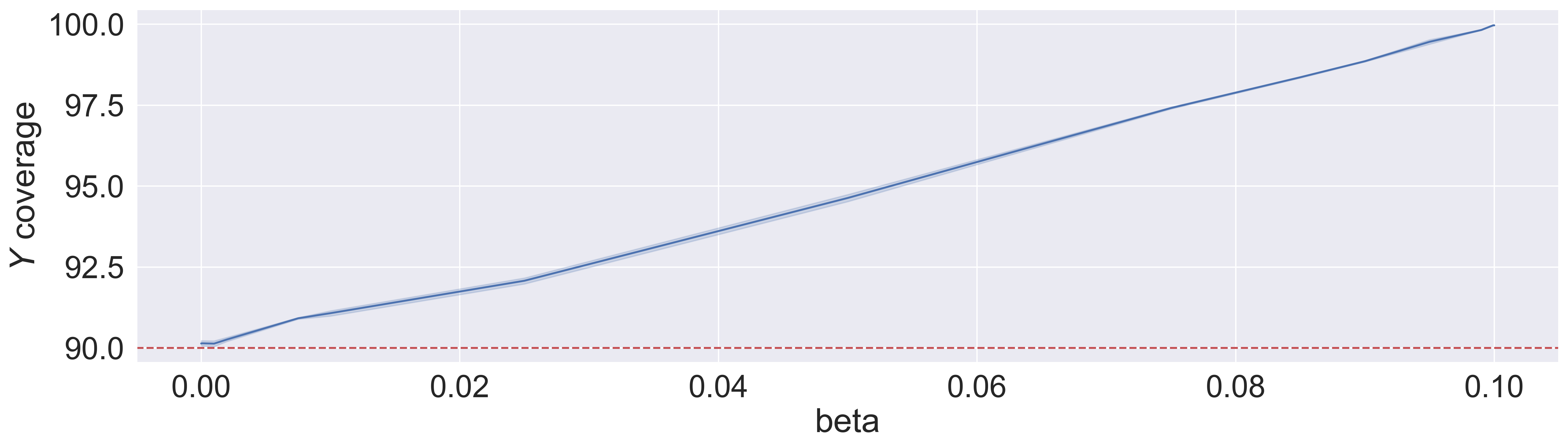}\\
         
         \includegraphics[width=0.9\textwidth]{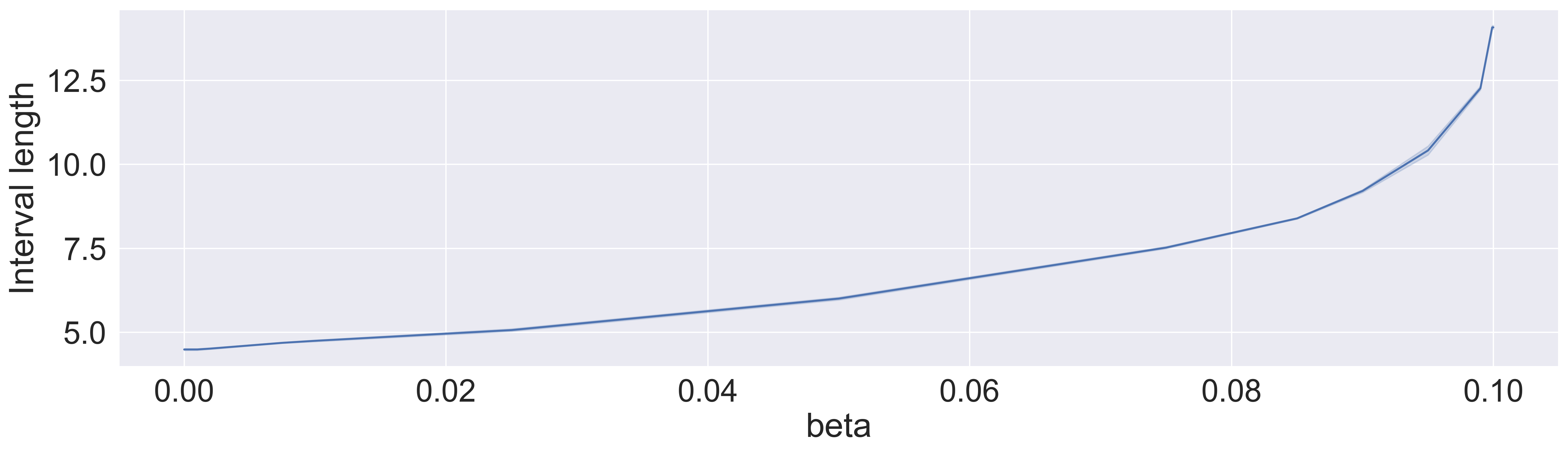}
     \caption{\textbf{Ablation study of $\beta$.} The coverage rate and average interval length achieved by \ttmethod applied to attain a coverage rate at level $1 - \alpha = 90\%$.
    The metrics are evaluated over 20 random data splits.}
\label{fig:beta_ablation}%
\end{figure}%

\end{document}